\newcommand*\samethanks[1][\value{footnote}]{\footnotemark[#1]}
\newcommand*{\addFileDependency}[1]{
  \typeout{(#1)}
  \@addtofilelist{#1}
  \IfFileExists{#1}{}{\typeout{No file #1.}}
}
\newtheorem{theorem}{Theorem}
\newtheorem{lemma}{Lemma}[section]
\newtheorem{fact}{Fact}
\algnewcommand{\LeftComment}[1]{\Statex // \emph{#1}}
\algnewcommand{\LineComment}[1]{\Statex \hskip\ALG@thistlm \(\) // \emph{#1}}
\title{Distributed Bandit Learning: Near-Optimal Regret with Efficient Communication}
\author[1]{Yuanhao Wang\thanks{Equal contribution}}
\author[2]{Jiachen Hu\samethanks}
\author[3]{Xiaoyu Chen}
\author[3, 4]{Liwei Wang}
\affil[1]{Institute for Interdisciplinary Information Sciences, Tsinghua University}
\affil[2]{School of Electronics Engineering and Computer Science,
Peking University}
\affil[3]{Key Laboratory of Machine Perception, MOE, School of EECS, Peking University}
\affil[4]{Center for Data Science, Peking University, Beijing Institute of Big Data Research}
\affil[ ]{\texttt{yuanhao-16@mails.tsinghua.edu.cn\\ \{NickH, cxy30\}@pku.edu.cn\\wanglw@cis.pku.edu.cn}}
\begin{document}

\maketitle

\begin{abstract}

We study the problem of regret minimization for distributed bandits learning, in which $M$ agents work collaboratively to minimize their total regret under the coordination of a central server. Our goal is to design communication protocols with near-optimal regret and little communication cost, which is measured by the total amount of transmitted data. For distributed multi-armed bandits, we propose a protocol with near-optimal regret and only $O(M\log(MK))$ communication cost, where $K$ is the number of arms. The communication cost is independent of the time horizon $T$, has only logarithmic dependence on the number of arms, and matches the lower bound except for a logarithmic factor. 
For distributed $d$-dimensional linear bandits, we propose a protocol that achieves near-optimal regret and has communication cost of order $\tilde{O}(Md)$, which has only logarithmic dependence on $T$.

\end{abstract}


\section{Introduction}
\label{Introduction}
Bandit learning is a central topic in online learning, and has various real-world applications, including clinical trials~\cite{wang1991sequential}, model selection~\cite{maron1994hoeffding} and recommendation systems~\cite{agarwal2009online, li2010contextual,abe2003reinforcement}. In many tasks using bandit algorithms, it is appealing to employ more agents to learn collaboratively and concurrently in order to speed up the learning process. In many other tasks, the sequential decision making is distributed by nature. For instance, multiple spatially separated labs may be working on the same clinical trial. 
In such distributed applications, communication between agents is critical, but may also be expensive or time-consuming. This motivates us to consider efficient protocols for distributed learning in bandit problems.

A straightforward communication protocol for bandit learning is \emph{immediate sharing}: each agent shares every new sample immediately with others. Under this scheme, agents can have good collaborative behaviors close to that in a centralized setting. However, the amount of communicated data is directly proportional to the total size of collected samples. When the bandit is played for a long timescale, the cost of communication would render this scheme impractical. A natural question to ask is: How much communication is actually needed for near-optimal performance? In this work, we show that the answer is somewhat surprising: The required communication cost has almost \emph{no} dependence on the time horizon.

In this paper, we consider the distributed learning of stochastic multi-armed bandits (MAB) and stochastic linear bandits. There are $M$ agents interacting with the same bandit instance in a synchronized fashion. In time steps $t=1,\cdots,T$, each agent pulls an arm and observes the associated reward. Between time steps, agents can communicate via a server-agent network. Following the typical formulation of single-agent bandit learning, we consider the task of regret minimization \cite{lai1987adaptive, dani2008stochastic,bubeck2012regret}. The total regret of all agents is used as the performance criterion of a communication protocol. The communication cost is measured by the total amount of data communicated in the network. Our goal is to minimize communication cost while maintaining near-optimal performance, that is, regret comparable to the optimal regret of a single agent in $MT$ interactions with the bandit instance.

For multi-armed bandits, we propose the DEMAB protocol, which achieves near-optimal regret. The amount of transmitted data per agent in DEMAB is independent of $T$, and is logarithmic with respect to other parameters. For linear bandits, we propose the DELB protocol, which achieves near-optimal regret, and has communication cost with at most logarithmic dependence on $T$. 


\subsection{Problem Setting}

\paragraph{Communication Model} The communication network we consider consists of a server and several agents. Agents can communicate with the server by sending or receiving packets. Each data packet contains an integer or a real number. We define the communication cost of a protocol as the number of integers or real numbers communicated between server and agents\footnote{In our protocols, the number of bits each integer or real number uses is only logarithmic w.r.t. instance scale. Using the number of bits as the definition of communication complexity instead will only result in an additional logarithmic factor. The number of communicated bits is analyzed in appendix.}. Several previous works consider the total number of communication rounds \cite{hillel2013distributed,tao2019collaborative}, while we are more interested in the total amount of data transmitted among all rounds. 
We assume that communication between server and agents has zero latency.
Note that protocols in our model can be easily adapted to a network without a server, by designating an agent as the server.   

\paragraph{Distributed Multi-armed Bandits} In distributed multi-armed bandits, there are $M$ agents, labeled $1$,...,$M$. Each agent is given access to the same stochastic $K$-armed bandit instance. Each arm $ k $ in the instance is associated with a reward distribution $ \mathcal{P}_k $. $ \mathcal{P}_k $ is supported on $ [0, 1] $ with mean $ \mu(k) $. Without loss of generality, we assume that arm 1 is the best arm (i.e. $\mu(1)\ge \mu(k)$, $\forall k\in[K]$). At each time step $ t = 1, 2, ..., T $, each agent $ i $ chooses an arm $ a_{t, i} $, and receives reward $r_{t,i}$ independently sampled from $ \mathcal{P}_{a_{t, i}} $. The goal of the agents is to minimize their total regret, which is defined as
$$REG(T)=\sum_{t=1}^T\sum_{i=1}^{M}\left(\mu(1) - \mu(a_{t,i})\right).$$
For single-agent MAB, i.e., $M=1$, the optimal regret bound is $\Theta(\sqrt{KT})$~\cite{audibert2009minimax}. 

\paragraph{Distributed Linear Bandits} In distributed linear bandits, the agents are given access to the same $d$-dimensional stochastic linear bandits instance. In particular, we assume that at time step $t$, agents are given an action set $\mathcal{D}\subseteq \left\{x \in \mathbb{R}^{d} : \|x\|_{2} \leq 1\right\}$. Agent $i$ chooses action $x_{t,i} \in \mathcal{D} $ and observes reward $y_{t,i}$. We assume that the mean of the reward is decided by an unknown parameter $ \theta^* \in \mathbb{R}^d $: $ y_{t, i} = x_{t, i}^T \theta^* + \eta_{t, i} $, where $\eta_{t, i} \in [-1,1]$ are independent and have zero mean. For simplicity, we assume $\|\theta^*\|_2 \leq 1$. For distributed linear bandits, the cumulative regret is defined as the sum of individual agent's regrets:
$$REG(T)=\sum_{t=1}^T\sum_{i=1}^M\left(\max_{x\in \mathcal{D}}x^T\theta^*-x_{t,i}^T\theta^*\right).$$
Here, we assume that the action set is fixed. A more general setting considers a time-varying action set $\mathcal{D}_t$. In both cases, algorithms with $O(d\sqrt{T}\log T)$ regret have been proposed~\cite{abbasi2011improved}, while a regret lower bound of $\Omega\left(d\sqrt{T}\right)$ is shown in \cite{dani2008stochastic}.




For both distributed multi-armed bandits and distributed linear bandits, our goal is to use as little communication as possible to achieve near-optimal regret. Since any $M$-agent protocol running for $T$ steps can be simulated by a single-agent bandit algorithm running for $MT$ time steps, the regret of any protocol is lower bounded by the optimal regret of a single-agent algorithm running for $MT$ time steps. Therefore, we consider $\Tilde{O}(\sqrt{MKT})$ regret for multi-armed bandits and $\Tilde{O}(d\sqrt{MT})$ regret for linear bandits to be near-optimal.

We are mainly interested in the case where the time horizon $T$ is the dominant factor (compared to $M$ or $K$). Unless otherwise stated, we assume that $T>\max\{\frac{M\log M}{K}, M, K,2\}$ in the multi-armed bandits case and $T>\max\{M,2\}$ in the linear bandits case.

\subsection{Our Contribution}
Now we give an overview of our results. In both settings, we present communication-efficient protocols that achieve near-optimal regret. Our results are summarized in Table \ref{tab:result}.

Our results are compared with a naive baseline solution called \textit{immediate sharing} in Table \ref{tab:result}: each agent sends the index of the arm he pulled and the corresponding reward he received to every other agent via the server immediately. This protocol can achieve near-optimal regret for both MAB and linear bandits ($\Tilde{O}(\sqrt{MKT})$ and $\Tilde{O}(d\sqrt{MT})$), but comes with high communication cost ($O(M^2T)$ and $O(M^2dT)$). 

\paragraph{Distributed MAB} For distributed multi-armed bandits, we propose DEMAB (Distributed Elimination for MAB) protocol, which achieves near optimal regret ($O\left(\sqrt{MKT \log T}\right)$) with $O(M\log(MK))$ communication cost. The communication cost is independent of the number of time steps $T$ and grows only logarithmically w.r.t. the number of arms. We also prove the following lower bound: When expected communication cost is less than $M/c$ ($c$ is a universal constant), the total regret is trivially $\Omega(M\sqrt{KT})$. That is, in order to achieve near-optimal regret, the communication cost of DEMAB matches the lower bound except for logarithmic factors.

\paragraph{Distributed Linear Bandits} We propose DELB (Distributed Elimination for Linear Bandits), an elimination based protocol for distributed linear bandits which achieves near-optimal regret bound ($O\left(d\sqrt{MT\log T}\right)$) with communication cost $O\left((Md+d\log\log d)\log T\right)$. The communication cost of DELB enjoys nearly linear dependence on both $M$ and $d$, and has at most logarithmic dependence on $T$. For the more general case where the action set is time-varying, the DisLinUCB (Distributed LinUCB) protocol still achieves near-optimal regret, but requires $O\left(M^{1.5}d^3\right)$ communication cost.


\begin{table}
\renewcommand\arraystretch{1.5}
  \centering
  \begin{tabular}{cccc}
    \toprule
   Setting & Algorithm     & Regret     & Communication \\
    \midrule
  {\centering Multi-armed bandits}
&Immediate Sharing&$O\left(\sqrt{MKT\log T}\right)$&$O(M^2T)$\\
&DEMAB (Sec. \ref{sec:protocol_mab}) &$O\left(\sqrt{MKT\log T}\right)$&$O(M\log (MK))$ \\
&Lower bound (Sec. \ref{sec:mablower})&$o\left(M\sqrt{KT}\right)$&$\Omega(M)$\\  
\midrule

  {Linear bandits}
&Immediate Sharing&$O\left(d\sqrt{MT\log T}\right)$&$O(M^2dT)$\\
&DCB~\cite{korda2016distributed}&$O\left(d\sqrt{MT}\log T\right)$&$O\left(Md^2T\right)$\\
&DELB (Sec. \ref{sec:delb})&$O\left(d\sqrt{MT\log T}\right)$&$O\left(\left(Md+d\log \log d\right)\log T\right)$\\
&DisLinUCB (Sec. \ref{sec:linearucb})&$O\left(d\sqrt{MT}\log^2T\right)$&$O\left(M^{1.5}d^{3}\right)$\\

    \bottomrule
  \end{tabular}
 
   \caption{Summary of baseline approaches and our results}
   \label{tab:result}
\end{table}
\section{Related Work}
There has been growing interest in bandits problems with multiple players. One line of research considers the challenging problem of multi-armed bandits with collisions~\cite{rosenski2016multi,bistritz2018distributed,kalathil2014decentralized}, in which the reward for an arm is 0 if it is chosen by more than one player. The task is to minimize regret without communication. Their setting is motivated by problems in cognitive radio networks, and is fairly different from ours. 

In \cite{szorenyi2013gossip} and \cite{korda2016distributed}, the authors consider the distributed learning of MAB and linear bandits with restriction on the communication network. In \cite{szorenyi2013gossip}, motivated by fully decentralized applications, the authors consider P2P communication networks, where an agent can communicate with only two other agents at each time step. A gossip-based $\epsilon$-greedy algorithm is proposed for distributed MAB. Their algorithm achieves a speedup linear in $M$ in terms of error rate, but the communication cost is linear in $T$. 
The work of \cite{korda2016distributed} uses a gossip protocol for regret minimization in distributed linear bandits. The main difference between their setting and ours is that each agent is only allowed to communicate with one agent at each time step in \cite{korda2016distributed} \footnote{Our algorithms can be modified to meet this restriction with almost no change in performance.}. Their algorithm achieves near-optimal ($O\left(d\sqrt{MT}\log T\right)$) total regret using $O(Md^2T)$ communication cost.

Another setting in literature concerns about distributed pure exploration in multi-armed bandits \cite{hillel2013distributed,tao2019collaborative}, where the communication model is the most similar one to ours. These works use elimination based protocols for collaborative exploration, and establish tight bounds for communication-speedup tradeoff. In the fixed confidence setting, near optimal ($\Tilde{\Omega}(M)$) speedup is achieved with only $O\left(\log\frac{1}{\epsilon}\right)$ communication rounds when identifying an $\epsilon$-optimal arm~\cite{hillel2013distributed}. In the fixed-time setting, near optimal speedup in exploration can be achieved with only $O\left(\log M\right)$ communication rounds~\cite{tao2019collaborative}. 
However, their task (speedup in pure exploration) is not directly comparable to ours (i.e. are not reducible to each other).
Moreover, in \cite{hillel2013distributed,tao2019collaborative}, the number of communication rounds is used as the measure of communication, while we use the amount of transmitted data.



\section{Main Results for Multi-armed Bandits}
\label{sec:mab}
\SetAlgorithmName{Protocol}

In this section, we first summarize the single-agent elimination algorithm~\cite{auer2010ucb}, and then present our Distributed Elimination for MAB (DEMAB) protocol. The regret and communication efficiency of the protocol is then analyzed in Sec.~\ref{sec:mabregret}. A communication lower bound is presented in Sec.~\ref{sec:mablower}.

\subsection{Elimination Algorithm for Single-agent MAB}

The elimination algorithm \cite{auer2010ucb} is a near-optimal algorithm for single-agent MAB. The agent acts in phases $l=1,2,\cdots$, and maintains a set of active arms $A_l$. Initially, $A_1=[K]$ (all arms). In phase $l$, each arm in $A_l$ is pulled for $\Theta(4^l\log T)$ times; arms with average reward $2^{-l}$ lower than the maximum are then eliminated from $A_l$. 

For each arm $k\in[K]$, define its suboptimality gap to be $\Delta_k:=\mu(1)-\mu(k)$. In the elimination algorithm, with high probability arm $k$ will be eliminated after approximately $l_k=\log_2\Delta_k^{-1}$ phases, in which it is pulled for at most $O\left(\Delta_k^{-2}\log T\right)$ times. It follows that regret is $O\left(\sum_{k: \Delta_k > 0}\Delta_k^{-1} \log T\right)$, which is almost instance-optimal. The regret is $O\left(\sqrt{KT\log T}\right)$ in the worst case.


\subsection{The DEMAB Protocol}
\label{sec:protocol_mab}

\begin{algorithm}[th]
    \label{alg:DEMAB}
    \caption{Distributed Elimination for Multi-Armed Bandits (DEMAB)}
    \DontPrintSemicolon
    $D=\lceil T/MK\rceil$, $l_0=\lfloor \log_4\left(\frac{3D}{67K\log(MKT)}\right)\rfloor$, $m_l=\lfloor 4^{l+3}\log(MKT)\rfloor$\\
    \tcc{Stage 1: Separate Burn-in}
    \For{Agent $i=1,\cdots,M$}{
        Agent $i$ runs single-agent elimination for $D$ time steps, denote remaining arms as $A^{(i)}$
    }
    \BlankLine
    \BlankLine
    \tcc{Switching: Random Allocation}
    Generate public random numbers $r_1$,...,$r_K$ uniformly distributed in $[M]$\\
    $B_{l_0+1}^{(i)}=\left\{a\in A^{(i)}|r_a=i\right\}$, $B_{l_0+1}=\bigcup_{i\in[M]}B_{l_0+1}^{(i)}$\\
    \BlankLine
    \BlankLine
    \tcc{Stage 2: Distributed Elimination}
    \For{$l=l_0+1,\cdots$}{
        \eIf{$N_l=\left|B_l\right|>M$}{
            Agent $i$ sends $n_l^{(i)}=\left|B_{l}^{(i)}\right|$ to server; server broadcasts $n_{max}=\max_i n_l^{(i)}$\\
            \If{$(n_{l}^{(1)},...,n_{l}^{(M)})$ is not balanced}{\texttt{Reallocate}    \tcp*[r]{Adjust $B_l^{(i)}$ such that their sizes are balanced}}
            Agent $i$ pulls each $a\in B_l^{(i)}$ for $m_l$ times, denotes average reward as $\hat{u}_l(a)$, and then pulls arms in round-robin for $(n_{max}-n_l^{(i)})m_l$ times before the next communication round \\
            Communication round: Agent $i$ sends $\max_{a\in B_{l}^{(i)}}\hat{u}_l$ to server and waits to receive $u^*_l=\max_{a\in B_{l}}\hat{u}_l$ from server\\
            Agent $i$ eliminates bad arms: $B_{l+1}^{(i)}=\left\{a\in B_l^{(i)}:\hat{u}_l(a)+2^{-l}\ge u^*_l\right\}$
        }{
            For each arm in $B_l$, the server asks $M/N_l$ agents to pull it $m_lN_l/M$ times\\
            Server computes $\hat{u}_l(a)$, the average reward for $m_l$ pulls of arm $a$\\
            Server eliminates bad arms: $B_{l+1}=\left\{a\in B_l:\hat{u}_l(a)+2^{-l}\ge \max_{b\in B_l}\hat{u}_l(b)\right\}$
        }
    }
\end{algorithm}

The DEMAB protocol executes in two stages. In the first stage, each agent directly runs the single-agent elimination algorithm for $D=\lceil T/MK\rceil$ time steps. The remaining arms of agent $i$ are denoted as $A^{(i)}$. In $D$ time steps, an agent completes at least $l_0=\lfloor \log_4\frac{3D}{67K\log T}\rfloor$ phases. The purpose of this separate burn-in period is to eliminate the worst arms quickly without communication, so that in the second stage, elimination can begin with a small threshold of $O(2^{-l_0})$. $D$ and $l_0$ is chosen so that the total regret within the first stage is $\Tilde{O}\left(\sqrt{MT}\right)$.

Between the two stages, the remaining arms are randomly allocated to agents. Public randomness is used to allocate the remaining arms to save communication. Agents first generate $(r_1,\cdots,r_K)$, $K$ uniformly random numbers in $[M]$, from a public random number generator. Agent $i$ then computes $B^{(i)}=\left\{a\in A^{(i)}|r_a=i\right\}$. By doing so, agent $i$ keeps each arm in $A^{(i)}$ with probability $1/M$, and the resulting sets $B^{(1)},\cdots,B^{(M)}$ are disjoint. Meanwhile, every arm in $\bigcap_{i\in[M]}A^{(i)}$ is kept in $B_{l_0+1}=\bigcup_{i\in[M]}B^{(i)}$, so that the best arm remains in $B_{l_0+1}$ with high probability\footnote{$\bigcup_{i\in[M]}A^{(i)}$ may not be a subset of $B_{l_0+1}$, which is not a problem in the regret analysis.}.

In the second stage, agents start to simulate a single-agent elimination algorithm starting from phase $l_0+1$. Initially, the arm set is $B_{l_0+1}$. In phase $l$, each arm in $B_l$ will be pulled for at least $m_l=\lceil 4^{l+3}\log(MKT)\rceil$ times. Denote the average reward of arm $a$ in phase $l$ by $\hat{u}_l(a)$. If $\hat{u}_l(a)<\max_{a'\in B_l}\hat{u}_l(a')-2^{-l}$, it will be eliminated; the arm set after the elimination is $B_{l+1}$.

This elimination in the second stage is performed over $M$ agents in two ways: In \emph{distributed mode} or in \emph{centralized mode}. Let $N_l=\left|B_l\right|$ be the number of remaining arms at the start of phase $l$. If $N_l$ is larger than $M$, the elimination is performed in \emph{distributed mode}. That is, agent $i$ keeps a set of arms $B_l^{(i)}$, and pulls each arm in $B_l^{(i)}$ for $m_l$ times in phase $l$. Each agent only needs to send the highest average reward to the server, who then computes and broadcasts $\max_{a\in B_l}\hat{u}_l(a)$. Agent $i$ then eliminates low-rewarding arms from $B_l^{(i)}$ on its local copy.

When $N_l\le M$, the elimination is performed in \emph{centralized mode}. That is, $B_l$ will be kept and updated by the server\footnote{In the conversion from distributed mode to centralized mode, agents send their local copy to the server, which has $O(M)$ communication cost.}. In phase $l$, the server assigns an arm in $B_l$ to ${M}/{N_l}$ agents, and asks each of them to pull it ${m_lN_l}/{M}$ times\footnote{The indivisible case is handled in Appendix ~\ref{sec:detail_DEMAB} .}. The server waits for the average rewards to be reported, and then performs elimination on $B_l$.

One critical issue here is \emph{load balancing}, especially in distributed mode. Suppose that $n_l^{(i)}=\left|B_l^{(i)}\right|$, $n_{max}=\max_{i\in[M]}n_l^{(i)}$. Then the length of phase $l$ is determined by $n_{max}m_l$. Agent $i$ would need to keep pulling arms for $\left(n_{max}-n_l^{(i)}\right)m_l$ times until the start of the next communication round. 
This will cause an arm to be pulled for much more than $m_l$ times in phase $l$, and can hurt the performance. Therefore, it is vital that at the start of phase $l$, $\Vec{n}_l:=(n_l^{(1)},...,n_l^{(M)})$ is balanced\footnote{By saying a vector of numbers to be balanced, we mean the maximum is at most twice the minimum.}.

The subroutine \texttt{Reallocate} is designed to ensure this by reallocating arms when $\Vec{n}_l$ is not balanced. First, the server announces the total number of arms; then, agents with more-than-average number of arms donate surplus arms to the server; the server then distributes the donated arms to the other agents, so that every agent has the same number of arms. However, calling \texttt{Reallocate} is communication-expensive: it takes $O\left(\min\{N_l, N_{l'}-N_l\}\right)$ communication cost, where $l$ is the current phase and $l'$ is the last phase where \texttt{Reallocate} is called. Fortunately, since $\left\{B_{l_0+1}^{(i)}\right\}_{i\in[M]}$ are generated randomly, it is unlikely that one of them contain too many good arms or too many bad arms. By exploiting shared randomness, we greatly reduce the expected communication cost needed for load balancing.


Detailed descriptions of the single-agent elimination algorithm, the \texttt{Reallocate} subroutine, and the DEMAB protocol are provided in Appendix~\ref{sec:detail_DEMAB}.

Access to a public random number generator, which is capable of generating and sharing random numbers with all agents with negligible communication cost, is assumed in DEMAB. This is not a strong assumption, since it is well known that a public random number generator can be replaced by private random numbers with a little additional communication~\cite{newman1991private}. In our case, only $O\left(M\log T\right)$ additional bits of communication, or $O(M)$ additional communication cost, are required for all of our theoretical guarantees to hold. See Appendix~\ref{sec:proof_1} for detailed discussion.

\subsection{Regret and Communication Efficiency of DEMAB}
\label{sec:mabregret}
In this subsection, we show that the DEMAB protocol achieves near-optimal regret with efficient communication, as captured by the following theorem.
\begin{theorem}
\label{thm:mabregret}
The DEMAB protocol incurs $O\left(\sqrt{MTK\log T}\right)$ regret, $O\left(M\log\frac{MK}{\delta}\right)$ communication cost with probability $1-\delta$, and $O\left(M\log(MK)\right)$ communication cost in expectation.
\end{theorem}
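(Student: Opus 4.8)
The plan is to analyze regret and communication separately, each split across the three parts of the protocol: Stage 1 (separate burn-in), the allocation step, and Stage 2 (distributed elimination, which itself switches between distributed and centralized modes). For the regret, I would first establish a high-probability "good event" $\mathcal{E}$ on which all empirical means computed in the protocol are within the intended confidence radii of their true means, and on which the best arm is never eliminated. For Stage 1, each agent runs single-agent elimination for $D=\lceil T/MK\rceil$ steps; by the guarantee recalled in Section~3.1, the per-agent regret in $D$ steps is $O(\sqrt{KD\log D})$, so the total Stage 1 regret is $O(M\sqrt{KD\log D}) = \tilde O(\sqrt{MT})$ by the choice of $D$, which is negligible compared to the target $\sqrt{MKT\log T}$. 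Since each agent completes at least $l_0$ phases, on $\mathcal{E}$ every surviving arm has suboptimality gap $O(2^{-l_0})$, so Stage 2 begins with a uniform gap bound. For Stage 2, I would mimic the single-agent elimination analysis: in phase $l$, every arm in $B_l$ has gap $O(2^{-l})$, each is pulled $\Theta(m_l) = \Theta(4^l\log(MKT))$ times across all agents, contributing $O(2^{-l}\cdot 4^l\log(MKT)) = O(2^l\log(MKT))$ regret per arm; summing the geometric series up to the final phase $L$ where $2^{-L}\approx$ (last active gap) and using that the total number of pulls is $MT$ gives the $O(\sqrt{MKT\log T})$ bound. The one subtlety is the load-balancing overhead: when $\vec n_l$ is imbalanced, each arm in $B_l^{(i)}$ gets pulled up to $n_{\max} m_l$ times rather than $m_l$ times, so I must show that the "wasted" pulls do not blow up regret. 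This is handled by the fact that after \texttt{Reallocate} (or after the initial random allocation) the $n_l^{(i)}$ are balanced, and between reallocations $n_{\max}/n_{\min}$ stays within a factor of two by the definition of "balanced" — so the overpull factor is $O(1)$, only a constant loss.

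For the communication cost, I would count contributions from: (i) the allocation step, which uses only public randomness and a constant number of reals per agent, $O(M)$; (ii) the mode-switch from distributed to centralized, $O(M)$ as noted in the footnote; (iii) the per-phase fixed cost in Stage 2, where in distributed mode each of $M$ agents sends $n_l^{(i)}$ and $\max_{a}\hat u_l(a)$ and receives $n_{\max}$ and $u_l^*$ — $O(M)$ per phase — and in centralized mode the server coordinates $O(N_l)\le O(M)$ arms, again $O(M)$ per phase; and (iv) the \texttt{Reallocate} calls, each costing $O(\min\{N_l, N_{l'}-N_l\})$. Since the number of phases in Stage 2 is $O(\log(MKT))$ — because $N_l$ can only shrink and $m_l$ grows geometrically so the process terminates in logarithmically many phases once the total budget $MT$ is exhausted — the fixed per-phase costs sum to $O(M\log(MKT)) = O(M\log(MK))$ (using $T \le \mathrm{poly}(MK)$-type bounds, or absorbing $\log T$ into a $\log$ factor; one should be careful here and the paper's footnote on bit-complexity suggests they treat $\log(MKT)$ as $O(\log(MK))$ under the standing assumptions on $T$, or simply carry the $\log T$). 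The interesting term is (iv).

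The hard part will be bounding the expected cost of the \texttt{Reallocate} calls and the high-probability bound on total communication. The key structural fact is that $B_{l_0+1}^{(i)}$ is obtained by independently keeping each arm of $A^{(i)}$ with probability $1/M$ via the public hash $r_a$, so $|B_{l_0+1}^{(i)}|$ concentrates around its mean; more importantly, within any "group" of arms that survive to phase $l$ (there are $N_l$ of them), the allocation is close to uniform, so $\vec n_l$ is balanced with high probability and \texttt{Reallocate} is rarely triggered. I would formalize this with a Chernoff/Hoeffding argument: conditioned on the set $B_l$ of surviving arms (which is determined by rewards, independent of the hash given the counts — here one must be slightly careful about the dependence between which arms survive and the random allocation, and I would argue that the reward process and the public randomness $r_a$ are independent, so $B_l$ as a set is independent of how its elements were hashed), each of the $N_l$ surviving arms lands in bin $i$ independently with probability $1/M$, so $\Pr[\vec n_l \text{ unbalanced}]$ is exponentially small in $N_l/M$ when $N_l \gg M\log M$, and when $N_l = O(M\log M)$ a single \texttt{Reallocate} costs only $O(M\log M)$ anyway. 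Charging each \texttt{Reallocate} to the phase in which it occurs and summing the tail probabilities over the $O(\log(MKT))$ phases gives expected reallocation cost $O(M\log(MK))$, and a union bound over phases together with the geometric decay of failure probabilities gives the $O(M\log(MK/\delta))$ high-probability bound. The most delicate point throughout is making the independence between the (reward-driven) elimination process and the (hash-driven) allocation precise enough that the "balls into bins" concentration applies at every phase simultaneously.
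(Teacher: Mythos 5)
Your regret analysis is essentially the paper's: burn-in regret $O(M\sqrt{KD\log D})=\tilde O(\sqrt{MT})$, a good event under which the best arm survives and all survivors of Stage 1 have gap $O(2^{-l_0})$, and a standard elimination argument in Stage 2 with a constant overpull factor from load balancing (the paper additionally handles the degenerate centralized phases with $N_l m_l<M$, which contribute only an extra $M\log M$ pulls — a lower-order term you omit but which is easy to patch). The independence point you flag — that survival of an arm is determined by fresh samples and is therefore independent of the public hash $r_a$ — is exactly how the paper's balls-in-bins lemma (Lemma~\ref{lem:randominit}) is justified, so that part of your plan is on target.

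The communication half, however, has two genuine gaps. First, the number of Stage-2 phases: you propose to bound it by $O(\log(MKT))$ and then argue $\log(MKT)=O(\log(MK))$ via ``$T\le\mathrm{poly}(MK)$''-type reasoning or to ``simply carry the $\log T$''. Neither works: the paper's standing assumption is that $T$ dominates $M$ and $K$, so $\log(MKT)=\Theta(\log T)$, and carrying it would only give $O(M\log T)$ communication, not the claimed $T$-independent $O(M\log(MK))$. The correct argument (Fact~1.3--1.4 in Appendix~\ref{sec:proof_1}) uses the burn-in: Stage 2 starts at phase $l_0+1$ with $4^{l_0}=\Theta\bigl(T/(MK^2\log(MKT))\bigr)$, while any completed phase $l$ must fit in the horizon, forcing $4^{l+3}\log(MKT)\le T$; dividing gives $l-l_0\le O(\log(MK))$ phases regardless of $T$. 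This is precisely the communication-side purpose of the choice of $D$ and $l_0$ (indeed, with $D=l_0=0$ the paper only gets $O(M\log T)$ communication, Theorem~\ref{thm:DEMAB_instance_dependent}). Second, your accounting for \texttt{Reallocate} — charging each call to its phase and summing — does not reach $O(M\log(MK))$: phases with $N_l=O(M\log(MK))$ can each cost up to $N_l$, and there may be $\Theta(\log(MK))$ of them, so your scheme yields an extra $\log$ factor. The paper instead telescopes the per-call costs $O(\min\{N_l,\,N_{l'}-N_l\})$ (which you quote but do not use): summed over all calls they telescope to $O(N_{l_1})$, where $l_1$ is the \emph{first} phase in which a reallocation occurs, and the balls-in-bins lemma is then applied only to show $N_{l_1}=O\bigl(M\log\frac{MK}{\delta}\bigr)$ with probability $1-\delta$ (and $O(M\log(MK))$ in expectation after setting $\delta=1/K$ against the $O(ML+K)$ worst case). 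With these two repairs — phase count via $l_0$, and telescoping of reallocation costs — your outline matches the paper's proof.
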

The worst-case regret bound above can be improved to an instance-dependent near-optimal regret bound by changing the choice of $D$ and $l_0$ to $0$. In that case the communication cost is $O(M\log T)$, which is a small increase. See Theorem \ref{thm:DEMAB_instance_dependent} in Appendix~\ref{sec:proof_1} for detailed discussion.

We now give a sketch of the proof of Theorem~\ref{thm:mabregret}.
\paragraph{Regret} In the first stage, each agent runs a separate elimination algorithm for $D$ timesteps, which has regret $O(\sqrt{KD\log D})$. Total regret for all agents in this stage is $O(M\sqrt{KD\log D})=O(\sqrt{MT\log T})$. After the first stage, each agent must have completed at least $l_0$ phases. Hence, with high probability, before the second stage, $B_{l_0+1}=\bigcup_{i\in [M]}B_{l_0+1}^{(i)}$ contains the optimal arm and only arms with suboptimality gap less than $2^{-l_0+1}$.

In the second stage, if $a\in B_l$, it will be pulled at most $2m_l$ times in phase $l$ because of our load balancing effort. Therefore, if arm $k$ has suboptimality gap $0<\Delta_k<2^{-l_0+1}$, it will be pulled for $\Tilde{O}\left(\Delta_k^{-2}\right)$ times. It follows that regret in the second stage is $O\left(\sqrt{MKT\log T}\right)$, and that total regret is $O\left(\sqrt{MKT\log T}\right)$.

\paragraph{Communication} In the first stage and the random allocation of arms, no communication is needed. The focus is therefore on the second stage.

During a phase, apart from the potential cost of calling \texttt{Reallocate}, communication cost is $O\left(M\right)$. The communication cost of calling \texttt{Reallocate} in phase $l$ is at most $O\left(\min\left\{N_l,N_{l'}-N_{l}\right\}\right)$, where $l'$ is the last phase where \texttt{Reallocate} is called. Therefore, total cost for calling \texttt{Reallocate} in one execution is at most $O\left(N_{l_1}\right)$, where $l_1$ is the first phase in which \texttt{Reallocate} is called. From the definition of $m_l$ and $l_0$, we can see that there are at most $L=O\left(\log(MK)\right)$ phases in the second stage. Therefore in the worst case, communication cost is $O\left(ML+K\right)$ since $N_{l_1} \leq K$.

However, in expectation, $N_{l_1}$ is much smaller than $K$. Because of the random allocation, when $N_l$ is large enough, $\Vec{n}_l$ would be balanced with high probability. In fact, with probability $1-\delta$, $N_{l_1}=O\left(M\log \frac{MK}{\delta}\right)$. Setting $\delta=1/K$, we can show that the expected communication complexity is $O(M \log MK)$.

\subsection{Lower Bound}
\label{sec:mablower}
Intuitively, in order to avoid a $\Omega\left(M\sqrt{KT}\right)$ scaling of regret, $\Theta(M)$ amount of communication cost is necessary; otherwise, most of the agents can hardly do better than a single-agent algorithm. We prove this intuition in the following theorem.
\begin{theorem}
\label{thm:lowerbound}
For any protocol with expected communication cost less than $M/3000$, there exists a MAB instance such that total regret is $\Omega(M \sqrt{KT})$. 
\end{theorem}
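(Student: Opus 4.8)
The plan is to reduce to a single-agent lower bound via an information-theoretic / coupling argument: if the expected communication cost is very small, then a large fraction of the agents must, with constant probability, send and receive \emph{no} data at all during the entire run, and hence their behavior is indistinguishable from that of an isolated single agent running for $T$ steps. Concretely, let $C$ denote the (random) total number of real numbers exchanged between server and agents, so $\E[C] < M/3000$. Since every communicated number is either sent or received by some agent, $\sum_{i=1}^M C_i \le 2C$ where $C_i$ is the number of numbers agent $i$ sends or receives; thus $\E\big[\sum_i C_i\big] < M/1500$. By Markov's inequality, the set $S$ of agents with $\E[C_i] < 1/500$ has $|S| \ge M(1 - 3/1000) \ge M/2$; and for each $i\in S$, $\pP[C_i = 0] \ge 1 - 1/500$ by Markov again.

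The core step is then to argue that an agent $i\in S$ which happens to communicate nothing behaves exactly like a single-agent algorithm on the bandit instance. On the event $\{C_i=0\}$, the transcript agent $i$ observes is a function only of its own pulls and rewards (no incoming packets), so conditioned on this event its arm-selection rule is a (randomized) single-agent policy. I would then invoke the standard MAB minimax lower bound: for any single-agent policy running for $T$ steps there is an instance on which it incurs $\Omega(\sqrt{KT})$ expected regret (\cite{audibert2009minimax}; using $T > K$ from the standing assumption). The subtlety is that the bad instance may depend on the policy, and the policies of different agents in $S$ are correlated (through shared randomness and through the protocol's design), so I cannot simply pick a different instance per agent. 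To handle this I would use the classical construction where a \emph{single} hard family suffices: take the family of $K$ instances $\{\nu_k\}_{k\in[K]}$ where arm $k$ has mean $1/2 + \Delta$ and all others $1/2$, with $\Delta = c\sqrt{K/T}$. A counting / averaging argument (Yao-style, or directly summing the per-agent regret bounds) shows that for a uniformly random choice of the good arm, the total expected regret of the agents in $S$ — even accounting for the communication they might do on the low-probability event $\{C_i > 0\}$, which can only add nonnegative regret — is $\Omega(|S|\,\Delta T) = \Omega(M\sqrt{KT})$, where the randomness is over the instance, the agents' private coins, and the public coins. Since the average over instances is $\Omega(M\sqrt{KT})$, some fixed instance achieves it.

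In more detail, for a fixed agent $i \in S$ and fixed realization of all randomness on which $C_i = 0$: the agent's trajectory is determined by a single-agent policy $\pi_i$. Run $\pi_i$ against $\nu_k$ for uniformly random $k$; by the change-of-measure (Pinsker + divergence decomposition) argument behind the $\sqrt{KT}$ bound, $\frac{1}{K}\sum_k \E_{\nu_k}[\mathrm{Reg}_i] \ge c' \Delta T$ for an absolute constant $c'$, provided $\Delta \le 1/4$, i.e. $T \ge 16 c^2 K$, which we can arrange by taking $c$ small; the standing assumption $T > K$ then only costs a constant factor in $c'$. Summing over $i\in S$, taking expectations over the communication randomness (the event $C_i=0$ has probability $\ge 1-1/500$, and regret is nonnegative so restricting to it loses only a constant factor), and over the public coins, gives $\frac1K\sum_k \E_{\nu_k}[REG(T)] \ge \frac{c'}{2}\,|S|\,\Delta T \ge \frac{c'}{4} M \Delta T = \Omega(M\sqrt{KT})$. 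Hence there exists $k^\star$ with $\E_{\nu_{k^\star}}[REG(T)] = \Omega(M\sqrt{KT})$, which is the claim; the constant $3000$ is loose and is just chosen to make the Markov steps above go through cleanly.

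The main obstacle I anticipate is the correlation between agents and the instance-depends-on-policy issue — the reason the single fixed hard family (rather than ``for every policy there exists an instance'') is essential, and the reason for averaging over a uniformly random good arm. A secondary technical point is carefully accounting for the low-probability event that a supposedly-silent agent does communicate: one must ensure this contributes nonnegatively (true, since regret is a sum of nonnegative gaps) rather than needing a two-sided bound, so it can simply be dropped.
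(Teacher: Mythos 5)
Your proposal is correct and follows the same high-level reduction as the paper (most agents are silent with constant probability, so each such agent is effectively a single-agent learner), but the mechanics differ in a way worth noting. The paper turns each rarely-communicating agent into a genuine single-agent algorithm by simulation: run that agent's code until it would send or wait for a message, then switch to an optimal single-agent algorithm, and invoke two black boxes --- the minimax lower bound $\Omega(\sqrt{KT})$ and the upper bound $O(\sqrt{KT})$ of the algorithm switched to --- so that the low-probability communication event contributes only $\delta\cdot O(\sqrt{KT})$ to the simulated regret; it then applies the per-policy minimax bound to each agent and sums. You instead work directly with the fixed hard family of $K$ Bernoulli instances with a single $\Delta$-better arm, $\Delta = c\sqrt{K/T}$, and use the averaged (uniform-prior) form of the lower bound per agent. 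This buys two things: (i) you never need the single-agent upper-bound lemma, because in this family every agent's regret is at most $\Delta T$, so the event $\{C_i>0\}$ costs at most $\Pr[C_i>0]\,\Delta T \le \Delta T/500$, negligible against the main term; and (ii) averaging over the family produces one common hard instance for all agents simultaneously, which is exactly the step the paper's proof leaves implicit when it applies a "there exists an instance" bound to each agent separately and then sums --- your treatment of this point is actually cleaner than the paper's.

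Two small touch-ups you should make when writing this out. First, the counting step is slightly off: with $\E\bigl[\sum_i C_i\bigr] < M/1500$, the number of agents with $\E[C_i]\ge 1/500$ is at most $M/3$, so $|S|\ge 2M/3$ (not $M(1-3/1000)$); since you only use $|S|\ge M/2$, nothing downstream changes. Second, because the communication pattern (hence $\E[C_i]$ and $\Pr[C_i=0]$) depends on the instance, define $S$ and apply Markov under the uniform mixture over the $K$ hard instances (the hypothesis on expected communication holds per instance, hence for the mixture); likewise, replace "conditioned on $\{C_i=0\}$ the agent is a policy" by the coupling formulation: let $\pi_i$ be agent $i$'s code run as if no message ever arrives (behaving arbitrarily upon reaching a send/receive instruction); on the event $\{C_i=0\}$ the protocol trajectory of agent $i$ coincides with that of $\pi_i$, and the averaged lower bound is applied to $\pi_i$. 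With these adjustments your chain of inequalities goes through and yields $\frac{1}{K}\sum_k \E_{\nu_k}[REG(T)] = \Omega(M\sqrt{KT})$, hence a fixed hard instance.
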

The theorem is proved using a reduction from single-agent bandits to multi-agent bandits, i.e. a mapping from protocols to single-agent algorithms.

One can trivially achieve $O(M\sqrt{KT})$ regret with $0$ communication cost by running an optimal MAB algorithm separately. Therefore, Theorem~\ref{thm:lowerbound} essentially gives a $\Omega(M)$ lower bound on communication cost for achieving non-trivial regret. The communication cost of DEMAB is only slightly larger than this lower bound, but DEMAB achieves near-optimal regret. This suggests that the communication-regret trade-off for distributed MAB is a steep one: with $O\left(M\log MK\right)$ communication cost, regret can be near-optimal; with slightly less communication, regret necessarily deteriorates to the trivial case.
\section{Main Results for Linear Bandits}
\label{sec:liear}
\label{sec:linearelim}
In this section, we summarize the single-agent elimination algorithm (algorithm 12 in \cite{lattimore2018bandit}), and present the Distributed Elimination for Linear Bandits (DELB) protocol. This protocol is designed for the case where the action set $\mathcal{D}$ is fixed, and has communication cost with almost linear dependence on $M$ and $d$. Our results for linear bandits with time-varying action set is presented in Sec.~\ref{sec:linearucb}. For convenience, we assume $ \mathcal{D} $ is a finite set, which is without loss of generality\footnote{When $\mathcal{D}$ is infinite, we can replace $\mathcal{D}$ with an $\epsilon$-net of $\mathcal{D}$, and only take actions in the $\epsilon$-net. If $\epsilon<1/T$, this will not influence the regret. This is a feasible approach, but may not be efficient.}. 
\subsection{Elimination Algorithm for Single-agent Linear Bandit}
The elimination algorithm for linear bandits~\cite{lattimore2018bandit} also iteratively eliminates arms from the initial action set. In phase $l$, the algorithm maintains an active action set $A_l$. It computes a distribution $\pi_l(\cdot)$ over $A_l$ and pulls arms according to $\pi_l(\cdot)$. Suppose $n_l$ pulls are made in this phase according to $\pi_l(\cdot)$. We use linear regression to estimate the mean reward of each arm based on these pulls. Arms with estimated rewards $2^{-l+1}$ lower than the maximum are eliminated at the end of the phase.

 To eliminate arms with suboptimality gap $2^{-l+2}$ in phase $l$ with high probability, the estimation error in phase $l$ needs to be less than $2^{-l}$. On the other hand, to achieve tight regret, the number of pulls we make in phase $l$ needs to be as small as possible. 
 Let $V_l(\pi)=\sum_{x\in \mathcal{A}_l}\pi(x)xx^{\top}$ and $g_l(\pi)=\max_{x\in \mathcal{A}_l}x^{\top}V_l(\pi)^{-1}x$. According to the analysis in \cite{lattimore2018bandit} (Chapter 21), if we choose each arm $x \in \operatorname{Supp}(\pi_l)$ exactly $\left\lceil \pi(x) g_l(\pi) 4^{l} \log \left(\frac{1}{\delta}\right)\right\rceil$ times, the estimation error for any arm $x\in A_l$ is at most $2^{-l}$ with high probability. This means that we need to find a distribution $\pi_l(\cdot)$ that minimizes $g_l(\pi)$, which is equivalent to a well-known problem called $G$-optimal design~\cite{pukelsheim2006optimal}. One can find a distribution $\pi^{*}$ minimizing $ g $ with $g(\pi^{*})=d$. The support set of $\pi^*$ (a.k.a. the core set) has size at most $d(d+1)/2$. As a result, only $\sum_{x \in \operatorname{Supp}(\pi^*_l)}\lceil \pi^*(x)g_l(\pi^*)4^l \log(\frac{1}{\delta})\rceil \leq O(4^{l}d\log(\frac{1}{\delta})+d^2)$ pulls are needed in phase $l$. 

 \subsection{The DELB Protocol}
 \label{sec:delb}
In this protocol, we parallelize the data collection part of each phase by sending instructions to agents in a communication-efficient way. 
In phase $l$, the server and the agents both locally solve the same $G$-optimal design problem on $A_l$, the remaining set of actions. We only need to find a $2$-approximation to the optimal $g(\pi)$. That is, we only need to find $\pi_l(\cdot)$ satisfying $g(\pi) \leq 2d$. On the other hand, we require the solution to have a support smaller than $\xi = 48 d \log \log d$. This is feasible since the Frank-Wolfe algorithm under appropriate initialization can find such an approximate solution for finite action sets $\mathcal{D}$ (see Proposition 3.17~\cite{todd2016minimum}).  After that server assigns arms to agents. Since both the server and agents obtain the same core set by solving $G$-optimal design, the server only needs to send the index among $\xi$ arms to identify and allocate each arm. After pulling arms, agents send the results to the server, who summarizes the results with linear regression. Agents and the server then eliminate low rewarding arms from their local copy of $A_l$.

For convenience, we define $V(\pi)=\sum_{x\in \mathcal{D}}\pi(x)xx^{\top},$ $g(\pi)=\max_{x\in \mathcal{D}}x^{\top}V(\pi)^{-1}x.$

\begin{algorithm}[H]
	\DontPrintSemicolon
	\label{alg:elms_linear}
	$A_1=\mathcal{D}$, $C_1=600$\\
	\For{$l=1,2,3,...$}{
	    \tcc{All agents and server: Solve a G-optimal design problem}
	     Find distribution $\pi_l(\cdot)$ over $A_l$ such that:
		1. its support has size at most $\xi=48d\log\log d$; 2. $g(\pi)\leq 2d$\\
		\tcc{Server: Assign pulls and summarize results}
		Assign $m_l(x)=\lceil C_14^{l}d^2\pi_l(x)\ln MT \rceil$ pulls for each arm $x\in \operatorname{Supp}(\pi_l)$ and wait for results\footnotemark. 
		
		Receive rewards for each arm $x\in A_l$ reported by agents\\
		For each arm in the support of $\pi_l(\cdot)$, calculate the average reward $\mu(x)$\\

		Compute\footnotemark
		$X=\sum_{x\in \operatorname{Supp}(\pi_l)} m_l(x)\mu(x)x$, $V_l=\sum_{x\in \operatorname{Supp}(\pi_l)} m_l(x)xx^{\top}$, $\hat{\theta}=V_l^{-1}X$ \\
		Send $\hat{\theta}$ to all agents\\
		\tcc{All agents and server: Eliminate low-rewarding arms}
		Eliminate low rewarding arms:
		$A_{l+1}=\left\{x\in A_l:\max_{b\in A_l}\langle\hat{\theta},b-x\rangle\leq 2^{-l+1}\right\}$
	}
	\caption{Distributed Elimination for Linear Bandits (DELB)}
\end{algorithm}


\addtocounter{footnote}{-2} 
 \stepcounter{footnote}
\footnotetext{We assign the pulls of each arm to as few agents as possible. See Appendix~\ref{sec:detail_DELB} for detailed description.}
 \stepcounter{footnote}\footnotetext{$V_l$ is always invertible when $A_l$ spans $R^d$. When $A_l$ doesn't span $R^d$, we can always consider $\operatorname{span}(A_l)$ in phase $l$ and reduce the number of dimensions.}

\subsection{Regret and Communication Efficiency of DELB}
We state our results for the elimination-based protocol for distributed linear bandits. The full proof is given in Appendix \ref{sec:proof3}.
\begin{theorem}
\label{thm:linearelim}
The DELB protocol achieves expected regret $O\left(d\sqrt{TM\log T}\right)$ with communication cost $O\left((Md+d\log\log d)\log T\right)$.
\end{theorem}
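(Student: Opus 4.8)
The plan is to mirror the single-agent elimination analysis of \cite{lattimore2018bandit}, but to account carefully for (i) the factor-$2$ slack in the $G$-optimal design, (ii) the $M$-fold parallelism, and (iii) the extra communication of the allocation/reporting step. Throughout I would condition on a single good event $\mathcal{E}$ on which, for every phase $l$ and every arm $x \in A_l$, the linear-regression estimate $\hat\theta$ computed by the server satisfies $|\langle \hat\theta - \theta^*, x\rangle| \le 2^{-l}$. The standard argument (Chapter 21 of \cite{lattimore2018bandit}) shows that with $m_l(x) = \lceil C_1 4^l d^2 \pi_l(x)\ln MT\rceil$ pulls and $g(\pi_l)\le 2d$, a sub-Gaussian / bounded-noise tail bound gives $\pP[\neg\mathcal{E}] \le 1/(MT)$ once $C_1$ is a large enough constant; the only new bookkeeping is that $g(\pi_l)\le 2d$ rather than $d$, which costs a constant factor absorbed into $C_1$. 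I would first state this concentration lemma precisely, then derive from it the two structural consequences used everywhere: on $\mathcal{E}$, the optimal arm $x^*$ is never eliminated (since its estimated gap is at most $2\cdot 2^{-l} = 2^{-l+1}$), and any arm surviving into phase $l+1$ has true suboptimality gap at most $4\cdot 2^{-l}$.

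Next I would bound the regret. On $\mathcal{E}$, an arm pulled in phase $l$ has gap $O(2^{-l})$, and the total number of pulls in phase $l$ across all agents is $\sum_{x\in\operatorname{Supp}(\pi_l)} m_l(x) \le O(4^l d^2 \ln MT + \xi) = O(4^l d^2\ln MT)$ using $|\operatorname{Supp}(\pi_l)|\le\xi = 48d\log\log d$ and $4^l d^2\ln MT \gg \xi$. Hence per-phase regret is $O(2^{-l}\cdot 4^l d^2\ln MT) = O(2^l d^2 \ln MT)$. The number of phases is at most $L = O(\log(MT))$: after $L$ phases the per-pull regret would be below $1/(MT)$, so the remaining horizon contributes $O(1)$ regret; more carefully I would argue that either all agents finish their $T$ steps within $L=O(\log T)$ phases, or the protocol is in its last phase and I bound that phase's regret directly by the number of remaining time steps. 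Summing the geometric series $\sum_{l\le l^*} 2^l d^2\ln MT$ up to the phase $l^*$ where the cumulative pull count $\sum_{j\le l^*}\Theta(4^j d^2\ln MT) = \Theta(4^{l^*}d^2\ln MT)$ first reaches $MT$ gives $2^{l^*} = \Theta(\sqrt{MT/(d^2\ln MT)})$, so the regret telescopes to $O(2^{l^*} d^2 \ln MT) = O(d\sqrt{MT\ln MT})$. On the complement $\neg\mathcal{E}$, regret is at most $MT$ with probability $\le 1/(MT)$, contributing $O(1)$ in expectation. This yields the claimed $O(d\sqrt{TM\log T})$ expected regret.

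For the communication cost I would count per phase: the server sends each agent the indices (among the $\le\xi$ core-set arms) and pull counts assigned to it — since at most $M$ agents are used in a phase and, by the "assign to as few agents as possible" rule, the description of an agent's workload is $O(1)$ core-set indices plus counts when $m_l(x)$ is large, and otherwise the $\xi$ core-set arms are spread over at most $\xi$ agents — giving $O(M + \xi) = O(M + d\log\log d)$ integers sent down; each agent sends back its running sums (a scalar reward aggregate, or at most $O(1)$ numbers), again $O(M + \xi)$ up; and the server broadcasts $\hat\theta\in\mathbb{R}^d$ to all $M$ agents, costing $O(Md)$. So a phase costs $O(Md + d\log\log d)$, and over $L = O(\log T)$ phases the total is $O((Md + d\log\log d)\log T)$, matching the statement. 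Here I rely on public randomness (or the bit-efficient substitute discussed in the paper) being unnecessary, since $G$-optimal design and Frank-Wolfe are deterministic and solved identically by server and agents — this is the key point that keeps the index-passing cheap.

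The main obstacle I anticipate is the load-balancing / indivisibility issue hidden in the footnote "we assign the pulls of each arm to as few agents as possible": I must ensure that the rounding of the $m_l(x)$ across agents does not (a) blow up the per-phase regret by making some agent pull a bad arm far more than its share, nor (b) require more than $O(M)$ extra communication to coordinate. The clean way is to show that the total workload $\sum_x m_l(x) = \Theta(4^l d^2 \ln MT)$ divided among $M$ agents gives each agent $\Theta(4^l d^2\ln(MT)/M)$ pulls up to an additive $O(\max_x m_l(x)/M + 1)$ slack, and since every pulled arm in phase $l$ has gap $O(2^{-l})$ this slack only inflates phase-$l$ regret by a lower-order term. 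I would defer the precise rounding scheme to the appendix (Appendix~\ref{sec:detail_DELB}) and in the main proof simply invoke the bound "each agent makes $\Theta(4^l d^2\ln(MT)/M)$ pulls in phase $l$, all on arms of gap $O(2^{-l})$," which is all the regret computation above actually needs.
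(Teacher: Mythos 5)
Your proposal is correct and follows essentially the same route as the paper's proof: the same per-phase good event $|\langle\hat\theta-\theta^*,x\rangle|\le 2^{-l}$, the same consequence that every arm pulled in phase $l$ has gap $O(2^{-l})$ so that phase-$l$ regret is $O(2^l d^2\log MT)$ with the geometric sum controlled by the constraint $4^{L}d^2\log MT = O(MT)$, and the same per-phase communication count of $O(M+d\log\log d)$ for assignment/reporting plus $O(Md)$ for broadcasting $\hat\theta$ over $O(\log T)$ phases. The only detail you gloss that the paper spells out is that uniformity of the concentration over the whole action set is obtained via an $\epsilon$-net of $\mathcal{D}$ (this is where the $d^2$ in $m_l(x)$ is used), but this does not change the argument.
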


\emph{Proof sketch: }
 In round $l$, the number of pulls is at most $48d \log \log d + C_1 4^l d^2 \log MT$. Based on the analysis for elimination-based algorithm, we can show that the suboptimality gap $\left\langle\theta^{*}, x^*-x\right\rangle$ is at most $2^{-l+2}$ with probability $1-1/MT$ for any arm $x$ pulled in phase $l$. Suppose there are at most $L$ phases, we can prove that
$
\mathbb{E}(REG(T)) \leq \sum_{l=1}^{L}O(d\sqrt{4^ld^2\log ^2TM})\leq O(d\sqrt{TM\log TM}).
$

In each phase, communication cost comes from three parts: assigning arms to agents, receiving average rewards of each arm and sending $\hat{\theta}$ to agents. In the first and second part, each arm $x \in \operatorname{Supp}(\pi_l)$ is designated to as few agents as possible. We can show that the communication cost of these parts is $O(M+d\log \log d)$. In the third part, the cost of sending $\hat{\theta}$ is $Md$. Since $l$ is at most $O(\log T)$, the total communication is
$O\left((Md+d\log\log d)\log T\right).$
\qed


\subsection{Protocol for Linear Bandits with Time-varying Action Set}
\label{sec:linearucb}

In some previous work on linear bandits~\cite{chu2011contextual, abbasi2011improved}, the action set available at timestep $t$ may be time-varying. That is, players can only choose actions from $\mathcal{D}_t$ at time $t$, while regret is defined against the optimal action in $\mathcal{D}_t$. The DELB protocol does not apply in this scenario. To handle this setting, we propose a different protocol DisLinUCB (Distributed LinUCB) based on LinUCB~\cite{abbasi2011improved}. We only state the main results here. Detailed description of the protocol and the proof are given in Appendix \ref{sec:detail_dislinucb} and Appendix \ref{sec:proof4}.

\begin{theorem}
DisLinUCB protocol achieves expected regret of $O\left(d\sqrt{MT}\log^{2}(T)\right)$ with $O\left(M^{1.5}d^3\right)$ communication cost.
\end{theorem}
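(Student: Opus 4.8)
The plan is to analyze DisLinUCB by adapting the single-agent LinUCB regret analysis of \cite{abbasi2011improved} and carefully accounting for the communication-triggering rule based on the log-determinant growth. The protocol is lazy: agents only synchronize when the product $\log\left(\det(\lambda I + W_{t+1,i})/\det(\lambda I + W_{last})\right)\cdot(t-t_{last})$ exceeds the threshold $T\log(MT)/(dM)$. First I would establish the key \emph{confidence ellipsoid soundness} fact: at every timestep and for every agent, the true parameter $\theta^*$ lies in $\mathcal{C}_{t,i}$ with high probability. The subtlety versus the single-agent case is that each agent acts on a \emph{delayed} dataset — the samples collected since the last synchronization by other agents are missing — so I would argue that the confidence radius must be inflated to accommodate the at most roughly $T\log(MT)/(dM)$ "missing" samples (this is exactly what the triggering rule bounds between synchronizations per agent). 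The standard self-normalized martingale bound then gives validity of the (inflated) ellipsoid.

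Next I would bound the regret. Split the horizon into \emph{epochs} between consecutive synchronizations. Within an epoch, the instantaneous regret of agent $i$ at time $t$ is bounded, via optimism, by $2\langle x_{t,i}, \tilde\theta_{t,i} - \theta^*\rangle \le 2\beta_t \|x_{t,i}\|_{\overline V_{t,i}^{-1}}$ where $\beta_t$ is the (inflated) confidence width, which is $\tilde O(\sqrt d)$. Summing $\|x_{t,i}\|^2_{\overline V_{t,i}^{-1}}$ over a single agent over the whole horizon gives the usual $\tilde O(d)$ elliptical-potential bound; but one must also control the \emph{extra} regret incurred because $\overline V_{t,i}$ uses stale data. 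Here the triggering rule is doing the work: inside an epoch, the determinant ratio $\det(\lambda I + W_{t,i})/\det(\lambda I + W_{last})$ stays bounded (otherwise the product would already have exceeded the threshold given that $t - t_{last}\ge 1$), so the stale matrix $\overline V_{last}$ is spectrally comparable to the current $\overline V_{t,i}$ up to a controlled multiplicative factor, and the per-epoch regret contribution is controlled. Summing over all $M$ agents and all epochs and using $\sqrt{MT}$-type Cauchy–Schwarz aggregation yields the $O\big(d\sqrt{MT}\log^2 T\big)$ bound; one log factor comes from the confidence width $\beta$, the other from the number of epochs / the $\log(MT)$ inside the threshold.

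For the communication cost, I would bound the number of synchronization rounds. Each synchronization is triggered by some agent whose local log-determinant has grown by a factor that, combined with the elapsed time, crosses the threshold. Using the fact that $\det(\lambda I + W_{T,i}) \le (\lambda + T/d)^d$, the total log-determinant increase over the whole run is $O(d\log T)$. A counting argument — partitioning epochs into those where the determinant grew by at least a constant factor (there are only $O(d\log T)$ of these, amortized across agents) versus those triggered by long elapsed time $t - t_{last}$ being large (there can be at most $O(dM T/(T\log(MT))) = O(dM/\log(MT))$-ish of these per... ) — combined with balancing the two cases against the threshold $T\log(MT)/(dM)$, gives $O\big(\sqrt{M}\,d\big)$-type bound on the number of synchronization rounds; each round costs $O(Md^2)$ (every agent sends its accumulated $W$-update, an $O(d^2)$ object, plus $U$), for a total of $O(M^{1.5}d^3)$. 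The precise bookkeeping to get exactly $M^{1.5}$ rather than $M^2$ is the delicate part.

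The hard part will be the regret analysis under delayed/stale covariance information: making rigorous the claim that the triggering rule keeps the stale matrix $\overline V_{last}$ within a constant (or logarithmic) spectral factor of the live matrix $\overline V_{t,i}$ \emph{for the agent that is about to trigger}, and then propagating that to all agents within the epoch (agents that are far from triggering have even less determinant growth, so the comparison is only easier for them). Equally delicate is choosing the threshold $T\log(MT)/(dM)$ so that the two sources of cost — extra regret from staleness and number of communication rounds — are simultaneously balanced; I expect the $M^{1.5}$ scaling to emerge precisely from this balance (roughly, $\#\text{rounds}\times(\text{per-round cost }Md^2)$ against a round count that scales like $\sqrt{M}d$). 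I would assume throughout the standard elliptical potential lemma and the self-normalized tail inequality, and the near-optimality target $\tilde O(d\sqrt{MT})$ stated in the problem setting.
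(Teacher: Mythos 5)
Your overall architecture (epochs between synchronizations, optimism plus the elliptical potential lemma, counting synchronization rounds by balancing determinant-triggered against time-triggered epochs, each round costing $O(Md^2)$) matches the paper's proof, and your communication count is essentially the paper's argument: with $\alpha=\sqrt{DT/R}$, $R=O(d\log MT)$, there are at most $T/\alpha + R\alpha/D = O(\sqrt{M}d)$ epochs, giving $O(M^{1.5}d^3)$. However, your regret analysis has a genuine gap. You claim that \emph{inside every epoch} the stale matrix is ``spectrally comparable to the current $\overline V_{t,i}$ up to a controlled multiplicative factor'' because otherwise the trigger ``would already have exceeded the threshold given that $t-t_{last}\ge 1$.'' This is false: the threshold is $D=T\log(MT)/(dM)$, which is huge, so with $t-t_{last}\ge 1$ the log-determinant ratio within an epoch can be as large as $D$ before any agent triggers; the trigger rule only gives $\log(\det\text{-ratio})\le D/(t-t_{last})$, which is a constant only for long epochs. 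Moreover, the comparison that the regret bound actually needs is not between an agent's live matrix and its own stale snapshot, but between the agent's local matrix $\overline V_{t,i}$ and the pooled matrix $\tilde V_{t,i}$ of an imaginary single agent holding \emph{all} $M$ agents' samples; no single agent's local trigger controls the pooled within-epoch growth, since the other $M-1$ agents' fresh samples also enter it.

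The paper resolves this with a good/bad epoch dichotomy that your regret sketch is missing (you use a version of it only for communication). Since $\log\bigl(\det V_P/\det V_0\bigr)\le R=O(d\log MT)$, all but at most $R$ epochs are ``good,'' meaning the pooled determinant grows by at most a factor $2$ across the epoch; in good epochs $\det\tilde V_{t,i}/\det\overline V_{t,i}\le 2$, so the single-agent elliptical-potential argument transfers and gives $O\bigl(d\sqrt{MT}\log MT\bigr)$. In each bad epoch one instead uses the trigger condition directly: for all but one agent, $n\log\bigl(\det\overline V_{t_0+n,i}/\det V_{last}\bigr)<D$, so by Cauchy--Schwarz the epoch's regret is $O\bigl(\sqrt{d\log(T/\delta)}\,\bigr)M\sqrt{D}$, and there are at most $R$ such epochs; with $D=T\log(MT)/(dM)$ the bad-epoch total is $O\bigl(d\sqrt{MT}\log^2 T\bigr)$, matching the good-epoch term. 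Without this split, your ``constant spectral factor in every epoch'' step fails and the bound does not go through. A minor further point: inflating the confidence radius to account for ``missing'' samples is unnecessary --- the ellipsoid built from whatever samples agent $i$ actually possesses is valid by the self-normalized bound of Abbasi-Yadkori et al.\ plus a union bound over agents; staleness costs statistical efficiency (handled by the determinant-ratio comparison above), not validity.
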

Although the regret bound is still near-optimal, the communication cost has worse dependencies on $M$ and $d$ compared to that of DELB. 

\bibliographystyle{plain}    
\bibliography{ref.bib}

\clearpage
\setcounter{section}{0}
\appendix
\renewcommand{\appendixname}{Appendix~\Alph{section}}

\section{Detailed Description of DEMAB}
\label{sec:detail_DEMAB}
In this section, we give a detailed description of the DEMAB protocol and some subroutines used in the protocol.

\begin{algorithm}[H]
	\caption{Distributed Elimination for Multi-armed Bandits (DEMAB)}
	\DontPrintSemicolon
	$D=\lceil T/MK\rceil$, $C_2=67/3$, $l_0=\lfloor \log_4\left(\frac{D}{C_2K\log (MKT)}\right)\rfloor$, $m_l=\lceil 4^{l+3}\log (MKT)\rceil$\\
	\tcc{Stage 1: Separate Burn-in}
    \For{agent $i=1,...,M$}{
		$A^{(i)}=$\texttt{Eliminate}($[K]$,$D$)
	}
    \tcc{Switching: Random Allocation}
	Generate public random numbers $r_1,...,r_K$ in $[M]$\\
    $B^{(i)}_{l_0+1}=\{a\in A^{(i)}|r_a=i\}$\\
	\tcc{Stage 2: Distributed Elimination}
	
	\For{$l=l_0+1,...$}{
	    \eIf{\texttt{Centralize} has not been called}{
	        \tcc{distributed mode}
	        Agents send $n_{l}^{(i)}=\left|B_{l}^{(i)}\right|$ to server, $N_{l}=\sum_i n_{l}^{(i)}$, $N_{max}=\max_i n_{l}^{(i)}$\\
	        \If{$N_l\le M$}{\texttt{Centralize}, go to line 22}
	        \If{$\Vec{n}_{l}$ is not balanced}{\texttt{Reallocate}}
	        Server sends $N_{max}$ to all agents\\
		    \For{agent $i=1,...,M$}{
		        Pull each $a\in B_l^{(i)}$ for $m_l$ times, denote average as $\hat{u}_l(\cdot)$\\
		        Pull other arms in round-robin for $(N_{max}-\left|B_l\right|)m_l$ times\\
		        Send $(\arg\max_{a'} \hat{u}_l(a'), \max_{a'} \hat{u}_l(a'))$ to server
		    }
            Server receives $(a^*_{j,l},u^*_{j,l})$ from agent $j$, and sends $u^*_l=\max_{j}u^*_{j,l}$ to every agent\\
		        \For{agent $i=1,...,M$}{
		            Elimination: $B^{(i)}_{l+1}=\left\{i\in B^{(i)}_l:\hat{u}_l(a)+2^{-l}\geq u^*_l\right\}$\\
		        }
	    }{
	    \tcc{centralized mode}
		        Server assigns arms in $B_l$ to agents evenly and schedules $m_l$ pulls for each arm\\
		        Agents pull arms as required by the server, and report the average for the pulled arm\\
		        Server calculates $\hat{u}_l(a)$, average reward for $m_l$ pulls in this phase, for each arm $a\in B_l$\\
		        Elimination: $B_{l+1}=\left\{a\in B_l: \hat{u}_l(a)+2^{-l}\geq \max_{j\in B_l}\hat{u}_l(j)\right\}$\\
	    }
	}
\end{algorithm}

\textbf{Eliminate:} \texttt{Eliminate} executes the single-agent elimination algorithm. In this function, each agent runs the single-agent elimination algorithm for $ D $ time steps, then return the remaining arms.
\begin{algorithm}[H]
    \label{alg:elim}
    \caption{\texttt{Eliminate}}
    \KwIn{A set of arms $A_1$, time step $ D $.}
    \For{$l=1,...$}{
        \For{$a\in A_l$}{
            Pull arm $a$ for $m_l=\lceil 4^{l+3}\log(MKT)\rceil$ times, denote average reward as $\hat{\mu}_{l}(a)$\\
            If time step $D$ is reached, go to line 6\\
            Elimination: $A_{l+1}=\left\{a\in A_l: \hat{\mu}_l(a)>\max_{k\in A_l}\hat{\mu}_l(k)-2^{-l}\right\}$\\
        }
    }
    Return $A_l$
\end{algorithm}

\textbf{Reallocate:} In \texttt{Reallocate}, the server announces the average number of arms; agents with more-than-average arms then donate surplus arms to the server; the server then distributes the donated arms to the other agents, so that every agent has nearly the same number of arms. After calling \texttt{Reallocate}, $\Vec{n}_{l}$ becomes balanced again. The function contains the following two parts: One running on the server, and the other running on each agent.

\begin{algorithm}[H]
    \label{alg:server_real}
    \caption{\texttt{Reallocate} for Server}
    
    \KwIn{ $n_l^{(1)}$,...,$n_l^{(M)}$}
    $\bar{n}=\lfloor \sum_{i=1}^M n_l^{(i)} /M \rfloor$\\
    Send ``reallocation'', $\bar{n}$ to every player\\
    Receive a set of $n_l^{(i)}-\bar{n}$ arms, $A'_i$, from player $i$ if $n_l^{(i)}>\bar{n}$; $A_{temp}=\bigcup_{i}A'_i$\\
    \For{$i=1,...,M$}{
        If $n_l^{(i)}<\bar{n}$, send $n_l^{(i)}-\bar{n}$ arms in $A_{temp}$ to player $i$, and remove them from $A_{temp}$
    }
    Send the remaining arms in $A_{temp}$ to players $1$,...,$|A_{temp}|$ (one each)
\end{algorithm}

\begin{algorithm}[H]
	\caption{\texttt{Reallocate} for Agents}
    \If{Receive ``reallocation'', $\bar{n}$}{
        \eIf{$n_l^{(i)}>\bar{n}$}{
            Pick a subset of $n_l^{(i)}-\bar{n}$ arms, $A'_i$, from $B_{l}^{(i)}$, and send them to server\\
			$B_{l}^{(i)}=B_{l}^{(i)}\setminus A'_i$
        }{
        Wait until receiving $A'_i$ from server, $B_{l}^{(i)}=B_{l}^{(i)}\cup A'_i$
        }
    }
\end{algorithm}


\textbf{Centralize:} When the number of arms drops below $M$, the subroutine \texttt{Centralize} is called, in which agents send their local copy of remaining arms, $B_l^{(i)}$, to the server, and server receives $B_l=\bigcup_{i\in [M]} B_l^{(i)}$.

\begin{algorithm}[H]
    \caption{\texttt{Centralize}}
    \For{agent $i=1,...,M$}{Send $B_l^{(i)}$ to server}
    Server receives $B_l^{(i)}$ from agent $i$, and calculates $B_{l}=\bigcup_{i}B_l^{(i)}$\\
\end{algorithm}

\textbf{Assignment Strategy:} In centralized mode, server assigns arms to agents in the following way. Let $N_l=|B_l|$. If $M$ is exactly divisible by $N_l$, for each arm in $B_l$, server asks $M/N_l$ separate agents to play it for $\lceil m_l N_l/M \rceil$ times. If not, we allocate pulls to agents in the following way: Let $p_l=\lceil m_lN_l/M\rceil$ denote the average pulls each agent needs to perform. Our assignment starts from the arm with the smallest index $a_{k_1}$ and agent 1. For arm $a_{k_j}$ and agent $i$, if agent $i$ has been assigned $p_l$ pulls, we turn to agent $i+1$. If we have finished allocating $m_l$ pulls for arm $a_{k_j}$, we continue designating arm $a_{k_{j+1}}$. The assignment is finished until all pulls are scheduled to agents. 


\section{Proof of Theorem 1}
\label{sec:proof_1}
In this section, we give a full proof of Theorem 1, which bounds the total regret and communication cost of the DEMAB protocol. In the analysis below, we will use $\Tilde{B}_{l}$ to represent $B_{l}^{(1)}\cup \cdots \cup B_{l}^{(M)}$ (in the distributed mode) or $B_l$ (in the centralized mode). It refers to the set of remaining arms at the start of the $l$-th phase at stage 2, either held separately by the agents or held by the server. 

Suppose that the protocol terminates when $l=l'$.
We also let $N_M(a,t)=\sum_{j=1}^{t}\sum_{i=1}^M \mathbb{I}\left[a_{i,j}=a\right]$ be the number of times arm $a$ is pulled before time step $t$. Without loss of generality, we assume that arm $1$ is the best arm, and define $\Delta_k := \mu(1)-\mu(k)$.

We first state a few facts and lemmas.
\begin{fact}
We state some facts regarding the execution of the algorithm.
\begin{enumerate}
    \item At line $6$ of the server's part in \texttt{Reallocate}, $|A_{temp}|<M$;
    \item After \texttt{Reallocate} is called, $\langle \left|B_{l}^{(1)}\right|,\cdots,\left|B_{l}^{(M)}\right|\rangle$ is balanced;
    \item For any player $i$, the number of completed phases in stage 1 is at least $l_0=\lfloor \log_4\left(\frac{D}{C_2K\log MKT}\right)\rfloor$;
    \item The number of phases at stage 2 is at most $L=4+1.5\log(MK)=O\left(\log (MK)\right)$.
\end{enumerate}
\end{fact}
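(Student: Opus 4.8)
The plan is to establish the four items one at a time, each by inspecting a different piece of the protocol: items~1--2 concern the \texttt{Reallocate} subroutine, item~3 concerns \texttt{Eliminate} during Stage~1, and item~4 concerns how the Stage-1 burn-in restricts the number of Stage-2 phases. For items~1 and~2, I would first observe that \texttt{Reallocate} neither creates nor destroys arms and keeps the local sets $B_l^{(i)}$ pairwise disjoint --- every donated arm leaves exactly one agent's set, passes through $A_{temp}$, and is later inserted into exactly one recipient's set --- so $\sum_i |B_l^{(i)}|$ stays equal to $N_l$ throughout. After the server's redistribution loop, every agent that donated or was under-full holds exactly $\bar n = \lfloor N_l/M \rfloor$ arms, so the number of arms still in $A_{temp}$ is $\sum_i n_l^{(i)} - M\bar n = N_l - M\lfloor N_l/M\rfloor = N_l \bmod M < M$, which is item~1. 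These fewer-than-$M$ leftover arms are then handed out one per agent in line~6, so every agent ends with $\bar n$ or $\bar n+1$ arms; and since \texttt{Reallocate} is only ever reached inside the distributed-mode branch, i.e.\ when $N_l > M$ and hence $\bar n \ge 1$, we have $\bar n + 1 \le 2\bar n$, so $\langle |B_l^{(1)}|,\dots,|B_l^{(M)}|\rangle$ is balanced, which is item~2.

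For item~3, if $l_0 \le 0$ there is nothing to prove, so assume $l_0 \ge 1$. In \texttt{Eliminate}, phase $l$ pulls each of the at most $K$ active arms $m_l = \lceil 4^{l+3}\log(MKT)\rceil$ times, so an agent uses at most $K\sum_{l=1}^{l_0} m_l$ time steps to complete phases $1,\dots,l_0$, and this ratio-$4$ geometric sum is of order $K\cdot 4^{l_0}\log(MKT)$. The constant appearing in $l_0 = \lfloor \log_4(3D/(67K\log(MKT)))\rfloor$ is precisely chosen so that this quantity does not exceed $D = \lceil T/MK\rceil$, and hence all of phases $1,\dots,l_0$ finish within the agent's $D$-step Stage-1 budget. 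The only work here is to track the universal constants faithfully (the factor $4^3$ inside $m_l$, the $4/3$ from summing the geometric series, and the ceilings).

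For item~4, I would sandwich the range of Stage-2 phase indices between two powers of $4$. The definition of $l_0$ together with $D \ge T/(MK)$ gives the lower bound $4^{l_0} = \Omega(T/(MK^2\log(MKT)))$. For a matching upper bound, let $l'$ be the last Stage-2 phase: in every phase $l$, distributed or centralized, at least $m_l$ arm pulls are performed (in distributed mode every one of the $M$ agents makes at least $m_l$ pulls), and all Stage-2 phases jointly consume at most $MT$ pulls; since the $m_l$ are increasing, the last completed phase satisfies $m_{l'} \le MT$, i.e.\ $4^{l'} = O(MT/\log(MKT))$ --- if the horizon truncates the very last phase, apply this to $l'-1$ and absorb an additive constant. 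Dividing the two bounds yields $4^{l'-l_0} = O(M^2K^2)$, so the number of Stage-2 phases $l'-l_0$ is $O(\log(MK))$; carrying the explicit constants through gives the claimed bound $4 + 1.5\log(MK)$, where the additive $4$ and the coefficient $1.5$ rather than $1$ soak up the floor in $l_0$ and the possibly-incomplete last phase.

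I expect item~4 to be the main obstacle, since it is the only one requiring a genuine argument rather than a direct reading of the pseudocode: one must see that the separate burn-in raises $l_0$ to roughly $\log_4(T/(MK^2\log(MKT)))$ --- so the elimination threshold entering Stage~2 is already $O(2^{-l_0})$ --- and that a remaining total budget of $MT$ pulls then permits only $O(\log(MK))$ further halvings of that threshold. Items~1--3 are essentially counting and geometric-series bookkeeping, where the sole delicacy is not dropping the universal constants.
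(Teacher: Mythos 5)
Your proposal is correct and follows essentially the same route as the paper's proof: items 1--2 by counting arms entering and leaving $A_{temp}$ (giving $N_l \bmod M < M$ and final sizes in $\{\bar n,\bar n+1\}$ with $\bar n\ge 1$), item 3 by bounding the geometric sum of Stage-1 phase lengths by $C_2K4^{l_0}\log(MKT)\le D$, and item 4 by playing the per-phase lower bound of $m_l$ pulls against the total budget and the lower bound $4^{l_0}=\Omega\bigl(T/(MK^2\log(MKT))\bigr)$ to get $4^{l-l_0}=O(M^2K^2)$. The only cosmetic difference is that you use the total budget $MT$ where the paper writes $4^{l+3}\log(MKT)\le T$; both yield the stated $L=4+1.5\log(MK)$.
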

\begin{proof}
    1. Let $S^{+}=\{i:n_l^{(i)} \geq \bar{n}\}$ and $S^{-}=\{i:n_l^{(i)} < \bar{n}\}$.  At line 3 of server's \texttt{reallocate} code, server receives $\sum_{i\in S^{+}}\left(n_l^{(i)}-\bar{n}\right)$ arms. At line 5,  $\sum_{i\in S^{-}}(\bar{n}-n_l^{(i)})$ arms are removed. So at line 6 $|A_{temp}|=\sum_{i\in S^{+}}(n_l^{(i)}-\bar{n})-\sum_{i\in S^{-}}(\bar{n}-n_l^{(i)})=\sum_{i\in [M]}n_l^{(i)}-M\lfloor\sum_{i\in [M]}n_l^{(i)}/M\rfloor<M$.
    
    2. Let $n_l^{(i)}=\left|B_{l}^{(i)}\right|$. If $\max_i n_l^{(i)} \leq2\min_i n_l^{(i)}$, the reallocation procedure will do nothing, and $\langle n_l^{(1)},...,n_l^{(M)} \rangle$ is by definition balanced. If $\max_i n_l^{(i)} > 2\min_i n_l^{(i)}$, then at the end of the reallocation procedure, every player has a new set of arms $B_{l}^{(i)}$ such that $\bar{n} \le \left|B_{l}^{(i)}\right| \le \bar{n}+1$. This implies that the number of arms is balanced, since when reallocation is called, $\sum_i n_l^{(i)}\geq M$.
    
    3. The length of the $l$-th phase at stage $1$ is at most $K\lceil 4^{l+3}\log(MKT)\rceil$. After $l$ phases,  the number of timesteps $D_l$ satisfies $ D_l < \frac{64}{3}K\left(4^{l}-1\right)\log(MKT) + Kl < \frac{67}{3}K \cdot 4^{l}\log(MKT)$. Set $C_2=67/3$. We can see that the number of phases at stage 1 is at least $l_0 =\lfloor \log_4\left(\frac{D}{C_2K\log MKT}\right)\rfloor$.
    
    4. Suppose that phase $l$ is completed. Since at least $4^{l+3}\log(MKT)$ pulls are made in phase $l$, we can show that $4^{l+3}\log(MKT)\le T$. On the same time, $4^{l_0}\ge \frac{T}{C_2MK^2\log (MKT)}$. Therefore the number of phases at stage 2 satisfies $$l-l_0 \le \lceil\log_4\left(C_2M^2K^2\right)\rceil \le 4+1.5\log(MK) = L = O\left(\log(MK)\right).$$
\end{proof}

Via a direct application of Hoeffding's inequality and union bound, we have the following lemma.
\begin{lemma}
\label{heoffding}

Let $l_D$ be the maximum number of phases for all agents at stage 1. Denote the average rewards computed by agent $i$ in phase $l$ of stage 1 be $\hat{\mu}_{i,l}(\cdot)$. With probability at least $ 1 - 2l_D/(MKT) $, for all phases $l \leq l_D$, any agent $ i $, any arm $ a \in [K] $, 

$$ |\hat{\mu}_{i, l}(a) - \mu(a)| \leq 2^{-l-1} $$
\end{lemma}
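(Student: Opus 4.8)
The plan is to prove Lemma~\ref{heoffding} by a direct application of Hoeffding's inequality to each fixed (agent, phase, arm) triple, followed by a union bound. First I would fix an agent $i$, a phase $l \le l_D$, and an arm $a \in [K]$ that is active in phase $l$ for agent $i$. In phase $l$ of stage~1, agent $i$ pulls arm $a$ exactly $m_l = \lceil 4^{l+3}\log(MKT)\rceil \ge 4^{l+3}\log(MKT)$ times, and $\hat\mu_{i,l}(a)$ is the empirical average of these i.i.d.\ samples drawn from $\mathcal{P}_a$, which is supported on $[0,1]$ with mean $\mu(a)$. Hoeffding's inequality then gives
\[
\pP\left(|\hat\mu_{i,l}(a) - \mu(a)| > 2^{-l-1}\right) \le 2\exp\left(-2 m_l \cdot 4^{-l-1}\right) \le 2\exp\left(-2 \cdot 4^{l+3}\log(MKT)\cdot 4^{-l-1}\right) = 2\exp\left(-32\log(MKT)\right),
\]
which is at most $2/(MKT)$ (in fact far smaller; the slack is deliberate and will be reused for the confidence guarantees in stage~2).

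Next I would take a union bound over all relevant triples. For each agent $i$ there are at most $l_D$ phases, and in each phase at most $K$ arms are active, and there are $M$ agents; however, the statement as written quantifies over all arms $a\in[K]$, so the cleanest accounting is to bound the total number of (agent, phase, arm) triples we care about. One way: there are $M$ agents, at most $l_D$ phases, and $K$ arms, giving $M l_D K$ triples and a failure probability of at most $2 M l_D K /(MKT) = 2 l_D / T$. To match the stated bound of $2 l_D/(MKT)$ one instead observes that in phase $l$ of stage~1 the relevant empirical average is only defined for arms still active, but more to the point the samples being concentrated are those actually drawn, so a tighter count suffices; alternatively one simply notes that the per-triple bound $2\exp(-32\log(MKT))$ is so small that multiplying by $MKl_D \le M K T$ triples still leaves the total failure probability below, say, $2l_D/(MKT)$ after absorbing the enormous exponential slack. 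I would phrase it as: since $2\exp(-32\log(MKT)) \le 2 (MKT)^{-32} \le 2(MKT)^{-2}$, summing over at most $MKl_D$ triples yields total failure probability at most $2 MK l_D (MKT)^{-2} \le 2 l_D/(MKT)$, using $l_D \le T$ and $MKT \ge 1$.

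There is essentially no hard step here — the lemma is a routine concentration statement and the excerpt itself flags it as "a direct application of Hoeffding's inequality and union bound." The only point requiring a little care is the bookkeeping that turns the per-triple bound into the claimed $2l_D/(MKT)$: one must be slightly careful about how many triples are unioned over, and exploit the fact that $m_l$ was chosen with a large constant ($4^{l+3}$ rather than $4^l$) precisely so that the exponent $32\log(MKT)$ dwarfs the $\log(MKT)$ needed to beat the union bound. I would therefore state explicitly that the factor $4^3 = 64$ in $m_l$ provides the polynomial-in-$MKT$ slack that makes the union bound trivial, and that the same slack is what later lets us also control stage-2 confidence intervals and the load-balancing failure events. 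No other obstacle is anticipated.
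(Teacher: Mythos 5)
Your proposal is correct and follows essentially the same route as the paper's proof: Hoeffding's inequality for each fixed (agent, phase, arm) triple using $m_l \ge 4^{l+3}\log(MKT)$ pulls, followed by a union bound over the at most $MKl_D$ triples, with the per-triple probability (at most $2(MKT)^{-2}$, in fact far smaller) comfortably absorbing the union bound to give failure probability at most $2l_D/(MKT)$. The only cosmetic difference is that the paper uses the weaker exponent $\exp\{-\tfrac{1}{2}m_l 4^{-l-1}\}$ rather than the full Hoeffding constant, and your intermediate detour through the looser $2l_D/T$ count is unnecessary since your final accounting already closes the gap.
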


\begin{proof}
Observe that $ l_D = O \left( \log_4 \left( D / (\log MKT) \right) \right)  $ and $l_D \geq l_0$ by Fact 1.3. 

For any agent $i$, any phase $l \leq l_D$, denote the empirical mean for arm $a$ in phase $l$ by $ \hat{u}_{i, l}(a) $.  By a direct application of Hoeffding's bound and union bound, we can observe that 
for any fixed $i, l, a$, by Hoeffding's bound, 
\begin{align*}
\Pr\left[\left|\Hat{\mu}_{i, l}(a)-\mu(a)\right|>2^{-l-1}\right]\le 2\exp\left\{-\frac{1}{2}m_l\cdot 4^{-l-1}\right\}\le \frac{2}{(MKT)^2}.
\end{align*}

Take a union bound for agents $ i \in [M] $, arms $ a \in [K] $, and phases $ l \leq l_D $, the desired result is proved.
\end{proof}

\begin{lemma}
At the end of stage 1, the following holds with probability $1-2l_D/(MKT)$: 
\begin{enumerate}
    \item $\forall i\in [M]$, for all $l \leq l_D$, $1\in A_{l}^{(i)}$  (If $A_l^{(i)}$ exists);
    \item $\forall i\in [M]$, for all $l \leq l_D$, $\forall a\in A_{l}^{(i)}$, $\mu(a)\geq \mu(1)-2^{-l+1}$  (If $A_l^{(i)}$ exists);
    \item $\exists i\in [M], 1\in B^{(i)}_{l_0+1}$;
    \item $\forall i\in [M], \forall a\in B^{(i)}_{l_0+1}$, $\mu(a)\geq \mu(1)-2^{-l_0+1}$;
    \item $N_M(i,D) \le C_5M\log(MKT)/\Delta_i^2$;
\end{enumerate}
Denote the event that the above holds by $\Lambda_2$.
\end{lemma}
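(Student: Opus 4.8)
The plan is to condition on the clean event from Lemma~\ref{heoffding} (which holds with probability $1-2l_D/(MKT)$) and derive all five claims from the deterministic behavior of the single-agent \texttt{Eliminate} subroutine run by each agent during stage~1. So throughout, assume that for every agent $i$, every phase $l\le l_D$, and every arm $a\in[K]$, we have $|\hat\mu_{i,l}(a)-\mu(a)|\le 2^{-l-1}$; call this event $E$. All of items 1--4 will then be purely deterministic consequences of $E$, and item 5 will follow from a counting argument also on $E$.

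First I would prove items 1 and 2 simultaneously by induction on $l$. For item 1: on $E$, arm~$1$ is never eliminated, because in any phase $l$ the elimination rule keeps $a$ iff $\hat\mu_{i,l}(a)>\max_{k}\hat\mu_{i,l}(k)-2^{-l}$, and $\hat\mu_{i,l}(1)\ge \mu(1)-2^{-l-1}\ge \mu(k)-2^{-l-1}\ge \hat\mu_{i,l}(k)-2^{-l}$ for every active $k$; hence $1\in A^{(i)}_l$ for all $l$ at which the phase is reached. For item 2: if $a$ survives into $A^{(i)}_l$ then it passed the elimination test in phase $l-1$, i.e. $\hat\mu_{i,l-1}(a)>\max_k\hat\mu_{i,l-1}(k)-2^{-l+1}\ge \hat\mu_{i,l-1}(1)-2^{-l+1}$, and combining with the confidence bounds $|\hat\mu_{i,l-1}(a)-\mu(a)|\le 2^{-l}$ and $|\hat\mu_{i,l-1}(1)-\mu(1)|\le 2^{-l}$ gives $\mu(a)\ge \mu(1)-2^{-l+1}-2^{-l}-2^{-l}=\mu(1)-2^{-l+2}$; since $A^{(i)}_1=[K]$ trivially satisfies the bound for $l=1$, a slightly careful bookkeeping of the constant (using that the relevant confidence radius at the phase where $a$ was last tested is $2^{-(l-1)-1}=2^{-l}$) yields exactly $\mu(a)\ge\mu(1)-2^{-l+1}$ as stated. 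Items 3 and 4 are then immediate: by Fact~1.3 every agent completes at least $l_0$ phases in stage~1, so $A^{(i)}_{l_0+1}$ exists; item~1 with $l=l_0+1$ gives $1\in A^{(i)}_{l_0+1}$ for all $i$, and since $r_1$ is a fixed value in $[M]$, the agent $i^\star=r_1$ has $1\in B^{(i^\star)}_{l_0+1}$, proving item~3. Item~4 follows because $B^{(i)}_{l_0+1}\subseteq A^{(i)}_{l_0+1}$, and item~2 at $l=l_0+1$ bounds every arm in $A^{(i)}_{l_0+1}$ by $\mu(a)\ge \mu(1)-2^{-l_0}$, hence certainly $\ge \mu(1)-2^{-l_0+1}$.

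For item 5 I would count the total number of pulls of a fixed suboptimal arm $i$ across all $M$ agents during stage~1. By item 2, arm $i$ is active in agent $j$'s run only during phases $l$ with $2^{-l+1}>\Delta_i$, i.e. $l< l_i := \log_2\Delta_i^{-1}+1$; in each such phase it is pulled $m_l=\lceil 4^{l+3}\log(MKT)\rceil$ times. Summing the geometric series over $l\le l_i$ gives $O(4^{l_i}\log(MKT))=O(\Delta_i^{-2}\log(MKT))$ pulls per agent, hence $O(M\Delta_i^{-2}\log(MKT))$ over all agents; choosing the constant $C_5$ appropriately yields $N_M(i,D)\le C_5 M\log(MKT)/\Delta_i^2$. (One must also handle the trivial arm $i=1$, for which there is nothing to prove, and note that the bound is vacuous if $\Delta_i$ is so small that the bound exceeds $MD$.)

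The main obstacle I anticipate is tracking the constants in item~2 carefully enough to land on the exact claimed bound $2^{-l+1}$ rather than a weaker $2^{-l+2}$ — this requires noticing that the arm's survival test in phase $l-1$ uses the radius $2^{-l}$ (from $m_{l-1}$), and that the clean event from Lemma~\ref{heoffding} is stated with radius $2^{-l-1}$ in phase $l$, so the bookkeeping must align the phase indices precisely; the induction base case ($l=1$, $A^{(i)}_1=[K]$) must be checked to avoid an off-by-one. Everything else is routine once event $E$ is fixed.
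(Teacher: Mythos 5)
Your proposal is correct and follows essentially the same route as the paper's own proof: condition on the clean event of Lemma~\ref{heoffding}, derive items 1--4 deterministically from the elimination rule (arm 1 never fails the test; surviving arms have small gap; then specialize to $l\ge l_0$ and the random allocation $r_1$), and prove item 5 by bounding the last phase in which a gap-$\Delta_i$ arm can remain active, summing the geometric series $\sum_l m_l$, and multiplying by $M$. The one wrinkle you flag — whether the bound for $A_l^{(i)}$ comes out as $2^{-l+1}$ or $2^{-l+2}$ depending on whether $A_l^{(i)}$ is read as the pre- or post-elimination set of phase $l$ — is present in the paper's own argument as well and is immaterial for items 3--5 and the downstream regret bound.
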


\begin{proof}
These results are direct implications of lemma \ref{heoffding}.

1. Notice that 
    $$ \hat{\mu}_{i, l}(1) \geq \mu(1) - 2^{-l-1} \geq \mu(a) - 2^{-l-1} \geq \hat{\mu}_{i, l}(a) - 2^{-l}.$$
    with probability $1-2l_D/(MKT)$. Thus, arm 1 will never be eliminated throughout the first $l_D$ phases.
    
2. For any $a \in A_l^{(i)} $,
    $$\hat{\mu}_{i, l}(a) \geq \max_{k\in A_l^{(i)}}\hat{\mu}_{i, l}(k)-2^{-l} \geq \hat{\mu}_{i, l}(1) - 2^{-l}$$
    with probability $1-2l_D/(MKT)$, which means
    $$ \mu(a) \geq \hat{\mu}_{i, l}(a) - 2^{-l-1} \geq \hat{\mu}_{i, l}(1) - 2^{-l} - 2^{-l-1} \geq \mu(1) - 2^{-l+1}.$$

3. Let $A^{(i)}$ denote the remaining arms at the end of stage 1 for agent $i$. From 1 we know that $1 \in A^{(i)}$ for any $i \in [M]$ with probability $1-2l_D/(MKT)$. So at line 5 in protocol 1, arm 1 will be assign to agent $r_a$. 

4. Let $A^{(i)}$ denote the remaining arms at the end of stage 1 for agent $i$. From 2 we know that $\mu(a)\geq \mu(1)-2^{-l^{(i)}+1}$ for any arm in $\bigcup_{i \in[M]} A^{(i)}$ with probability $1-2l_D/(MKT)$. Here $l^{(i)}$ denotes the number of phases for agent $i$ at stage 1. Since $B_{l_{0}+1}=\bigcup_{i \in[M]} B_{l_0+1}^{(i)}$ is a subset of $\bigcup_{i \in[M]} A^{(i)}$ and $l_i \geq l_0$, we conclude that $\forall a \in B_{l_{0}+1}$, $\mu(a)\geq \mu(1)-2^{-l^{(i)}+1}\geq \mu(1)-2^{-l_0+1}$.

5. Let $l_a=\lceil \log_2 \Delta_a^{-1}\rceil$. Assume that $a\in A^{(i)}_{l_a}$. Since $\Delta_a>2^{-l_a}$, $a\notin A^{(i)}_{l_a+1}$. Therefore, the total number of times arm $i$ is pulled for any agent $i$ at stage 1 is
\begin{align*}
\sum_{l=1}^{l_a}2m_l\le \frac{C_5\log(MKT)}{\Delta_a^2}.
\end{align*}
Multiplying by $M$ proves the assertion.
\end{proof}

\begin{lemma}
\label{Hoeffding2}
Denote the following event by $\Lambda_1$: at stage 2, for any $l>l_0$, $a\in B_l$,
$$\left|\Hat{u}_l(a)-\mu(a)\right|<2^{-l-1}.$$
Then $\Pr\left[\Lambda_1\right]\ge 1-2L/(M^2KT)$.
\end{lemma}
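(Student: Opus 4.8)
\textbf{Proof proposal for Lemma~\ref{Hoeffding2}.}

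The plan is to mimic the proof of Lemma~\ref{heoffding}: fix a phase $l$ and an arm $a\in B_l$, apply Hoeffding's inequality to the empirical mean $\hat{u}_l(a)$, and then take a union bound over all phases and arms in stage~2. First I would record the key fact that, in stage~2, every arm $a\in B_l$ is sampled at least $m_l=\lceil 4^{l+3}\log(MKT)\rceil$ times during phase $l$; this holds both in distributed mode (each retained arm is pulled $m_l$ times by its owner, and load balancing only ever \emph{increases} the number of pulls) and in centralized mode (the server schedules exactly $m_l$ effective pulls per arm, splitting them across $M/N_l$ agents). Since the rewards lie in $[0,1]$ and $\hat{u}_l(a)$ is the average of at least $m_l$ i.i.d. samples from $\mathcal{P}_a$, Hoeffding gives, for fixed $l,a$,
\[
\Pr\left[\left|\hat{u}_l(a)-\mu(a)\right|\ge 2^{-l-1}\right]\le 2\exp\left\{-\tfrac{1}{2}m_l\cdot 4^{-l-1}\right\}\le 2\exp\left\{-\tfrac{1}{2}\cdot 4^{l+3}\log(MKT)\cdot 4^{-l-1}\right\}=2(MKT)^{-32}\le \frac{2}{(MKT)^2}.
\]
(One subtlety: in centralized mode $\hat{u}_l(a)$ is built from $m_l$ pulls that are spread over several agents, but they are still mutually independent draws from $\mathcal{P}_a$, so Hoeffding applies verbatim; the ``effective'' count $m_lN_l/M$ per agent times $M/N_l$ agents equals $m_l$ total pulls — see the assignment strategy in Appendix~\ref{sec:detail_DEMAB}.)

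Next I would take the union bound. By Fact~1.4 there are at most $L=4+1.5\log(MK)$ phases in stage~2, and in each phase $|B_l|\le N_{l_0+1}\le K$ arms are active, so there are at most $LK$ pairs $(l,a)$ with $l>l_0$ and $a\in B_l$. Hence
\[
\Pr\left[\neg\Lambda_1\right]\le LK\cdot\frac{2}{(MKT)^2}=\frac{2LK}{M^2K^2T^2}\le\frac{2L}{M^2KT},
\]
using $KT\ge 1$, which is exactly the claimed bound $\Pr[\Lambda_1]\ge 1-2L/(M^2KT)$.

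The main obstacle — really the only thing requiring care rather than routine computation — is justifying that $\hat{u}_l(a)$ is genuinely an average of at least $m_l$ independent samples from $\mathcal{P}_a$ in \emph{both} execution modes, and that the load-balancing/\texttt{Reallocate} machinery never causes an arm to be sampled \emph{fewer} than $m_l$ times (it can only cause more, which if anything tightens the concentration, so we simply use the lower bound $m_l$ in the exponent). I would also note in passing that $B_l$ here denotes $\tilde B_l$ in the sense fixed at the start of Appendix~\ref{sec:proof_1}, so the statement covers arms held separately by agents as well as arms held by the server; the union bound above already accounts for all of them since $|\tilde B_l|\le K$ regardless of mode.
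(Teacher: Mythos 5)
Your proposal is correct and follows essentially the same route as the paper's proof: Hoeffding's inequality for each fixed pair $(l,a)$ using $m_l\ge 4^{l+3}\log(MKT)$ samples, followed by a union bound over at most $L$ phases and $K$ arms. The only blemish is the intermediate equality $2\exp\{-\tfrac12 m_l 4^{-l-1}\}=2(MKT)^{-32}$, which should read $2(MKT)^{-8}$ (since $\tfrac12\cdot 4^{l+3}\cdot 4^{-l-1}=8$); this slip does not affect the final bound $2/(MKT)^2$ or the conclusion.
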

\begin{proof}
Recall that for any $l>l_0$, $\Hat{u}_l(a)$ is the average of $m_l$ independent samples of the reward from arm $a$. Therefore, for fixed $l$ and $a$, by Hoeffding's inequality,
\begin{align*}
\Pr\left[\left|\Hat{u}_l(a)-\mu(a)\right|>2^{-l-1}\right]\le 2\exp\left\{-\frac{1}{2}m_l\cdot 4^{-l-1}\right\}\le \frac{2}{(MKT)^2}.
\end{align*}
A union bound for all $l_0+1 \leq l\leq l_0+L$ and all $a\in [K]$ proves this lemma.
\end{proof}

\begin{lemma}
Recall that $\Delta_i=\mu(1)-\mu(i)$. If $\Delta_i>0$, let $l_i=\lceil\log_2 \Delta_i^{-1}\rceil+1$. Then at stage 2, the total number of times such that arm $i$ is pulled is
$$N_M(i,T)-N_M(i,D) \le \frac{C_4\log(MKT)}{\Delta_i^2}+\eta(a),$$
where $C_4$ is a universal constant, and $\sum_{a\in [K]}\eta(a)\le M\log M$.
\end{lemma}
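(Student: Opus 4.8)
I would work throughout on the good event $\Lambda_1\cap\Lambda_2$ from Lemma~\ref{Hoeffding2} and the preceding lemma, which holds with probability at least $1-O(L/(M^2KT))$; on its complement the number of pulls of arm $i$ is trivially at most $MT$, a contribution that is absorbed elsewhere in the regret analysis, so only the conditional statement is at issue. Two preliminary observations drive everything. First, arm $1$ stays active throughout stage~2: by $\Lambda_2$ we have $1\in B_{l_0+1}$, and inductively, if $1\in B_l$ then arm $1$ is pulled $m_l$ times in phase $l$, so under $\Lambda_1$, $\hat u_l(1)\ge\mu(1)-2^{-l-1}\ge\mu(b)-2^{-l-1}\ge\hat u_l(b)-2^{-l}$ for every active $b$, hence $\hat u_l(1)+2^{-l}\ge u^*_l$ and arm $1$ survives; the same holds in centralized mode since the elimination rule is identical. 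In particular $u^*_l\ge\hat u_l(1)\ge\mu(1)-2^{-l-1}$ in every stage-2 phase. Second, an arm $i$ with $\Delta_i>0$ is eliminated by the end of phase $l$ whenever $\Delta_i>2^{-l+1}$: if arm $i$ survives phase $l$ then $\hat u_l(i)+2^{-l}\ge u^*_l\ge\mu(1)-2^{-l-1}$ while $\hat u_l(i)\le\mu(i)+2^{-l-1}$, so $\Delta_i\le 2^{-l-1}+2^{-l}+2^{-l-1}=2^{-l+1}$. Thus arm $i$ is active in stage~2 only in phases $l$ with $2^{-l+1}\ge\Delta_i$, i.e. $l\le 1+\log_2\Delta_i^{-1}\le l_i$.

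I would then count the pulls of arm $i$ phase by phase over $l\in\{l_0+1,\dots,l_i\}$ (if $l_i\le l_0$ the count is $0$). In a distributed-mode phase $l$, arm $i$ is pulled exactly $m_l$ times for its estimate plus at most $\lceil(N_{max}-n_l^{(i)})m_l/n_l^{(i)}\rceil$ round-robin pulls; since \texttt{Reallocate} makes the per-agent counts balanced at the start of every phase (Fact~1.2), $N_{max}\le 2n_l^{(i)}$, so the extra amount is at most $m_l+1$ and the per-phase count is at most $2m_l+1$. In a centralized-mode phase $l$, arm $i$ is scheduled exactly $m_l$ pulls plus a share of the filler pulls used to equalize the phase length to $\lceil m_lN_l/M\rceil$, and the total number of filler pulls in such a phase is at most $M$ (since $N_l\le M$). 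Summing the $2m_l$ part over $l_0+1\le l\le l_i$ is a geometric series dominated by its last term, so $\sum_l 2m_l\le O\big(4^{l_i}\log(MKT)\big)\le O\big(\Delta_i^{-2}\log(MKT)\big)$ using $4^{l_i}\le 16\,\Delta_i^{-2}$; this is the $C_4\Delta_i^{-2}\log(MKT)$ term. Everything left over --- the per-phase ``$+1$'' slack in distributed mode and the per-phase $\le M$ filler in centralized mode --- is collected into $\eta(i)$.

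The step I expect to be the main obstacle is establishing the global bound $\sum_{a\in[K]}\eta(a)\le M\log M$: charging $O(1)$ per arm per phase only gives the weaker $O(K\log(MK))$, since there are $K$ arms and, by Fact~1.4, up to $O(\log(MK))$ stage-2 phases. To get the sharper bound one must do the accounting per agent and per time step rather than per arm: every pull counted by $\eta$ is a ``wasted'' pull --- a round-robin pull an agent makes while waiting for a slower agent, or a filler pull in centralized mode --- and one has to argue that across the whole run these mechanisms waste $O(M)$ pulls only in the phases where an imbalance actually survives, and that, since after each \texttt{Reallocate} the per-agent counts differ by at most one and centralized mode equalizes to $\lceil m_lN_l/M\rceil$ with $N_l\le M$, the cumulative waste telescopes down to $O(M\log M)$. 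Making this charging argument precise, together with checking the indivisible-allocation edge cases of the appendix's assignment strategy, is the only nonroutine part; the remainder is the geometric-sum bookkeeping above.
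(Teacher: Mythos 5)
Your overall skeleton matches the paper's proof: on $\Lambda_1\cap\Lambda_2$ you argue arm $1$ survives stage 2, deduce that arm $i$ is eliminated once $2^{-l}$ drops below roughly $\Delta_i$ (so it is active only up to phase $\approx l_i$), bound its pulls per phase by roughly $2m_l$ using the balance guarantee, and sum the geometric series to get the $C_4\Delta_i^{-2}\log(MKT)$ term. All of that is sound and is exactly how the paper proceeds.

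The genuine gap is the $\eta$ term, and you have correctly sensed it but not closed it --- and, as you have set it up, it cannot be closed. You define $\eta$ to absorb a ``$+1$'' per active arm per distributed phase and up to $M$ filler pulls per centralized phase. The first contribution alone sums to $\sum_l N_l$, which in the first stage-2 phase can already be of order $K\gg M\log M$; the second sums to $O(M\log(MK))$ over the up to $O(\log(MK))$ phases. Neither is bounded by $M\log M$, so the lemma as stated would not follow from your accounting, and your proposed fix (a per-agent, per-timestep charging argument that ``telescopes down to $O(M\log M)$'') is left entirely unexecuted. The paper's argument needs neither of these terms in $\eta$. First, the ``$+1$'' is spurious: with balanced allocations the round-robin surplus per arm is $\lceil (N_{max}-n_l^{(i)})m_l/n_l^{(i)}\rceil\le m_l$ since the quantity inside is at most the integer $m_l$, so every non-degenerate phase (distributed or centralized) charges at most $2m_l$ pulls per arm, already inside the geometric sum. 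Second, the only pulls that go into $\eta$ are those made in \emph{degenerate} centralized phases with $N_l m_l<M$, in which the whole phase lasts a single timestep and hence contributes at most $M$ pulls in total. The key observation you are missing is that such a phase forces $m_l<M$, i.e. $4^{l+3}\log(MKT)<M$, so the phase index satisfies $l<\log_4 M$; there are therefore at most $\log M$ such phases \emph{independently of $K$}, giving $\sum_{a\in[K]}\eta(a)\le M\log M$ immediately. Without this structural identification of which pulls need to be excluded, the $M\log M$ bound --- the one nonroutine part of the lemma, as you yourself note --- remains unproven in your proposal.
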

\begin{proof}
    Suppose that $|\Tilde{B}_l|>M$. By Fact 1.2, at phase $l\ge l_0+1$ in stage 2, the sequence $\left\langle \left|{B}_{l}^{(1)}\right|,\cdots,\left|{B}_{l}^{(M)}\right| \right\rangle$ is balanced. Therefore, during phase $l$ at stage 2, the number of times an arm in $\Tilde{B}_{l}$ is pulled is at most $2m_l$.
    
    If $\left|\Tilde{B}_l\right|\le M$, an arm in $\Tilde{B}_{l}$ is also pulled at most $2m_l$ times, unless $\left|\Tilde{B}_l\right|\cdot m_l < M$. In that case, a phase only lasts for $1$ timestep. This is possible only when $m_l<M$, which requires $l<\lfloor \log_4 M\rfloor<\log M$. We denote the number of times $a$ is pulled in such phases by $\eta(a)$. 
    
    On the other hand, let $l_i=\lceil\log_2\frac{1}{\Delta_i}\rceil+1$. If $l_i\le l_0$, $\Lambda_2$ implies that $i\notin \Tilde{B}_{l_0+1}$. In that case, the number of times arm $i$ is pulled after timestep $D$ is $0$. Conditioned on event $\Lambda_1$, assume that $i\in \Tilde{B}_{l_i}$. Then
    \begin{align*}
        \Hat{u}_{l_i+1}&\ge \mu(1)-2^{-l_i-1} \ge \mu(i)+\Delta_i-2^{-l_i-1}\ge \Hat{u}_{l_i+1}(i) + 2^{-l_i}.
    \end{align*}
    Therefore, $\Lambda_1$ implies that $i\notin \Tilde{B}_{l_i+1}$. In this case, number of times arm $i$ is pulled is
    \begin{align*}
    N_M(i,T)-N_M(i,D) &\le \sum_{l=l_0+1}^{l_i+1}2m_l +\eta(a)\\
    &\le \frac{8}{3}4^{l_i+4}\log(MKT)+L+\eta(a) \\
    &\le \frac{2^{15}\log(MKT)}{3\Delta_i^2}+4+1.5\log(MKT)+\eta(a)\\
    &\le \frac{\left(2^{15}+13\right)\log(MKT)}{3\Delta_i^2}+\eta(a).
    \end{align*}
    We can also see that $\sum_{a\in [K]}\eta(a)\le M\log M$.
\end{proof}

\begin{lemma}
\label{lem:randominit}
Let $l\geq l_0+1$, and $n_l=\left|\Tilde{B}_{l}\right|$. With probability $1-2LK\delta$, either $n_l<21M\log\frac{1}{\delta}$ or no reallocation is performed before the start of the $l$-th phase at stage 2. 
\end{lemma}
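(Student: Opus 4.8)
\emph{Proof sketch.} The plan is to condition on the two high-probability events $\Lambda_1$ (Lemma~\ref{Hoeffding2}) and $\Lambda_2$, under which the surviving set $\Tilde{B}_{l'}$ at every stage-2 phase is pinned down by the suboptimality gaps and the realized samples, and then to control the only remaining relevant randomness --- the public allocation $(r_1,\dots,r_K)$ --- with a Chernoff (balls-into-bins) estimate.

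\emph{Reduction to one phase.} It suffices to bound the probability that $n_l\ge 21M\log\frac1\delta$ \emph{and} some reallocation is performed before phase $l$. When $n_l\ge 21M\log\frac1\delta$ we have $n_l>M$, so all of phases $l_0+1,\dots,l$ run in distributed mode, in which arms are only eliminated; hence $n_{l_0+1}\ge\cdots\ge n_l\ge 21M\log\frac1\delta$. Before the first reallocation, $\Vec{n}_{l'}$ equals the pristine vector $\big(|\{a\in\Tilde{B}_{l'}:r_a=i\}|\big)_{i\in[M]}$, and \texttt{Reallocate} fires in a phase exactly when that vector is unbalanced. Thus, if a reallocation occurs before phase $l$, then at the first such phase $l^*$ the pristine $\Vec{n}_{l^*}$ is unbalanced while $n_{l^*}\ge n_l\ge 21M\log\frac1\delta$. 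Since by Fact~1.4 there are at most $L$ candidate phases, it suffices to prove, for each $l'\in\{l_0+1,\dots,l-1\}$, that the pristine $\Vec{n}_{l'}$ is balanced whenever $n_{l'}\ge 21M\log\frac1\delta$, except with probability $2K\delta$.

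\emph{A single phase.} Fix $l'$ and condition on $\Lambda_1\cap\Lambda_2$, on all stage-1 and stage-2 sample outcomes, and on the entry set $\Tilde{B}_{l_0+1}=\{a:r_a\in A^{(r_a)}\}$. Because an arm is dropped in phase $l''$ of stage 2 exactly when its own empirical mean fails to clear the \emph{global} threshold $u^*_{l''}$ --- which, under $\Lambda_1,\Lambda_2$, lies within $2^{-l''-1}$ of $\mu(1)$ no matter how the arms are split among agents --- the set $G:=\Tilde{B}_{l'}$ is now fixed with $|G|=n_{l'}$, the residual randomness is just $\{r_a:a\in\Tilde{B}_{l_0+1}\}$ with $r_a$ uniform over $\{i:a\in A^{(i)}\}$ independently across $a$, and $n_{l'}^{(i)}=|\{a\in G:r_a=i\}|$. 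On a further high-probability event one shows that every arm that can still be alive at a phase $\ge l_0+1$ was retained by \emph{all} $M$ agents during the burn-in, so the arms of $G$ are thrown uniformly and independently into $M$ bins; each $n_{l'}^{(i)}$ is then a sum of $|G|$ i.i.d.\ $\mathrm{Bernoulli}(1/M)$'s with mean $|G|/M\ge 21\log\frac1\delta$. A Chernoff bound confirms that, except with probability $\delta^{1+\Omega(1)}$ per agent, every $n_{l'}^{(i)}$ is close enough to $|G|/M$ that $\max_i n_{l'}^{(i)}\le 2\min_i n_{l'}^{(i)}$, i.e.\ $\Vec{n}_{l'}$ is balanced; a union bound over the $M\le K$ agents (we are in the case $n_{l'}\le K$, $n_{l'}\ge 21M\log\frac1\delta$) keeps the total failure probability below $2K\delta$, and the constant $21$ is exactly what makes the exponent exceed $1$.

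\emph{Main obstacle.} The delicate step is the decoupling above: one must check that $\Tilde{B}_{l'}$ is a deterministic function of $\Tilde{B}_{l_0+1}$ and the samples --- this rests on the stage-2 elimination threshold being a function of the \emph{set} of live arms, not of their partition among agents --- and then that the arms of $G$ are assigned as a genuine balls-into-bins process. The latter is immediate for arms whose gap is a small (polynomial-in-$1/(MK)$) fraction of $2^{-l_0}$, since such an arm is kept by every agent and never eliminated, hence surely lands in $G$ at a uniformly random agent; for ``boundary'' arms, whose gap is within a constant factor of $2^{-l_0}$, one instead uses that $A^{(1)},\dots,A^{(M)}$ are i.i.d.\ across agents (the burn-in runs are independent and identically distributed), so the induced bin probabilities concentrate whenever $n_{l'}$ is large --- precisely the regime considered --- which again reduces balance to a Chernoff estimate. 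Monotonicity of $n_{l'}$, the phase count $L=O(\log(MK))$, and the Chernoff arithmetic are all routine.
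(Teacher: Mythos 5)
Your overall skeleton is the same as the paper's: reduce to showing that, for each of the at most $L$ stage-2 phases, the pristine allocation vector is balanced whenever the number of survivors is at least $21M\log\frac{1}{\delta}$ (using monotonicity of $n_{l'}$ and the fact that before the first call to \texttt{Reallocate} each agent's set is exactly its original random allocation intersected with the survivors), then apply a Chernoff balls-into-bins bound per agent and union over agents and phases; the constant $21$ plays the same role as in the paper's exponents $n_l/(21M)$ and $n_l/(18M)$. You also correctly isolate the one genuinely delicate step: decoupling stage-2 survival (which depends only on the \emph{set} of live arms and fresh samples) from the random allocation $(r_1,\dots,r_K)$.

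However, your execution of that step contains a false intermediate claim. You assert that on a further high-probability event ``every arm that can still be alive at a phase $\ge l_0+1$ was retained by \emph{all} $M$ agents during the burn-in,'' so that survivors are thrown uniformly into $M$ bins. This is not true: under $\Lambda_2$ the survivors of stage 1 have gap at most $2^{-l_0+1}$, but the converse fails — an arm with gap $\Theta(2^{-l_0})$ is eliminated during burn-in by each agent independently with constant probability, so the sets $A^{(1)},\dots,A^{(M)}$ disagree on such ``boundary'' arms with probability close to $1$, and these are precisely the arms populating $\Tilde{B}_{l_0+1}$ (this is why the paper defines $B_{l_0+1}$ as a union and notes $\bigcup_i A^{(i)}\not\subseteq B_{l_0+1}$ in general). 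Your fallback for boundary arms — that the $A^{(i)}$ are i.i.d.\ across agents so the ``induced bin probabilities concentrate'' — is exactly the crux, and it is not carried out; note that the indicators $\{a\in A^{(j)}\}$ are correlated across arms $a$ within one agent's run, so this concentration is not a routine Chernoff step. The paper takes a different (and shorter) route here: it never conditions on the stage-1 sets at all, but observes that because each stage-2 phase uses exactly $m_l$ fresh samples per arm and the elimination threshold depends only on the live set, the survival indicators $Y_{i,l}=\mathbb{I}[i\in\Tilde{B}_l]$ are independent of the allocation variables $X_{i,j}=\mathbb{I}[r_i=j]$; it then conditions only on $\vec{Y}$ and applies Chernoff to $\left|B_l^{(j)}\right|=\sum_i Y_{i,l}X_{i,j}$ with Bernoulli$(1/M)$ allocations, needing no claim that survivors are retained by every agent. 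So your proposal is structurally the paper's proof, but the specific justification you give for the uniform balls-into-bins reduction would fail as stated and would need to be replaced by (or reduced to) the paper's independence/conditioning argument.
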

\begin{proof}
Let $Y_{i,l}=\mathbb{I}\left[i\in \Tilde{B}_l\right]$, $X_{i,j}=\mathbb{I}\left[r_i=j\right]$, $i \in [K], j \in [M]$. Observe that $X_{i,j}$ and $Y_{i',l}$ are independent. This is because the elimination process between $\Tilde{B}_l$ and $\Tilde{B}_{l+1}$ uses exactly $m_l$ independent samples for each arm; therefore, the probability for remaining is independent of which player an arm is assigned to. Let $\Vec{Y}$ denote $\{Y_{i,l}, \forall i,l\}$. Since
\begin{align*}
\left|{B}_l^{(j)}\right| = \sum_{i=1}^{K}Y_{i,l}X_{i,j},\; \left|\Tilde{B}_l\right| = n_l = \sum_{i=1}^{K}Y_{i,l},
\end{align*}
by Chernoff's inequality,
\begin{align*}
\Pr\left[\left.\sum_{i=1}^{K}Y_{i,l}X_{i,j}>\frac{4}{3M}\sum_{i=1}^{K}Y_{i,l}\right|\Vec{Y}\right]&\le \exp\left\{-\frac{\sum_{i=1}^{K}Y_{i,l}}{21M}\right\},\\
\Pr\left[\left.\sum_{i=1}^{K}Y_{i,l}X_{i,j}<\frac{2}{3M}\sum_{i=1}^{K}Y_{i,l}\right|\Vec{Y}\right]&\le \exp\left\{-\frac{\sum_{i=1}^{K}Y_{i,l}}{18M}\right\}.
\end{align*}
Consequently, 
\begin{align*}
    &\Pr\left[\left.\frac{4\left|\Tilde{B}_l\right|}{3M}>\left|{B}_l^{(j)}\right|>\frac{2\left|\Tilde{B}_l\right|}{3M}\right|n_l>21M\log\frac{1}{\delta}\right]\\
    \ge& 1-\mathbb{E}_{\Vec{Y}}\left[\left. \exp\left({-\frac{n_l}{21M}}\right)+\exp\left({-\frac{n_l}{18M}}\right) \right|n_l>21M\log\frac{1}{\delta}\right]\\
    \ge & 1-2\delta.
\end{align*}
Note that $\frac{2}{3M}\left|\Tilde{B}_l\right|<\left|{B}_l^{(j)}\right|<\frac{4}{3M}\left|\Tilde{B}_l\right|$ for all $j\in[K]$ implies that $\left\langle\left|{B}_l^{(1)}\right|,\cdots,\left|{B}_l^{(M)}\right|\right\rangle$ is almost-uniform.
Therefore, if $n_l>21M\log\frac{1}{\delta}$, with probability $1-2(l-l_0)K\delta$, no reallocation will be performed before the $l$-th phase at stage 2.
\end{proof}

Now we are ready to prove Theorem \ref{thm:DEMAB}.
\setcounter{theorem}{0}
\begin{theorem}
\label{thm:DEMAB}
The DEMAB protocol incurs $O\left(\sqrt{MTK\log T}\right)$ regret, $O\left(M\log\frac{MK}{\delta}\right)$ communication cost with probability $1-\delta$, and $O\left(M\log(MK)\right)$ communication cost in expectation.
\end{theorem}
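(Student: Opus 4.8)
The plan is to run the whole analysis on the good event $\Lambda_1 \cap \Lambda_2$, where $\Lambda_1$ is the stage-2 concentration event of Lemma~\ref{Hoeffding2} and $\Lambda_2$ is the stage-1 good event, and to absorb the complement into lower-order terms. By Lemma~\ref{Hoeffding2} and the stage-1 lemma, $\Pr[(\Lambda_1\cap\Lambda_2)^c] = O\big(\log T/(MKT)\big)$; since $REG(T)\le MT$ and total communication is $O(ML+K)$ deterministically, the contribution of this event to expected regret and to expected communication is negligible under $T>\max\{M\log M/K,M,K,2\}$ (using also $\log(MKT)=O(\log T)$). So it suffices to bound regret and communication on $\Lambda_1\cap\Lambda_2$.

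For the regret I would decompose $REG(T)$ into the stage-1 and stage-2 contributions. Stage 1: each agent runs \texttt{Eliminate} for $D=\lceil T/MK\rceil$ steps, so by the single-agent elimination guarantee (with the crude bound $\le MD$ when $D$ is tiny) the stage-1 regret is $O\big(M\sqrt{KD\log(MKT)}\big)=O\big(\sqrt{MT\log T}\big)$, in any case $O(\sqrt{MKT\log T})$. Stage 2: on $\Lambda_2$ arm $1$ lies in $\tilde{B}_{l_0+1}$ (hence, on $\Lambda_1$, is never eliminated) and every surviving arm has gap at most $2^{-l_0+1}$, while the stage-2 pull-count lemma gives $N_M(i,T)-N_M(i,D)\le C_4\log(MKT)/\Delta_i^2+\eta(i)$ with $\sum_i\eta(i)\le M\log M$. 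Plugging into $REG=\sum_i\Delta_i N_M(i,T)$ and splitting the arms at the threshold $\Delta^\star=\sqrt{K\log(MKT)/(MT)}$ — small-gap arms contribute $\le\Delta^\star MT$, large-gap arms contribute $\le\sum_i C_4\log(MKT)/\Delta_i\le KC_4\log(MKT)/\Delta^\star$ — yields $O(\sqrt{MKT\log T})$; the $\eta$ term adds $\le\sum_i\Delta_i\eta(i)\le M\log M$, dominated under the standing assumption. Summing the two stages gives the stated regret bound.

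For the communication, stage 1 and the public-randomness allocation are free (simulating the public random string adds only $O(M)$, which is lower order), so all cost is in stage 2, which has $L=O(\log(MK))$ phases by Fact~1.4. In each phase the distributed-mode traffic — reporting $n_l^{(i)}$, broadcasting $N_{\max}$, reporting local maxima, broadcasting $u_l^\star$ — is $O(M)$; the one-time \texttt{Centralize} call and each centralized-mode phase cost $O(N_l)=O(M)$. This accounts for $O(ML)=O(M\log(MK))$. The only non-$O(M)$ primitive is \texttt{Reallocate}: a call in phase $l$ costs $O(\min\{N_l, N_{l'}-N_l\})$ where $l'$ is the previous call, so charging the first call to $N_{l_1}$ and every later call to the telescoping decrement $N_{l'}-N_l$ bounds the total \texttt{Reallocate} cost by $O(N_{l_1})\le O(K)$. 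To get the sharper probabilistic bounds I would invoke Lemma~\ref{lem:randominit}: with probability $1-2LK\delta$, every phase with $N_l\ge 21M\log(1/\delta)$ keeps $\vec n_l$ balanced and triggers no reallocation, so on this event the first reallocation phase $l_1$ satisfies $N_{l_1}<21M\log(1/\delta)$ and the total \texttt{Reallocate} cost is $O(M\log(1/\delta))$. Setting $\delta = \delta'/(2LK)$ gives communication $O(M\log(MK)+M\log(1/\delta)) = O(M\log(MK/\delta'))$ with probability $1-\delta'$; taking $\delta$ polynomially small in $K$ makes the failure branch (worst case $O(K)$) contribute $O(1)$ to the expectation while keeping $M\log(1/\delta)=O(M\log(MK))$, which yields the expected bound.

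The step I expect to be the main obstacle is the randomized load-balancing argument behind Lemma~\ref{lem:randominit}: one must exploit the independence of the survival indicators $Y_{i,l}=\mathbb{I}[i\in\tilde B_l]$ from the allocation indicators $X_{i,j}=\mathbb{I}[r_i=j]$ (because each elimination step uses fresh samples and so does not depend on the random allocation) to push a Chernoff bound on $\sum_i Y_{i,l}X_{i,j}$ through all phases simultaneously, concluding that reallocation is essentially never triggered while $\tilde B_l$ is large and hence that the first — and therefore the total — reallocation cost is governed by $O(M\log(1/\delta))$ rather than $K$. The remaining work is routine but bookkeeping-heavy: verifying Fact~1 and the Hoeffding/union-bound lemmas (\ref{heoffding}, \ref{Hoeffding2}), tracking the constants in the $m_l$, $l_0$, $D$ choices so that stage 2 indeed has $O(\log(MK))$ phases, and confirming that the side terms ($M\log M$ in the regret, the failure-event contribution, \texttt{Centralize}, the public-randomness simulation) are genuinely lower order under $T>\max\{M\log M/K,M,K,2\}$.
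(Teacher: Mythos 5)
Your proposal follows essentially the same route as the paper's proof: condition on $\Lambda_1\cap\Lambda_2$, bound stage-1 and stage-2 regret via the per-arm pull-count lemmas with a gap-threshold split, and bound communication by the $O(M)$ per-phase cost over $L=O(\log(MK))$ phases plus the telescoping \texttt{Reallocate} charge $O(N_{l_1})$, controlled through Lemma~\ref{lem:randominit} and the choice $\delta\approx 1/K$ for the expectation. The argument is correct, and the minor differences (invoking the single-agent elimination guarantee directly for stage 1, and a slightly different union-bound bookkeeping in rescaling $\delta$) only affect constants and lower-order logarithmic terms.
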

\begin{proof}
\textbf{Regret:} Condition on event $\Lambda_1 \land \Lambda_2$. Denote the arm pulled by agent $i$ on timestep $t$ by $a_{i,t}$. By definition,
\begin{align*}
REG(T)&=\sum_{t=1}^{T}\sum_{i=1}^{M}\left(\mu(1)-\mu(a_{i,t})\right)=\sum_{a=1}^{K}\Delta_a N_M(a,T)\\
&=\sum_{a=1}^{K}\Delta_a N_M(a,D)+\sum_{a=1}^{K}\Delta_a \left(N_M(a,T)-N_M(a,D)\right).
\end{align*}
By lemma 2, $\Lambda_2$ implies
\begin{align*}
\sum_{a=1}^{K}\Delta_a N_M(a,D) &= \sum_{a:\Delta_a>\epsilon}\Delta_a N_M(a,D) + \sum_{a:\Delta_a<\epsilon}\Delta_a N_M(a,D)\\
&\le \frac{C_5MK\log(MKT)}{\epsilon}+\epsilon MD.
\end{align*}
By choosing $\epsilon = \sqrt{C_5K\log(MKT)/D}$, we show that 
\begin{align}
\label{thm1:regb4D}
\sum_{a=1}^{K}\Delta_a N_M(a,D) &\le 2M\sqrt{C_5DK\log(MKT)}+M\log M \nonumber\\
&\le 2\sqrt{2C_5TM\log(MKT)}+M\log M.
\end{align}
By lemma 5, $\Lambda_2 \land \Lambda_1$ implies
\begin{align}
\sum_{a=1}^{K}\Delta_a \left(N_M(a,T)-N_M(a,D)\right) \le&  M\log M+\sum_{a:\Delta_a>\epsilon'}\Delta_a\left(\frac{C_4\log(MKT)}{\Delta_a^2}\right)\nonumber\\
&+\sum_{a:\Delta_a<\epsilon'}\Delta_a \left(N_M(a,T)-N_M(a,D)\right)\nonumber\\
\le&  M\log M+\frac{C_4K\log(MKT)}{\epsilon'}+\epsilon' MT\nonumber\\
=&  M\log M+2\sqrt{C_4KMT\log(MKT)}.\label{thm1:regafterD}
\end{align}
The last equation holds when $\epsilon' = \sqrt{C_4K\log(MKT)/(MT)}$. Since $TK\ge M\log M$ (which is assumed in the problem setting), $M\log M\le\sqrt{MTK\log (MTK)}$. Combining (\ref{thm1:regb4D}) and (\ref{thm1:regafterD}), we conclude that expected regret is
\begin{align*}
\mathbb{E}\left[REG(T)\right]&\le \left(2\sqrt{2C_5}+2\sqrt{C_4}+1\right)\sqrt{KTM\log(MKT)}+\left(\frac{L}{M^2KT}+\frac{1}{MKT}\right)MT\\
&\le C_6\sqrt{KTM\log(MKT)}\le C_6\sqrt{3KTM\log(T)}.
\end{align*}
Here $C_6=2\sqrt{2C_5}+2\sqrt{C_4}+1+7.5$ is a universal constant.

\textbf{Communication:} Total communication in stage 1 is $0$. We first consider the worst case communication cost in stage 2. Note that during a phase (either in distributed mode or in centralized mode), the communication cost is $O(M)$ if reallocation is not performed. The cost for reallocation at the start of phase $l$ is $O\left(\min\{n_l,n_{l'}-n_l\}\right)$, where $l'$ is the last phase where reallocation is performed. Summing this over all phases (at most $L=O(\log(MK))$) in stage 2, we conclude that total communication cost for reallocation is $O(ML+K)$.

Define $l_1$ to be the first phase such that reallocation is performed. Then, our argument above shows that communication cost is $O(ML+n_{l_1})$. If $n_{l_1}>21M\log\frac{1}{\delta}$, the event in lemma~\ref{lem:randominit} will be violated for some $l$. The probability for that is at most $2L^2K\delta$. By resetting $\delta$, we can show that with probability $1-\delta$, $n_{l_1}<21M\log\left(\frac{2L^2K}{\delta}\right)$. Therefore, with probability $1-\delta$, total communication cost is
$$O\left(ML+21M\log\left(\frac{2L^2K}{\delta}\right)\right)=O\left(M\log\frac{MK}{\delta}\right).$$



In particular, by choosing $\delta=1/K$, we can show that expected communication cost is $O\left(M\log(MK)\right).$
\end{proof}

\setcounter{theorem}{4}
\begin{theorem}
\label{thm:DEMAB_instance_dependent}
When $D=l_0=0$ in DEMAB, the protocol incurs near-optimal instance-dependent regret $ O(\sum_{k: \Delta_k > 0} \Delta_k^{-1} \log T + M\log M) $. With probability $ 1 - \delta $ the communication cost is $ O\left(M \log (T / \delta)\right) $. The expected communication cost is $ O\left(M \log T\right)$.
\end{theorem}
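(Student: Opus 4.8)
The plan is to re-run the proof of Theorem~\ref{thm:DEMAB} with the degenerate parameters $D=l_0=0$. With these choices Stage~1 disappears: the call to \texttt{Eliminate}$([K],0)$ returns $A^{(i)}=[K]$ for every agent, the random allocation sets $B_1^{(i)}=\{a:r_a=i\}$, and $B_1=\bigcup_i B_1^{(i)}=[K]$. Hence the whole execution is Stage~2 started from phase $l=1$ on the full arm set, so only the Stage~2 part of the earlier analysis is needed; moreover it simplifies, since the case $l_i\le l_0$ that was handled through $\Lambda_2$ in the Stage-2 pull-count lemma is now vacuous, and the best arm lies in $B_1$ by construction. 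I also record that the number of Stage~2 phases is $L=O(\log T)$: each completed phase $l$ consumes at least $m_l\ge 4^{l+3}\log(MKT)$ timesteps out of a budget of $T$, so $4^{l+3}\log(MKT)\le T$.

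For the regret bound I would condition on the event $\Lambda_1$ of Lemma~\ref{Hoeffding2}, which holds with probability at least $1-2L/(M^2KT)$. The same short computation as in the proof of Theorem~\ref{thm:DEMAB} shows arm~$1\in B_1$ is never eliminated, since $\hat{u}_l(1)+2^{-l}\ge\mu(1)+2^{-l-1}\ge u^*_l$ on $\Lambda_1$. For a suboptimal arm $k$ set $l_k=\lceil\log_2\Delta_k^{-1}\rceil+1$; the Stage-2 pull-count argument (load balancing via the \texttt{Reallocate} guarantee, plus the correction term $\eta(k)$ absorbing the phases with $m_l<M$) gives that $k$ is eliminated by the end of phase $l_k$ and is therefore pulled at most $\sum_{l\le l_k}2m_l+\eta(k)=O(\Delta_k^{-2}\log(MKT))+\eta(k)$ times, with $\sum_k\eta(k)\le M\log M$. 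Since $D=0$ gives $N_M(k,D)=0$, summing $\Delta_kN_M(k,T)$ and using $\Delta_k\le1$ yields, on $\Lambda_1$, $REG(T)\le O\!\left(\sum_{k:\Delta_k>0}\Delta_k^{-1}\log(MKT)\right)+M\log M$; adding the $MT\cdot 2L/(M^2KT)=O(1)$ contribution from $\neg\Lambda_1$ and replacing $\log(MKT)$ by $O(\log T)$ (using $M,K<T$) gives the claimed instance-dependent bound.

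For the communication cost, Stage~1 is free and each Stage~2 phase costs $O(M)$ apart from reallocation. Summing the per-call reallocation cost $O(\min\{n_l,n_{l'}-n_l\})$ over the $\le L$ phases telescopes, as in the proof of Theorem~\ref{thm:DEMAB}, to a total of $O(ML+n_{l_1})$, where $l_1$ is the first phase in which reallocation occurs and $n_{l_1}=|B_{l_1}|\le|B_1|=K$ in the worst case; so the communication is always $O(M\log T+K)$. For the high-probability bound I would invoke Lemma~\ref{lem:randominit} with a union bound over the at most $LK$ relevant (phase, arm) events: if $n_{l_1}>21M\log(2L^2K/\delta)$ then one of those events fails, which has probability at most $\delta$, so with probability $1-\delta$ one has $n_{l_1}=O(M\log(L^2K/\delta))$ and hence communication $O(ML+M\log(L^2K/\delta))=O(M\log(T/\delta))$, using $K<T$ and $L=O(\log T)$. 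Finally, taking $\delta=1/K$ and bounding the failure event by the worst case makes its expected contribution $\tfrac1K\,O(M\log T+K)=O(M\log T)$, so the expected communication is $O(M\log T)$.

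The only step that is not pure bookkeeping is the bound on $n_{l_1}$ — showing that the first arm set on which load rebalancing is ever triggered is already of size $O(M\log(\cdot))$ — which is exactly Lemma~\ref{lem:randominit} and rests on the public-randomness allocation keeping $\vec{n}_l$ balanced with high probability while $|B_l|\gtrsim M$. Everything else just specialises the proof of Theorem~\ref{thm:DEMAB} to $D=l_0=0$ and reads off the instance-dependent pull counts instead of worst-casing them.
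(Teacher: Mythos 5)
Your proposal is correct and follows essentially the same route as the paper's own proof: it specializes the Theorem~\ref{thm:DEMAB} analysis to $D=l_0=0$, reads off the instance-dependent pull counts from the Stage-2 elimination lemma (with the $\eta(\cdot)$ correction summing to $M\log M$), and reuses Lemma~\ref{lem:randominit} with $\delta=1/K$ for the high-probability and expected communication bounds. The only cosmetic difference is that you bound the arm-set size at the first reallocation $l_1$ while the paper phrases it via the last phase with a large surviving set, which is the same telescoping argument.
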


\begin{proof}
\textbf{Regret:} In the case $D=0$, we can show that the number of phases is at most $ L' := O(\log T) $ via a similar arguments to Fact 1.4. By lemma 2.4, we can show that $ N_M(i, T) \leq O\left(\log (MKT) / \Delta_i^2 + \eta(a)\right) $. Therefore, the total regret bound is

$$ \sum_{i: \Delta_i > 0} N_M(i, T) \Delta_i = O\left(\sum_{i: \Delta_i > 0} \frac{\log T}{\Delta_i} + M \log M \right)$$

\textbf{Communication:} By the proof of Theorem \ref{thm:DEMAB}, the worst-case communication cost of this protocol is $ O\left(M \log T + K\right) $. This is because we need $ O(M) $ communication at the end of each phase to perform elimination, and at most a total number of $ O(K) $ additional communication among all phases to perform reallocation.

Let $ l^*$ be the last complete phase such that $ n_{l^* + 1} > 21 M \log (K L' / \delta) $. By lemma \ref{lem:randominit}, no reallocation is needed before phase $ l^* + 1 $ with probability $ 1 - \delta $, so the total communication before phase $ l^* + 1 $ is $ O(ML') $. 

From the beginning of phase $ l^* + 1 $, the total communication in the following phases is at most $ O(ML' + M \log (K L' / \delta)) $.

Therefore, with probability $ 1 - \delta $, the communication cost is $ O\left(M \log T + M \log (K L' / \delta)\right) = O\left(M \log (T / \delta)\right) $. Let $ \delta = 1 / K $, the expected communication is $ O\left( M \log T + (ML' + K) / K\right) = O\left( M \log T\right)$.

\end{proof}

\paragraph{Finite precision:} We now show that only $O(\log T)$ bits are needed for each number sent in DEMAB. The integers sent in DEMAB are numbers in $\{0,\cdots,K\}$. Therefore $O(\log T)$ bits are sufficient for each integer\footnote{Recall that $T>K$.}. In the DEMAB protocol, the only real numbers that are transmitted are the average reward in a phase. In our proof, it is only required that the average of $m_l$ samples is $\frac{1}{4^{l+2}\log(MKT)}$ subgaussian. In fact, the average of $m_l$ samples is $\frac{1}{4^{l+3}\log(MKT)}$ subgaussian. Therefore, we can use randomized rounding for the average reward with precision $\epsilon=\frac{1}{4TM}$. Then, the rounding error for one real number has zero mean, and is 
$
\frac{1}{M4^{l+3}\log(MKT)}$ subgaussian. When computing the average of $m_l$ samples at phase $l$, at most $M$ rounding error terms will contribute to it, whose sum is $\frac{1}{4^{l+3}\log(MKT)}$ subgaussian. Therefore, the concentration inequality in lemma \ref{Hoeffding2} and consequently our main theorem still holds. Apparently $\log_2\frac{1}{\epsilon}=O\left(\log (MT)\right)=O(\log T)$. Therefore, expected number of communicated bits is $O\left(M\log(MK)\log T\right)$.

\section{Removing Public Randomness}
\label{sec:publicrandom}
The DEMAB protocol makes use of a public random number generator. It can be viewed as a sequence of uniformly random bits written on a public blackboard that every agent can read, and reading from this sequence does not require communication. In practice, this can be approximated by using a pseudorandom number generator with a truly random seed. In this case, regardless of the amount of public random number used, the communication cost is $O(M)$, which is the cost of broadcasting a short random seed.

However, we can also totally remove the usage of public random numbers. The role of shared randomness in communication complexity has already been investigated. It is known that shared randomness can be efficiently replaced by private randomness and additional communication, as stated by the Newman's Theorem~\cite{newman1991private}. In our case, the argument is slightly different: we are considering an online learning task instead of function evaluation. Also, in DEMAB, the communication cost itself depends on the public random bits. In particular, we show the following theorem.

\begin{theorem}
	There exists a protocol for distributed MAB that does not use public randomness with expected regret $O(\sqrt{MKT\log T})$. It has communication cost bounded by $O(M\log (MK)+K)$ (worst case), and expected communication cost $O(M\log (MK))$.
\end{theorem}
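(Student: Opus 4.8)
The plan is to remove the public coins $(r_1,\dots,r_K)$ by a Newman‑type argument~\cite{newman1991private}, adapted to the complication flagged before Lemma~\ref{lem:randominit}, namely that here the communication cost itself is a function of the coins. Two observations make this almost free. First, conditioned on the concentration events $\Lambda_1\wedge\Lambda_2$ (which hold with probability $1-\tilde{O}(1/(MKT))$ over the reward samples, \emph{independently} of the coins), the regret analysis in the proof of Theorem~\ref{thm:DEMAB} uses \texttt{Reallocate} only through the fact that it restores balance of $(|B_l^{(1)}|,\dots,|B_l^{(M)}|)$ at the start of every phase, together with the deterministic bound $\sum_a\eta(a)\le M\log M$; hence the $O(\sqrt{MKT\log T})$ bound on the conditional regret holds for \emph{every} value of the coins, and since the failure event contributes at most $MT\cdot\tilde{O}(1/(MKT))=o(\sqrt{MKT\log T})$ to the expectation, the regret bound is insensitive to how the coins are chosen. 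Second, the proof of Theorem~\ref{thm:DEMAB} shows that the worst‑case communication cost of DEMAB is $O(ML+n_{l_1})=O(M\log(MK)+K)$ for \emph{every} value of the coins. So the only quantity that genuinely requires randomized allocation is the \emph{expected} communication against a worst‑case instance.

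To control it, fix a finite family $\mathcal I$ of instances that is $\epsilon$‑dense in total variation over the $MT$ pulls, with $\epsilon=1/\mathrm{poly}(MKT)$ chosen so small that $MT\epsilon\cdot B=o(M\log(MK))$ where $B:=O(M\log(MK)+K)$; taking $\mathcal I$ to be the instances whose arm‑means lie on an $\epsilon$‑grid gives $\log|\mathcal I|=O(K\log T)$. For $\nu\in\mathcal I$ and a string $r\in[M]^K$, let $C(\nu,r)$ be the expected communication of DEMAB on $\nu$ with coins $r$ (expectation over samples only); then $0\le C(\nu,r)\le B$, and the proof of Theorem~\ref{thm:DEMAB} --- whose only use of the randomness of $r$ is Lemma~\ref{lem:randominit}, giving $\pP_r[n_{l_1}\ge 21M\log(1/\delta)\mid \vec Y]\le 2LK\delta$ and hence $\E_r[n_{l_1}]=O(M\log(MK))$ for every instance --- yields $\E_r[C(\nu,r)]\le c_0 M\log(MK)$ for all $\nu\in\mathcal I$. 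Draw $t=\lceil C' B^2\log|\mathcal I|/(M\log(MK))^2\rceil$ i.i.d.\ uniform strings $r^{(1)},\dots,r^{(t)}$; by Hoeffding's inequality, for each fixed $\nu\in\mathcal I$ we have $\pP[\tfrac1t\sum_j C(\nu,r^{(j)})>2c_0 M\log(MK)]<1/|\mathcal I|$, so a union bound produces a multiset $S=\{r^{(1)},\dots,r^{(t)}\}$ with $\tfrac1t\sum_j C(\nu,r^{(j)})\le 2c_0 M\log(MK)$ simultaneously for all $\nu\in\mathcal I$. Since $t=\mathrm{poly}(MKT)$, an index into $S$ is an integer of $O(\log T)$ bits, i.e.\ $O(1)$ data packets.

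The derandomized protocol hard‑codes such an $S$: the designated coordinator draws $j\in[t]$ uniformly from private randomness and broadcasts $j$ through the server, costing $O(M)$ packets, after which all agents run DEMAB with the public coins replaced by $r^{(j)}$. Regret is $O(\sqrt{MKT\log T})$ because, conditioned on $\Lambda_1\wedge\Lambda_2$, it is $O(\sqrt{MKT\log T})$ for any fixed string. For communication, the worst case is $O(M)+O(M\log(MK)+K)=O(M\log(MK)+K)$; and for the actual instance $\nu'$, letting $\nu\in\mathcal I$ be its $\epsilon$‑neighbour and coupling the samples, $\E_j\E_{\mathrm{samples}}[\text{comm}]\le O(M)+\tfrac1t\sum_j C(\nu',r^{(j)})\le O(M)+\tfrac1t\sum_j C(\nu,r^{(j)})+MT\epsilon B\le O(M\log(MK))$.

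The main obstacle is exactly the one noted above: because the communication cost depends on the coins, the Newman reduction must be carried out for the random variable $C(\nu,\cdot)$ rather than for a $0/1$ correctness event, which forces both (i) a worst‑case bound $B$ on communication (so Hoeffding applies) and (ii) a genuinely finite set of ``inputs'' to union‑bound over. Finiteness is the delicate point: the naive choice ``all reward realizations'' is a continuum, and a too‑coarse discretization of the instance can shift the communication cost by as much as $\Theta(K)$, which is not negligible when $K\gg M\log(MK)$; taking the grid width $1/\mathrm{poly}(MKT)$ small enough that $MT\epsilon B=o(M\log(MK))$ repairs this, at the cost of an $O(\log T)$‑bit index. (Alternatively, one can condition on the $r$‑independent vector of per‑arm empirical means $(\hat u_l(a))_{a,l}$ used in the proof of Lemma~\ref{lem:randominit} and discretize only those, which reduces the index to $O(\log(MK))$ bits but needs the same precision bookkeeping.) Everything else is already contained in the proof of Theorem~\ref{thm:DEMAB}.
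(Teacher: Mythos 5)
Your overall route is the same as the paper's: a Newman-style derandomization in which a polynomial-size multiset of coin strings is hard-coded, the server broadcasts a random index at $O(M)$ cost, the per-string worst-case bound $O(M\log(MK)+K)$ plus $\mathbb{E}_r[C(\nu,r)]=O(M\log(MK))$ (from Lemma~\ref{lem:randominit}) feeds a Hoeffding bound, and a union bound over a finite discretized family of instances is transferred to arbitrary instances by a closeness-of-distributions argument; the regret is handled exactly as you say, since conditioned on $\Lambda_1\wedge\Lambda_2$ the bound in Theorem~\ref{thm:DEMAB} holds for every fixed string.

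There is, however, one genuine gap in the step you yourself flag as delicate. You claim that the instances whose arm-means lie on an $\epsilon$-grid form a family that is $\epsilon$-dense in total variation over the $MT$ pulls. That is false in the model of this paper: reward distributions are arbitrary on $[0,1]$, and two instances with identical (or $\epsilon$-close) means can be at total-variation distance $1$ per sample (e.g.\ a point mass at $1/2$ versus Bernoulli$(1/2)$), while the protocol's trajectory --- hence $C(\nu,r)$ --- depends on the full samples through the empirical means, not only on the true means. So the union-bounded family does not cover the worst-case instance, and your final coupling step $\frac1t\sum_j C(\nu',r^{(j)})\le \frac1t\sum_j C(\nu,r^{(j)})+MT\epsilon B$ is not justified as written. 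The paper closes exactly this hole by first binarizing: upon observing reward $r$, the agent replaces it with a Bernoulli sample of mean $r/2+1/4$, which costs only a constant factor in regret and makes the protocol's behaviour depend on the instance only through its mean vector; after that reduction a mean grid (the paper uses width $\Delta=K^{-1}(MT)^{-1/2}$ together with a KL/Pinsker bound $d_{TV}\le O(\sqrt{MT}\Delta)$, where your cruder $MT\epsilon$ bound with a finer grid would also do) is indeed dense in the relevant sense. With that binarization step inserted, your argument goes through and matches the paper's; your parenthetical alternative of discretizing the empirical means would need a similar amount of care and is not developed enough to substitute for it.
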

\begin{proof}
We make the following modifications to the original DEMAB protocol. Instead of using a public random bit string $s$~\footnote{which has $K\lceil\log_2M\rceil$ bits}, we predetermine $B$ strings $s_1$,...,$s_B$ (which can be hardcoded in advance), and randomly choose from them. That is, the server will generate a random number uniformly distributed in $[B]$, and broadcast it to everyone. The communication cost of doing so will be $M\lceil\log_2 B\rceil$. We now analyze how the choice of $s_1,...,s_B$ affects the performance of the protocol.

In terms of regret bound, observe that for any random string $ s $, the expected regret of any bandits instance $ X $ is always $ O(\sqrt{MKT \log T}) $. Therefore, regret bound will not be affected when public randomness is removed.

Now define $f(X,s)$ to be the expected communication cost of the DEMAB protocol using the public random string $s$ and interacting with the multi-armed bandit instance $X$. Our analysis for DEMAB tells us that $\exists c_1$, $\forall X$,
\begin{align*}
\mathbb{E}_r\left[f(X,s)\right]\leq c_1M \log (MK).
\end{align*}
Therefore, if we draw i.i.d. uniform bit strings $s_1,...,s_B$,
\begin{align*}
\mathbb{E}_{s_1,...,s_B}\left[\frac{1}{B}\sum_{i=1}^{B}f(X,s_i)\right]\leq c_1M \log (MK).
\end{align*}
We say that a set of bit strings $\{s_1,...,s_B\}$ is bad for a bandit $X$ if 
$$\frac{1}{B}\sum_{i=1}^{B}f(X,s_i)>2c_1M \log (MK).$$

We know that there exists $ c_2 > 0 $ such that $ 0 \leq f(X,s_i) \leq c_2 (M \log (MK) + K) $. Therefore, by Hoeffding's inequality,
\begin{align*}
\Pr_{s_1,...,s_B}\left[\frac{1}{B}\sum_{i=1}^{B}f(X,s_i)>2c_1M \log (MK)\right]\leq \exp\left\{-\frac{2Bc_1^2 M \log (MK)}{c_2(1 + K / (M \log (MK)))}\right\}.
\end{align*}
In other words, for fixed $X$, the probability that we will draw a bad set $\{s_1,...,s_B\}$ is exponentially small. Therefore, for a family of bandits with size $Q$, the probability for drawing a set of $s_1,...,s_B$ that is bad for some bandit is at most $Q\cdot \exp\left\{-\frac{2Bc_1^2 M \log (MK)}{c_2(1 + K / (M \log (MK)))}\right\}$. If we can show that this quantity is smaller than $1$, it would follow that there exists $\{s_1,\cdots,s_B\}$ such that it is not bad for any bandit in the family.

Now, we consider the following family $\mathcal{X}$ of bandits. For each arm, the expected reward could be $a/\Delta$, where $a\in\left\{0,1,...,\lfloor\Delta^{-1}\rfloor\right\}$. The reward distribution is Bernoulli. The size of this family is $Q\le\left(\Delta^{-1}+1\right)^{K}$. Now, consider any other bandit $X_1$. Without loss of generality, we can assume that $X_1$ is a Bernoulli bandit, and that the expectation of each arm is in $[1/4,3/4]$\footnote{For a general bandit instance $X_1$, when reward $r$ is received, we can generate a Bernoulli reward with expectation $r/2+1/4$ to replace it. The regret bound will increase by only a constant factor.}. Apparently we can find a bandit $X_2\in \mathcal{X}$ such that their expected rewards are $\Delta$-close in $\Vert \cdot\Vert_\infty$. As a result, the KL-divergence of each arm's reward in $X_1$ and $X_2$ is $O(\Delta^2)$. Let $H(X)=\{a_{1,1},r_{1,1},...,a_{T,M},r_{T,M}\}$ be the random history of the DEMAB interacting with the bandit instance $X$. Since communication cost is determined given $H(X)$, $\forall s$,
\begin{align*}
\left|f(X_1,s)-f(X_2,s)\right|&\le c_2\left(M\log(MK)+K\right)d_{TV}(H(X_1),H(X_2))\\
&=O\left((M\log(MK)+K)\cdot \sqrt{TM}\Delta\right).
\end{align*}
With $\Delta=K^{-1}(MT)^{-0.5}$, the right-hand-side is $O\left(M\log(MK)\right)$. Therefore, it suffices to consider the bandit family $\mathcal{X}$.

Therefore, we only need to guarantee that $\frac{BM\log (MK)}{1 + K / (M \log (MK))}>C'K\log MKT$, where $C'$ is a universal constant. This can be met by setting $B=\lceil 2C'K^2 \log(MKT)\rceil$. In this case, we can guarantee that there exists a set of bit strings $\{s_1,...,s_B\}$, such that for any bandit instance $X$, when choosing $s$ randomly from this set, the expectation of $f(X,s)$ is $O(M\log(MK))$.

The additional communication overhead for generating the random string (in bits) is
$$O\left(M\log B\right)=O\left(M\log K + M \log \log ( MKT) \right).$$
Therefore, under our usual assumption that $T>\max\{M,K\}$, the number of total communicated bits is bounded by $O\left(M \log K + M \log \log T \right)$. In our formulation, we may view $\log T$ bits as one packet. Therefore additional communication cost is $O\left(M\right)$. It follows that total expected communication cost is $O(M\log(MK))$.
\end{proof}

\section{Proof for Theorem 2}
\label{sec:lowerboundproof}
\begin{proof}
First, we list two lemmas that will be used in our proof.
\begin{lemma}
\label{lem:ucb_upper_bound}
(Theorem 9.1~\cite{lattimore2018bandit}) For $K$-armed bandits, there is an algorithm with expected regret
$$REG(T)\leq 38\sqrt{KT}.$$
\end{lemma}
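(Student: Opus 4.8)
The statement is exactly Theorem~9.1 of \cite{lattimore2018bandit}, the minimax optimality of the \textbf{MOSS} algorithm, so in the paper it suffices to cite it; what follows is the argument I would reproduce if asked to prove it from scratch. The plan is to exhibit an explicit algorithm---MOSS, the ``minimax optimal strategy in the stochastic case''---which at each round pulls the arm maximizing the index $\hat{\mu}_i(s) + \sqrt{\frac{c\,\log^{+}(T/(Ks))}{s}}$, where $s = T_i$ is the current number of pulls of arm $i$, $\hat{\mu}_i(s)$ is its empirical mean, $c$ is a universal constant, and $\log^{+}(x) = \log\max\{x,1\}$. The design point is that this exploration bonus is wide enough to make the algorithm optimistic about the best arm, yet shrinks fast enough (numerator only $\log^{+}(T/(Ks))$, which is $O(1)$ once $s \gtrsim T/K$) that the cumulative over-exploration of all suboptimal arms is $O(\sqrt{KT})$ rather than $O(\sqrt{KT\log T})$.

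The crux is a \emph{maximal inequality}: writing $\Delta_i = \mu(1) - \mu(i)$, one shows that the random variable
\[
R_1 := \Big(\mu(1) - \min_{1 \le s \le T}\big(\hat{\mu}_1(s) + \sqrt{c\,\log^{+}(T/(Ks))/s}\big)\Big)_+
\]
satisfies $\E[R_1] = O(\sqrt{K/T})$. This is the step I expect to be the main obstacle: it requires bounding the supremum over $s$ of a centered empirical mean of arm~$1$ minus a decreasing boundary, via a peeling (dyadic blocking) argument over the ranges $s \in [2^{j}, 2^{j+1})$ combined with a sub-Gaussian (Hoeffding-type) maximal bound on each block, where the boundary must be chosen so that the union bound over blocks costs only a constant rather than an extra logarithm. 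Granting this, a symmetric optimism argument shows that once arm~$1$'s index stays above $\mu(1) - \varepsilon$, a suboptimal arm $i$ is pulled only while its own index exceeds that level, so $\E[T_i] \le O\big(\log^{+}(T\Delta_i^2/K)/\Delta_i^2\big)$ up to lower-order terms.

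Finally I would assemble $\E[REG(T)] = \sum_i \Delta_i\,\E[T_i]$ and split the arms at the threshold $\Delta_i \asymp \sqrt{K/T}$: arms with small gap contribute at most $T \cdot \sqrt{K/T} = \sqrt{KT}$ in total (each pull costs $\le \sqrt{K/T}$ in regret), while the remaining arms each contribute $O\big(\log^{+}(T\Delta_i^2/K)/\Delta_i\big)$, and an elementary optimization shows $\sum_{i=2}^{K} \log^{+}(T\Delta_i^2/K)/\Delta_i = O(\sqrt{KT})$, the worst case being all gaps equal to $\sqrt{K/T}$. Carrying the explicit constants from the maximal inequality and the two case bounds through this sum produces the claimed $38\sqrt{KT}$. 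Everything except the constant-free peeling in the maximal inequality is routine bookkeeping.
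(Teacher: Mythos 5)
Your proposal matches the paper exactly: the paper offers no proof of this lemma, using it purely as a citation to Theorem 9.1 of \cite{lattimore2018bandit} (the MOSS minimax bound), which is precisely what you do. Your accompanying sketch of how MOSS achieves $38\sqrt{KT}$ (index with $\log^{+}(T/(Ks))$ bonus, peeling-based maximal inequality, and the split of arms at gap $\sqrt{K/T}$) is the standard argument from that reference and is consistent with it.
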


\begin{lemma}
(Theorem 15.2~\cite{lattimore2018bandit}) For $K$-armed bandits, we can prove a minimax regret lower bound of
$$REG(T)\geq \frac{1}{75}\sqrt{(K-1)T}.$$
\end{lemma}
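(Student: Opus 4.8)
The plan is to reproduce the classical information-theoretic minimax argument for stochastic bandits, specialized so that every reward distribution is supported on $[0,1]$ as the model of this paper requires. First I would fix a gap parameter $\Delta \in (0, 1/4]$ to be optimized later and set up two Bernoulli instances. The base instance $\nu$ assigns mean $\frac{1}{2}+\Delta$ to arm $1$ and mean $\frac{1}{2}$ to every other arm, so that arm $1$ is optimal with gap $\Delta$. By the pigeonhole principle there is some arm $i \in \{2,\dots,K\}$ with $\mathbb{E}_\nu[N_i(T)] \leq T/(K-1)$, where $N_i(T)$ counts the pulls of arm $i$. The alternative instance $\nu'$ is identical to $\nu$ except that arm $i$ is boosted to mean $\frac{1}{2}+2\Delta$, which makes $i$ the unique optimal arm of $\nu'$.

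The heart of the argument is to show that no policy can do well on both $\nu$ and $\nu'$. Writing $R_\nu(T)$ for the expected regret on instance $\nu$, I would define the event $A = \{N_1(T) > T/2\}$ and observe two bounds: under $\nu$, every pull of an arm other than $1$ costs at least $\Delta$, so on $A^c$ the accrued regret is at least $\Delta T/2$, giving $R_\nu(T) \geq \frac{\Delta T}{2}\mathbb{P}_\nu(A^c)$; under $\nu'$, every pull of arm $1$ costs exactly $\Delta$, so similarly $R_{\nu'}(T) \geq \frac{\Delta T}{2}\mathbb{P}_{\nu'}(A)$. Summing yields
$$R_\nu(T) + R_{\nu'}(T) \geq \frac{\Delta T}{2}\left(\mathbb{P}_\nu(A^c) + \mathbb{P}_{\nu'}(A)\right).$$

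Next I would bound the total error probability from below. By the Bretagnolle--Huber inequality, $\mathbb{P}_\nu(A^c) + \mathbb{P}_{\nu'}(A) \geq \frac{1}{2}\exp(-\mathrm{KL}(\mathbb{P}_\nu, \mathbb{P}_{\nu'}))$, where $\mathbb{P}_\nu, \mathbb{P}_{\nu'}$ denote the laws of the full interaction history. The divergence-decomposition lemma (the chain rule for the KL divergence of the trajectory, using that $\nu$ and $\nu'$ differ only in arm $i$) gives $\mathrm{KL}(\mathbb{P}_\nu, \mathbb{P}_{\nu'}) = \mathbb{E}_\nu[N_i(T)]\,\mathrm{KL}\!\left(\mathrm{Ber}(\tfrac12), \mathrm{Ber}(\tfrac12 + 2\Delta)\right)$, and for small $\Delta$ the per-step Bernoulli divergence is $O(\Delta^2)$. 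Combining with the pigeonhole bound yields $\mathrm{KL}(\mathbb{P}_\nu,\mathbb{P}_{\nu'}) \leq c\,\Delta^2 T/(K-1)$ for an explicit constant $c$. Substituting back, $\max(R_\nu(T), R_{\nu'}(T)) \geq \frac{\Delta T}{8}\exp(-c\Delta^2 T/(K-1))$, and choosing $\Delta = \Theta(\sqrt{(K-1)/T})$ to balance the two factors makes the right-hand side $\Omega(\sqrt{(K-1)T})$; tracking the constants gives the claimed $\frac{1}{75}\sqrt{(K-1)T}$.

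The main obstacle is the divergence-decomposition step: making it rigorous requires carefully defining the probability space over interaction histories and verifying that the KL divergence of the two trajectory measures telescopes into a sum of per-round conditional divergences, of which only the terms corresponding to arm $i$ survive (because the policy and all other arms are shared between $\nu$ and $\nu'$). The remaining pieces—the pigeonhole, the regret-to-error-probability reduction, the Bernoulli KL estimate, and the optimization over $\Delta$—are routine, and the loose target constant $\frac{1}{75}$ leaves ample slack, so the only care needed is to keep $\Delta$ small enough (e.g. $T \gtrsim K$) that both the Bernoulli KL estimate and the $[0,1]$ support constraint hold.
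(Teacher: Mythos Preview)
Your proposal is correct and is precisely the approach the paper has in mind: the paper does not give a self-contained proof but simply cites Theorem~15.2 of Lattimore--Szepesv\'ari and remarks that ``the original lower bound is proved for Gaussian bandits, which doesn't fit exactly in our setting; we modified the proof to work for Bernoulli bandits, which results in a different constant.'' Your write-up is exactly that modification---the same two-instance construction, pigeonhole choice of the perturbed arm, Bretagnolle--Huber inequality, divergence decomposition, and optimization over $\Delta$---with Bernoulli arms replacing Gaussians so that rewards lie in $[0,1]$.
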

The original lower bound is proved for Gaussian bandits, which doesn't fit exactly in our setting. we modified the proof to work for Bernoulli bandits, which results in a different constant.

We now prove the theorem's statement via a reduction from single agent bandit to multi-agent bandit. That is, we map communication protocols to single-agent algorithms in the following way. For simplicity, we consider protocols as $M$ blocks of code. In agent $i$'s block, each line could be a local computation, sending a message, or waiting for a message to receive.

Consider a communication protocol with communication cost $B(M)$. We denote $X_i$ ($i\in[M]$) to be the indicator function for agent $i$'s sending or receiving an integer or a real number throughout a run. $X_i$ is a random variable. Since expected communication cost is less than $M/c$,
$$\sum_{i=1}^M\mathbb{E}X_i\leq M/c.$$
Now consider the $S$, the set of $M/2$ agents with smallest $\mathbb{E}X_i$. For any $i\in S$, $\mathbb{P}(X_i\geq 1) \leq \mathbb{E}X_i\leq 2/c$. That is, for any of these agents, the probability of either speaking to or hearing from someone is less than $2/c$. Suppose that agent $j$ is such an agent. Then, we can map the communication protocol to a single-agent algorithm by simulating agent $j$.

The simulation is as follows. Interacting with single agent bandit with time $T$, we run the code for agent $i$ in the protocol. When no communication is needed, we may proceed to the next line of agent $i$'s code. When this line of code sends a message or waits for a message, we terminate the code. In the rest of the timesteps, we run a single-agent optimal algorithm (the one used to realize lemma \ref{lem:ucb_upper_bound}).

Then, if agent $j$'s code has $\delta$ probability of involving in communication, and if agent $j$'s regret $REG_j(T)\leq A$, via this reduction, we can obtain an algorithm for single-agent MAB with expected regret
$$REG(T)\leq A+\delta \cdot 38\sqrt{KT}.$$
By lemma $2$, $REG(T)$ cannot have a regret upperbound better than $\sqrt{T(K-1)}/75$. Therefore $$A+\delta\cdot 38\sqrt{KT}\geq \sqrt{(K-1)T}/75.$$
If $38\delta < 1/75$, we can show that $A=\Omega\left(\sqrt{KT}\right)$. In our case, let $c=3000$ will suffice. Since we can show this for any agent in $S$, we can show that total regret is $\Omega\left(M\sqrt{KT}\right)$.
\end{proof}

\section{Omitted Details of DELB}
\label{sec:detail_DELB}
\textbf{Assignment Strategy:} At line 4, we assign pulls to agents in the following way. Let $p_l=\lceil\sum_x m_l(x)/M\rceil$ denote the average pulls each agent needs to perform. Our assignment starts from the arm with the largest $m_l(x)$ and agent 1. For arm $x_k$ and agent $i$, if agent $i$ has been assigned with $p_l$ pulls, we turn to agent $i+1$. If we have finished designating $m_l(x_k)$ pulls for arm $x_k$, we continue designating arm $x_{k+1}$. The assignment is finished until all pulls are scheduled to agents. Observe that at the start of each phase, each agent has the same $A_l$ as the server. Therefore, at line $3$ they obtain the same $\pi_l$, with the support size at most $d \log \log d$. In that case, the server only needs to send a index ($O(1)$ communication cost) over $\xi=48 d \log \log d$ arms, instead of a vector ($\Omega(d)$ communication cost), to identify an arm $x\in \operatorname{Supp}(\pi_l)$.

\section{Proof of Theorem 3}
\label{sec:proof3}
First, we consider some properties of the elimination based protocol for linear bandits.
\begin{fact}
Suppose $T_l$ denotes the total number of pulls in the $l$-th phase, then we have
	$$C_14^ld^2\log MT\leq T_l\leq \xi + C_14^ld^2\log MT,$$
where $\xi = 48d\log\log d$.
\end{fact}
\begin{proof}
For arm $x$ in the core set, we pull it $\lceil C_1 4^l d^2 \pi_l(x) \log{MT}\rceil$ times. So we can directly find that the total number of pulls in phase $l$ satisfies $$C_14^ld^2\log MT\leq \sum_{a} m_l(a)\leq 48d\log\log d + C_14^ld^2\log MT$$
\end{proof}

\begin{lemma}
\label{lem:linucbcon}
	In phase $l$, with probability $1-1/TM$, for any $x\in \mathcal{D}$,
	$$\left|\langle\hat{\theta}-\theta^*,x\rangle\right|\leq 2^{-l}.$$
\end{lemma}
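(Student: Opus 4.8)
The plan is to apply a standard least-squares concentration bound for the linear model, exploiting the fact that in phase $l$ the pulls are scheduled according to a $G$-optimal (approximately optimal) design $\pi_l$. First I would set up the estimator: we have $\hat\theta = V_l^{-1}X$ where $V_l = \sum_{x\in\operatorname{Supp}(\pi_l)} m_l(x)\,xx^\top$ and $X = \sum_{x} m_l(x)\mu(x)x$, and $\mu(x)$ is the empirical mean reward from the $m_l(x)$ pulls of $x$. Writing each observed reward as $x^\top\theta^* + \eta$ with $\eta\in[-1,1]$ zero-mean (hence $1$-subgaussian up to a constant, or exactly with the $[-1,1]$ boundedness), we get the usual decomposition $\hat\theta - \theta^* = V_l^{-1}\sum_{j} x_j \eta_j$, where the sum runs over all $T_l$ pulls in the phase. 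Then for any fixed arm $x\in\mathcal D$,
\begin{equation*}
\langle \hat\theta - \theta^*, x\rangle = \sum_j (x^\top V_l^{-1} x_j)\,\eta_j,
\end{equation*}
which is a sum of independent bounded zero-mean terms, so by Hoeffding/Azuma it is subgaussian with variance proxy $\sum_j (x^\top V_l^{-1} x_j)^2$. The key algebraic identity is that $\sum_j (x^\top V_l^{-1} x_j)^2 = \sum_x m_l(x)(x^\top V_l^{-1}x)^2 \le \max_{x'\in A_l}\big((x')^\top V_l^{-1} x'\big)\cdot \big(x^\top V_l^{-1}\big(\sum_x m_l(x) x x^\top\big)V_l^{-1}x\big) = \big(\max_{x'} (x')^\top V_l^{-1}x'\big)\cdot x^\top V_l^{-1}x \le \big(\max_{x'} (x')^\top V_l^{-1}x'\big)^2 =: \rho^2$, i.e. the variance proxy is controlled by the effective "$g$-value" of the design matrix $V_l$.

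Next I would quantify $\rho$ using the choice $m_l(x) = \lceil C_1 4^l d^2 \pi_l(x)\ln MT\rceil \ge C_1 4^l d^2 \pi_l(x)\ln MT$. Since $V_l \succeq C_1 4^l d^2 \ln(MT)\, V(\pi_l)$ where $V(\pi_l) = \sum_x \pi_l(x) xx^\top$, we get $x^\top V_l^{-1} x \le \frac{1}{C_1 4^l d^2 \ln(MT)} x^\top V(\pi_l)^{-1} x \le \frac{g(\pi_l)}{C_1 4^l d^2 \ln(MT)} \le \frac{2d}{C_1 4^l d^2 \ln(MT)} = \frac{2}{C_1 4^l d \ln(MT)}$, using the approximate $G$-optimality $g(\pi_l)\le 2d$. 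Hence $\rho^2 \le \frac{4}{C_1^2 16^l d^2 \ln^2(MT)}$. Plugging into the subgaussian tail bound, $\Pr[|\langle\hat\theta-\theta^*,x\rangle| > 2^{-l}] \le 2\exp\{-\tfrac{4^{-l}}{2\rho^2}\} \le 2\exp\{-\tfrac{C_1^2 16^l d^2\ln^2(MT)}{8\cdot 4^l}\} = 2\exp\{-\tfrac{C_1^2}{8} 4^l d^2 \ln^2(MT)\}$, which for $C_1 = 600$ and $l\ge 1$ is far smaller than $\frac{1}{T M |\mathcal D|}$; a union bound over the (finite) set $\mathcal D$ then gives the claimed $1 - 1/(TM)$ uniform guarantee.

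The main obstacle — really the only nontrivial point — is getting the variance-proxy identity right and making sure the "rounding up" in $m_l(x)$ and the approximate (rather than exact) $G$-optimal design are handled cleanly: one must use $g(\pi_l)\le 2d$ rather than $=d$, and the extra $\xi = 48d\log\log d$ pulls (from the ceilings) only help (they make $V_l$ larger in the PSD order), so they can be dropped in the lower bound on $V_l$. I should also be slightly careful that $|\mathcal D|$ could be large, but since we assumed $\mathcal D$ is finite and can be taken to be an $\epsilon$-net of size polynomial in relevant parameters (or simply that $\ln|\mathcal D|$ is dominated by the exponent above, which it is for $C_1=600$), the union bound costs only a constant factor in the exponent. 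Everything else is routine.
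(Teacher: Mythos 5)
Your overall strategy is the right one and matches the paper's in spirit (least-squares decomposition $\hat\theta-\theta^*=V_l^{-1}\sum_j x_j\eta_j$, a subgaussian tail with variance proxy controlled through the near-$G$-optimal design and the $d^2$ oversampling in $m_l$, then a union bound), but there is a concrete error in the variance-proxy step. Your own ``key identity'' gives the exact value $\sum_j (x^\top V_l^{-1}x_j)^2 = x^\top V_l^{-1}\bigl(\sum_{x'} m_l(x')x'x'^\top\bigr)V_l^{-1}x = x^\top V_l^{-1}x$; the extra factor $\max_{x'}(x')^\top V_l^{-1}x'$ you insert is a number strictly less than $1$, so the inequality ending in $\rho^2=\bigl(\max_{x'}(x')^\top V_l^{-1}x'\bigr)^2$ is false and \emph{underestimates} the variance proxy. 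The correct bound is $x^\top V_l^{-1}x \le \max_{x'}(x')^\top V_l^{-1}x' \le \frac{2}{C_14^l d\ln(MT)}$ (first power, not its square), exactly as in the paper's proof via $\Vert x\Vert^2_{V_l^{-1}}\le g(\pi)/n_l$. Consequently your claimed tail exponent $\frac{C_1^2}{8}4^l d^2\ln^2(MT)$ is wrong; at threshold $2^{-l}$ the true exponent is about $\frac{C_1}{4}d\ln(MT)$, with no $4^l$, no $d^2$, and no squared logarithm. This is still large enough (with $C_1=600$) to absorb a union bound over an action set of size $(MT)^{O(d)}$, so the argument is repairable, but the specific calculation and the ``far smaller than $1/(TM|\mathcal D|)$'' conclusion as written rest on the invalid squaring.

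A secondary difference: you union bound directly over $\mathcal D$ and wave at $\ln|\mathcal D|$ being dominated by the exponent, which after the correction above requires an explicit size assumption such as $|\mathcal D|\le (MT)^{O(d)}$ (true for the $1/T$-net used to justify finiteness, but not for an arbitrary finite $\mathcal D$). The paper instead union bounds only over a $2^{-l-2}$-covering of $\mathcal D$ of size $Q\le 3^d2^{d(l+2)}$, proves the bound $2^{-l-2}$ on the net, and extends to all of $\mathcal D$ via $|\langle\hat\theta-\theta^*,x-\bar x\rangle|\le \Vert\hat\theta-\theta^*\Vert\,\Vert x-\bar x\Vert$, which yields the stated $2^{-l}$ bound with failure probability $1/(TM)$ independently of $|\mathcal D|$. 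Adopting that covering step (or stating the size bound on $\mathcal D$ explicitly) would close this gap.
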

\begin{proof}
	First, construct an $\epsilon$-covering of $\mathcal{D}$ with $\epsilon_l = 2^{-l-2}$. Denote the center of the covering as $X=\{\bar{x}_1,...,\bar{x}_Q\}$. Here $Q$ satisfies $Q\leq 3^d2^{d(l+2)}$. 
	
	Assume that $\hat{\theta}$ is calculated from linear regression on $x_1$, ... , $x_{t'}$. For fixed $x\in \mathcal{D}$, it is known that $\langle\hat{\theta}-\theta^*,x\rangle$ is subgaussian with variance proxy 
	\begin{align}
	   \label{equ:subgaussian}
	   \sum_{s=1}^{t'}\langle x, V_l^{-1}x_s\rangle^2=\Vert x\Vert^2_{V_l^{-1}}\le 2\Vert x\Vert^2_{V_l^{-1}}.
	\end{align}
	Therefore with probability $1-2\delta$,
	$$\left|\langle\hat{\theta}-\theta^*,x\rangle\right|\leq 2\sqrt{\Vert x\Vert^2_{V_l^{-1}}\log\frac{1}{\delta}}.$$
	Suppose $n_l$ pulls are made in phase $l$. In our case,
	\begin{align*}
	\Vert x\Vert^2_{V_l^{-1}}\leq \frac{g(\pi)}{n_l} \leq \frac{2}{4^lC_1d\log MT}.
	\end{align*}
	Therefore with probability $1-2\delta$,
	$$\left|\langle\hat{\theta}-\theta^*,x\rangle\right|\leq 2^{-l+1}\sqrt{\frac{2}{C_1 d\log MT}\log\frac{1}{\delta}}.$$
	Choose $\delta = 1/(2TMQ)$. It can be shown that with $C_1=600$,
	\begin{align*}
	\frac{2\log(2MTQ)}{C_1 d\log MT}&\le \frac{\log 2+1+d\log 3+2d\log 2+d/2}{300d}\le \frac{1}{64}.
	\end{align*}
	Therefore with probability $1-1/(TM)$, for all $x\in X$
	$$\left|\langle\hat{\theta}-\theta^*,x\rangle\right|\leq 2^{-l-2}.$$
	Now, consider an arbitrary $x\in \mathcal{D}$. There exists $\bar{x}\in X$ such that $\Vert x-\bar{x}\Vert\leq 2^{-l-2}$. Therefore with probability $1-1/TM$, for any $x\in \mathcal{D}$,
	\begin{align*}
	\left|\langle\hat{\theta}-\theta^*,x\rangle\right|&\leq \left|\langle\hat{\theta}-\theta^*,\bar{x}\rangle\right|+\left|\langle\hat{\theta}-\theta^*,x-\bar{x}\rangle\right|\\
	&\leq 2^{-l-1}+\Vert \hat{\theta}-\hat{\theta^*}\Vert \cdot \Vert x-\bar{x}\Vert\\
	&\leq 2^{-l}.
	\end{align*}
	
\end{proof}

\begin{lemma}
	Let $x^*=\arg\max_{x\in D}\langle \theta^*,x\rangle$ be the optimal arm. Then with probability $1-\log (MT)/(TM)$, $x^*$ will not be eliminated until the protocol terminates.
\end{lemma}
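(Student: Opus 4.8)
The plan is to show that the optimal arm $x^*$ survives every elimination step in DELB with high probability, by a union bound over all phases combined with the per-phase concentration guarantee of Lemma~\ref{lem:linucbcon}. First I would fix a phase $l$ and condition on the high-probability event from Lemma~\ref{lem:linucbcon}, namely that $|\langle \hat\theta - \theta^*, x\rangle| \le 2^{-l}$ simultaneously for all $x \in \mathcal{D}$ (in particular for all $x \in A_l$). On this event, for any $b \in A_l$ and for $x^*$ we have $\langle \hat\theta, b - x^*\rangle = \langle \hat\theta - \theta^*, b\rangle + \langle \theta^*, b - x^*\rangle - \langle \hat\theta - \theta^*, x^*\rangle \le 2^{-l} + 0 + 2^{-l} = 2^{-l+1}$, where we used the optimality $\langle \theta^*, b - x^*\rangle \le 0$ and the two-sided estimation bound. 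Taking the max over $b \in A_l$ gives $\max_{b \in A_l}\langle \hat\theta, b - x^*\rangle \le 2^{-l+1}$, which is exactly the survival condition in the definition of $A_{l+1}$; hence $x^* \in A_{l+1}$.

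Next I would propagate this across phases. Since $A_1 = \mathcal{D} \ni x^*$, an induction on $l$ shows that on the intersection of the per-phase events, $x^* \in A_l$ for every $l$, so $x^*$ is never eliminated until the protocol terminates. By Fact~4 (the bound on the number of phases) the protocol runs for at most $L = O(\log MT)$ phases — more precisely, because at least $C_1 4^l d^2 \log MT$ pulls are made in phase $l$ and the total horizon is $MT$, one gets $L = O(\log(MT))$. A union bound over these at most $\log(MT)$ phases, each contributing failure probability at most $1/(TM)$ from Lemma~\ref{lem:linucbcon}, yields total failure probability at most $\log(MT)/(TM)$, which is the claimed bound.

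One technical point I would make sure to state cleanly is the definition of ``until the protocol terminates'': the union bound should be taken only over the phases that are actually executed, and since Lemma~\ref{lem:linucbcon}'s guarantee holds for whatever set $A_l$ is current, the argument is insensitive to exactly how many phases occur as long as that number is $O(\log MT)$ with certainty, which Fact~4 provides. There is no real obstacle here — the only thing to be careful about is the direction of the inequalities in the elimination rule $A_{l+1} = \{x \in A_l : \max_{b \in A_l}\langle \hat\theta, b - x\rangle \le 2^{-l+1}\}$, matching the $2^{-l+1}$ slack to the $2 \cdot 2^{-l}$ that comes out of the two-sided estimation error, and confirming that $x^*$'s own slack is exactly this quantity rather than something larger. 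Thus the bookkeeping of constants, not any conceptual difficulty, is the main thing to get right.
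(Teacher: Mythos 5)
Your proposal is correct and follows essentially the same route as the paper: invoke the per-phase concentration guarantee of Lemma~\ref{lem:linucbcon}, observe that on that event $\max_{b\in A_l}\langle\hat{\theta},b-x^*\rangle\le 2\cdot 2^{-l}=2^{-l+1}$ so $x^*$ survives the elimination rule, and union bound over the at most $O(\log MT)$ phases (the paper states this contrapositively, i.e.\ elimination of $x^*$ forces a concentration failure, but the content is identical). The only cosmetic issue is your citation of ``Fact~4'' for the phase count, which in the paper is just the observation $C_1 4^L d^2\log MT\le MT$; your own justification of this bound is fine.
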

\begin{proof}
If $x^*$ is eliminated at the end of round $l$, one of the following must happen: either (1) $\left|\langle\hat{\theta}-\theta^*,x^*\rangle\right|>2^{-l}$; or (2) there exists $x\neq x^*$, $\left|\langle\hat{\theta}-\theta^*,x\rangle\right|>2^{-l}$. Therefore the probability for $x^*$ to be eliminated at a particular round is less than $1-1/(TM)$. The total number of phases is at most $\log MT$. Hence a union bound proves the proposition.
\end{proof}

\begin{lemma}
	Suppose $\delta = 2\log (TM)/TM$, and $\Delta_x$ denotes the suboptimality gap of $x$, i.e. $\Delta_x = \left\langle\theta^*,x^*-x \right\rangle$. For suboptimal $x\in \mathcal{D}$, define $l_x=\inf\{l:8\cdot 2^{-l}\leq \Delta_x\}$. Then with probability $1-\delta$, for any suboptimal $x$, $x\not\in A_{l_x}$.
\end{lemma}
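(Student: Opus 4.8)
The plan is to condition on a single high-probability event and then show, deterministically on that event, that every suboptimal arm $x$ has been removed from the active set no later than the start of phase $l_x$. Concretely, I would take $\mathcal{E}_1$ to be the event of the preceding lemma (that $x^*$ is never eliminated), which has probability at least $1-\log(TM)/(TM)$, and $\mathcal{E}_2$ to be the event that the concentration bound of Lemma~\ref{lem:linucbcon} holds in \emph{every} executed phase, i.e. $|\langle\hat\theta-\theta^*,x\rangle|\le 2^{-l}$ for the phase-$l$ estimate $\hat\theta$, for all $x\in\mathcal D$ and all phases $l$. Since the total number of phases is at most $\log(TM)$ (as noted above), a union bound over phases applied to Lemma~\ref{lem:linucbcon} gives $\Pr[\mathcal E_2^c]\le\log(TM)/(TM)$, so $\Pr[\mathcal E_1\cap\mathcal E_2]\ge 1-2\log(TM)/(TM)=1-\delta$. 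The rest of the argument works on $\mathcal E_1\cap\mathcal E_2$, where in particular $x^*\in A_l$ for every phase $l$ that is reached.

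Next I would fix a suboptimal $x$ and argue by contradiction that $x\notin A_{l_x}$. If $x$ was already eliminated before phase $l_x-1$ there is nothing to prove, since the active sets are nested; otherwise $x,x^*\in A_{l_x-1}$ (note $l_x\ge 2$ because $\Delta_x\le\|\theta^*\|\,\|x^*-x\|\le 2<8\cdot 2^{-1}$, so phase $l_x-1\ge 1$ genuinely occurs). With $\hat\theta$ the estimate of phase $l_x-1$, applying the concentration bound to $x^*$ and to $x$ yields
\[
\langle\hat\theta,x^*-x\rangle \ge \langle\theta^*,x^*-x\rangle - |\langle\hat\theta-\theta^*,x^*\rangle| - |\langle\hat\theta-\theta^*,x\rangle| \ge \Delta_x - 2\cdot 2^{-(l_x-1)}.
\]
By the definition $l_x=\inf\{l:8\cdot 2^{-l}\le\Delta_x\}$ we have $\Delta_x\ge 8\cdot 2^{-l_x}=4\cdot 2^{-(l_x-1)}$, hence $\langle\hat\theta,x^*-x\rangle\ge 2\cdot 2^{-(l_x-1)}=2^{-(l_x-1)+1}$. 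Then $\max_{b\in A_{l_x-1}}\langle\hat\theta,b-x\rangle\ge 2^{-(l_x-1)+1}$, so the elimination step $A_{l_x}=\{x'\in A_{l_x-1}:\max_{b}\langle\hat\theta,b-x'\rangle\le 2^{-(l_x-1)+1}\}$ discards $x$, contradicting $x\in A_{l_x}$. Since this holds for every suboptimal $x$ simultaneously on $\mathcal E_1\cap\mathcal E_2$, the lemma follows.

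The main obstacle I anticipate is the strictness of the final comparison: in the extreme case $\Delta_x=8\cdot 2^{-l_x}$ exactly, the chain above gives $\max_b\langle\hat\theta,b-x\rangle\ge 2^{-(l_x-1)+1}$ rather than a strict inequality, and the elimination rule keeps arms exactly at the threshold. I would close this gap by using the fact that Lemma~\ref{lem:linucbcon} is actually established with slack --- its covering argument bounds the error by $2^{-l-1}+\|\hat\theta-\theta^*\|\cdot 2^{-l-2}$ with $\|\hat\theta-\theta^*\|<2$, hence strictly below $2^{-l}$ --- so both applications of the bound, and therefore the final inequality, are strict; alternatively one can absorb the issue by replacing the constant $8$ in the definition of $l_x$ by $8+\varepsilon$ for arbitrarily small $\varepsilon>0$. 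Everything else --- the union bound over the $O(\log(TM))$ phases, the bound $l_x\ge 2$, and the monotonicity $A_{l+1}\subseteq A_l$ --- is routine.
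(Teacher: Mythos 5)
Your proof is correct and follows essentially the same route as the paper: condition on the optimal arm surviving (the preceding lemma) together with the phase-wise concentration bound of Lemma~\ref{lem:linucbcon} union-bounded over the at most $\log(TM)$ phases, and then derive a contradiction from the elimination rule at phase $l_x-1$ using $\Delta_x\ge 8\cdot 2^{-l_x}$. The only difference is that you explicitly flag and patch the borderline-equality case at the elimination threshold, which the paper's proof silently passes over with strict inequalities.
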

\begin{proof}
First, let us only consider the case where $x^*$ is not eliminated. That is, 
\begin{align*}
\Pr\left[\exists x\in \mathcal{D}: x\in A_{l_x}\right]&\leq \Pr\left[x^*\text{is  eliminated}\right]+\Pr\left[\exists x:x\in A_{l_x-1},x\in A_{l_x}|x^*\in A_{l_a}\right].
\end{align*}
Note that conditioned on $x^*\in A_{l_x}$, $\left\{x\in A_{l_x-1}\land x\in A_{l_x}\right\}$ implies that at phase $l_x-1$, either $\left|\langle\hat{\theta}-\theta^*,x\rangle\right|> 2^{-l_x+1}$ or $\left|\langle\hat{\theta}-\theta^*,x^*\rangle\right|> 2^{-l_x+1}$. Therefore the probability that there exists such $x$ is less than $\log (TM)/TM$. Hence, with probability $1-2\log (TM)/TM$, $x$ will be eliminated before phase $l_x$.
\end{proof}

We are now ready to prove our main result for DELB.

\setcounter{theorem}{2}
\begin{theorem}
\label{thm:linearelim}
DELB protocol has expected regret $O\left(d\sqrt{TM\log T}\right)$, and has communication cost $O\left((Md+d\log\log d)\log T\right)$.
\end{theorem}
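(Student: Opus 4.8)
The plan is to establish the regret bound and the communication bound separately, both by summing contributions over the at most $L = O(\log T)$ phases of the protocol. The skeleton for both parts is already laid out in the facts and lemmas preceding the theorem; what remains is to assemble them carefully and do the geometric-series bookkeeping.

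\textbf{Regret.} First I would fix the good event $\Lambda$ on which, for every phase $l$, every arm $x \in \mathcal{D}$ satisfies $|\langle \hat\theta - \theta^*, x\rangle| \le 2^{-l}$ (Lemma~\ref{lem:linucbcon}), the optimal arm $x^*$ is never eliminated, and every suboptimal $x$ is removed before phase $l_x = \inf\{l : 8\cdot 2^{-l}\le \Delta_x\}$. By the union bounds in those lemmas, $\Lambda$ fails with probability at most $O(\log(MT)/(MT))$, and since regret is trivially at most $2MT$, the contribution of $\Lambda^c$ to the expected regret is $O(\log(MT)) = O(\sqrt{MT\log T})$, which is absorbed. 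On $\Lambda$, any arm $x$ pulled in phase $l$ has $\Delta_x \le 8 \cdot 2^{-l+1}$ (it survived to phase $l < l_x$, so $8\cdot 2^{-(l-1)} > \Delta_x$), hence $\Delta_x = O(2^{-l})$. By Fact (stated just before Lemma~\ref{lem:linucbcon}) the number of pulls in phase $l$ is $T_l \le \xi + C_1 4^l d^2 \log MT = O(4^l d^2 \log MT)$ once $l$ is not tiny, so the regret incurred in phase $l$ is at most $T_l \cdot O(2^{-l}) = O(2^{l} d^2 \log MT) = O(d\sqrt{4^l d^2 \log^2 MT})$. Summing the geometric series over $l = 1, \dots, L$ with $4^L = O(MT/d^2)$ (the largest phase is cut off once $T_l$ would exceed the remaining horizon $MT$) gives $\sum_l O(2^l d^2\log MT) = O(2^L d^2 \log MT) = O(d\sqrt{MT}\log MT) = O(d\sqrt{MT\log T})$, using $\log MT = O(\log T)$ under the standing assumption $T > M$. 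Then $\E[REG(T)] = O(d\sqrt{MT\log T})$.

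\textbf{Communication.} In each phase the communication consists of three pieces. (i) Assigning pulls to agents: since every agent solves the same $G$-optimal design and thus holds the same support of size at most $\xi = 48 d\log\log d$, the server sends, for each arm in the core set, an index in $[\xi]$ (cost $O(1)$ each) together with an agent assignment; using the assignment strategy from Appendix~\ref{sec:detail_DELB} that packs each arm's pulls onto as few agents as possible, the total is $O(M + d\log\log d)$ per phase. (ii) Receiving the average reward of each core-set arm back from the agents: again $O(M + d\log\log d)$. (iii) Broadcasting $\hat\theta \in \mathbb{R}^d$ to all $M$ agents: cost $O(Md)$. So each phase costs $O(Md + d\log\log d)$, and summing over the $L = O(\log T)$ phases yields total communication $O((Md + d\log\log d)\log T)$.

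\textbf{Main obstacle.} The conceptually routine but fiddly parts are (a) justifying the cutoff $4^L = O(MT/d^2)$ for the number of phases, i.e.\ arguing that the protocol cannot run a phase whose prescribed number of pulls would exceed the total budget $MT$ (one must be slightly careful that the $\lceil \cdot \rceil$ rounding and the additive $\xi$ term do not spoil the geometric-sum bound), and (b) making precise the claim that the per-phase assignment and reporting cost is genuinely $O(M + d\log\log d)$ rather than $O(M \cdot d\log\log d)$ — this relies on the packing assignment so that most agents receive only one arm-index message. I expect the geometric-series truncation argument in the regret bound to be the step requiring the most care, since it is what converts the sum of the $O(2^l d^2\log MT)$ terms into the clean $O(d\sqrt{MT\log T})$ bound; everything else is a direct application of the preceding lemmas and the finite-precision remark handles the bit-complexity refinement if needed.
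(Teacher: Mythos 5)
Your overall route is the same as the paper's: condition on the good event from Lemma~\ref{lem:linucbcon} and the elimination lemmas, bound the per-phase regret by (number of pulls in phase $l$) $\times$ $O(2^{-l})$, truncate the resulting geometric sum at the last completed phase, and account for communication per phase as $O(M+d\log\log d)$ for assignment and reporting plus $O(Md)$ for broadcasting $\hat\theta$, over $O(\log T)$ phases. The communication half of your argument matches the paper essentially verbatim and is fine.

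There is, however, a quantitative slip in exactly the step you flagged as delicate, and as written it does not deliver the stated bound. You truncate with $4^L = O(MT/d^2)$, i.e.\ $2^L = O(\sqrt{MT}/d)$, and then claim $\sum_l O(2^l d^2\log MT) = O(2^L d^2\log MT) = O\left(d\sqrt{MT}\log MT\right) = O\left(d\sqrt{MT\log T}\right)$. The last equality is false: $d\sqrt{MT}\log MT$ exceeds $d\sqrt{MT\log T}$ by a factor of order $\sqrt{\log T}$, so your argument only proves the weaker bound $O\left(d\sqrt{MT}\log T\right)$. The fix is that the budget constraint on the last completed phase already contains the logarithmic factor: phase $L$ prescribes at least $C_1 4^L d^2\log MT$ pulls, and these cannot exceed the total budget $MT$, so $C_1 4^L d^2\log MT \le MT$ and hence $2^L \le \sqrt{MT/(C_1 d^2\log MT)}$. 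Plugging this into $O(2^L d^2\log MT)$ gives $O\left(d\sqrt{MT\log MT}\right) = O\left(d\sqrt{MT\log T}\right)$ using $T>M$, which is exactly how the paper closes the argument (``$C_1 4^L d^2\log TM \le TM$''). With the cutoff stated at this sharper level (the additive $\xi$ term and the ceilings only change constants), the rest of your proof goes through.
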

\begin{proof}
\textbf{Regret:} We note that at the start of round $l$, the remaining arms have suboptimality gap at most $8\cdot 2^{-l}$. Suppose that the last finished phase is $L$. Therefore total regret is
\begin{align*}
REG(T)&\leq \sum_{l=1}^{L} C_14^ld^2\log MT\cdot 8\cdot 2^{-l}+\delta \cdot 2MT\\
&\leq C_3 2^Ld^2\log TM.
\end{align*}
Apparently $C_14^Ld^2\log TM\leq TM$. Therefore
\begin{align*}
REG(T)\leq \sqrt{C_3^24^Ld^4\log ^2TM}\leq C_7d\sqrt{TM\log TM}.
\end{align*}
Under our usual assumption that $T>M$, this can be simplified to $O(d\sqrt{TM\log T})$. Here $C_3$ and $C_7$ are some universal constants.

\textbf{Communication Cost:} 
Let $p_l=\sum_x m_l(x)/M$ denote the average pulls each agent needs to perform. Observe that for each arm, the number of agents that it is assigned to is at most $1+\lceil m_l(x)/p_l\rceil$ agents. Therefore, total communication for scheduling is at most
$$\sum_x \left(\lceil m_l(x)/p_l\rceil+1\right)\leq 2\xi+M=O(M+d\log\log d).$$
Similarly, total communication for reporting averages is the same. The cost for sending $\hat{\theta}$ is $Md$. Hence, communication cost per phase is $O(Md+d\log\log d)$. On the other hand, total number of phases is apparently $O(\log TM)$. Hence total communication is
$$O\left((Md+d\log\log d)\log TM\right)$$ 
Under the assumption that $T>M$, this can be simplified to $O\left((Md+d\log\log d)\log T\right)$.
\end{proof}

\paragraph{Finite precision: }We now discuss the number of bits needed in DELB. The integers in DELB are less than $\max\{T,\xi\}$ ($\xi=48d\log\log d$). Therefore, every integer can be encoded with $O(\log(dT))$ bits. It remains to be proven that transmitting each real number with logarithmic bits is sufficient. In the DELB protocol, two types of real numbers are transmitted: average of rewards, and entries of $\hat{\theta}$. To transmit real numbers with finite number of bits, we make the following modifications to the original protocol: 1. when transmitting average rewards at line $5$, use randomized rounding with precision $\epsilon_1=\frac{1}{M^2T}$; 2. after computing $\tilde{\theta}=V_l^{-1}X$ at line 7, let $\hat{\theta}$ be the entry-wise rounded vector of $\tilde{\theta}$ with $\epsilon_2=\frac{1}{MTd}$.

It can be seen that we only need to prove that after the modifications, lemma~\ref{lem:linucbcon} still holds. In each phase, originally $\mu(x)$ is $\frac{1}{m_l(x)}$-subgaussian, but is only required to be $\frac{2}{m_l(x)}$-subgaussian for (\ref{equ:subgaussian}) to hold. After the modification, the contribution of rounding error to a $\mu(x)$ comes from at most $M$ independent terms, and is therefore subgaussian with variance proxy $\frac{1}{MT}\le \frac{1}{m_l(x)}$. Therefore, after the modifications, the computed $\mu(x)$ is $\frac{2}{m_l(x)}$-subgaussian; hence, (\ref{equ:subgaussian}) holds. It follows that with probability $1-1/(TM)$, for all $x\in X$, $\left|\langle\Tilde{\theta}-\theta^*,x\rangle\right|\leq 2^{-l-2}.$ Therefore, for any $a\in\mathcal{D}$,
$$\left|\langle \tilde{\theta}-\theta^*,x \rangle\right|\le 2^{-l-1}.$$
Combined with the fact that for any $x\in\mathcal{D}$,
$$\left|\langle \tilde{\theta}-\hat{\theta},x \rangle\right|\le \Vert \tilde{\theta}-\hat{\theta}\Vert \le \sqrt{d}\epsilon_2 \le \frac{1}{MT}\le 2^{-l-1},$$
we can prove that  with probability $1-1/(TM)$, for all $x\in X$, $$\left|\langle\Hat{\theta}-\theta^*,x\rangle\right|\leq 2^{-l-2}.$$
Therefore, after the modifications, the regret of the protocol is still $O\left(d\sqrt{TM\log T}\right)$. The amount of communicated bits is $$O\left(\left(Md+d\log\log d\right)\cdot \log T\cdot \log (dT)\right).$$

\section{Detailed Description of DisLinUCB}
\label{sec:detail_dislinucb}
\begin{algorithm}[ht]
    \caption{Distributed Linear UCB (DisLinUCB)}
    \label{alg:ucbs_linear}
    $D={T\log MT}/({dM})$, $\lambda=1$\\
    \For{Agent $i=1,...,M$}{
        Set $W_{syn,i}=0$, $U_{syn,i}=0$, $W_{new,i}=0$, $U_{new,i}=0$, $t_{last}=0$, $V_{last}=\lambda I$\\
    }
    \For{$t=1,...,T$}{
        \For{Agent $i=1,...,M$}{
        $\overline{V}_{t,i}=\lambda I+W_{syn,i}+W_{new,i}$, $\hat{\theta}_{t,i}=\overline{V}_{t,i}^{-1} \left(U_{syn,i}+U_{new,i}\right)$.\\
        Construct the confidence ellipsoid $C_{t,i}$ using $\overline{V}_{t,i}$ and $\hat{\theta}_{t,i}$.\\
		$(x_{t,i},\tilde{\theta}_{t,i})=\arg\max_{(x,\theta)\in \mathcal{D}_{t}\times \mathcal{C}_{t,i}} \langle x,\theta\rangle$\\
		Play $x_{t,i}$ and get the reward $y_{t,i}$.\\
		Update $W_{new,i}=W_{new,i} + x_{t,i}x_{t,i}^T$, $U_{new,i}= U_{new,i}+x_{t,i}y_{t,i}$.\\
		$V_{t,i}=\lambda I+W_{syn,i}+W_{new,i}$\\
		 \If{$\log\left({\det V_{t,i}}/{\det V_{last,i} }\right)\cdot (t-t_{last}) > D$}{
		 Send a synchronization signal to server to start a communication round.\\
		 }
		 \If{A communication round is started}{
            Send $W_{new,i}$ and $U_{new,i}$ to server\\
            Server computes $W_{syn}= W_{syn}+ \sum_{j=1}^{M} W_{new, j}$, $U_{syn}= U_{syn}+ \sum_{j=1}^{M} U_{new, j}$\\
            Receive $W_{syn}$, $U_{syn}$ from server\\
            Set $W_{new,i}=0$, $U_{new,i}=0$, $t_{last}=t$, $V_{last}=\lambda I+W_{syn}$
        }
        }
        
    }
\end{algorithm}

In DisLinUCB protocol, agent $i$ uses all samples available for him to maintain a confidence set $\mathcal{C}_{t,i} \subseteq \mathbb{R}^{d}$ for the parameter $\theta^*$ at each time step $t$. He chooses an optimistic estimate $\widetilde{\theta}_{t,i}=\operatorname{argmax}_{\theta \in \mathcal{C}_{t-1,i}}\left(\max _{x \in \mathcal{D}}\langle x, \theta\rangle\right)$ and then chooses action $x_{t,i}=\operatorname{argmax}_{x \in \mathcal{D}}\left\langle x, \widetilde{\theta}_{t,i}\right\rangle$, which maximizes the reward according to the estimate $\Tilde{\theta}_{t,i}$. We denote $\sum_{\tau} x_{\tau} x_{\tau}^{\top}$ and $\sum_{\tau} x_{\tau} y_{\tau}$ as $W$ and $U$ in our algorithm respectively. We use $W_{t,i}$ and $U_{t,i}$ to denote the sum calculated using available samples for agent $i$ at time step $t$. We construct the confidence set
 $\mathcal{C}_{t,i}$ using $\hat{\theta}_{t,i}$ and $\overline{V}_{t,i}$, which are constructed from $ W_{t, i} $ and $ U_{t, i} $:

\begin{equation}
\label{equ:OFUL_conf}
\mathcal{C}_{t,i}=\left\{\theta \in \mathbb{R}^{d} :\|\hat{\theta}_{t,i}-\theta\|_{\overline{V}_{t,i}} \leq \sqrt{2 \log \left(\frac{\operatorname{det}\left(\overline{V}_{t,i}\right)^{1 / 2} \operatorname{det}(\lambda I)^{-1 / 2}}{\delta}\right)}+\lambda^{1 / 2} \right\},
\end{equation}

where $\overline{V}_{t,i}=\lambda I+W_{t,i}$ and $\hat{\theta}_{t,i}=\left(\lambda I+W_{t,i}\right)^{-1} U_{t,i}$.

Our key observation is that the volume of the confidence ellipsoid depends on $\det(\overline{V}_t)$. If $\det(\overline{V}_{t,i})$ does not vary greatly, it will not influence the confidence guarantee even if the confidence ellipsoid is not updated. Therefore, we only need to synchronize when $\det(\overline{V}_{t,i})$ varies greatly. We refer to the timesteps between two synchronizations as an epoch. Since in the end, $\det(V_{last})$ is bounded, we can show that the number of epochs is limited. 

\section{Proof of Theorem 4}
\label{sec:proof4}
First of all, we state lemmas that will be used in our proof.

\begin{lemma}
\label{lem:thm2}
For any $\delta>0$, with probability $1-M\delta$, $\theta^{*}$ always lies in the constructed $\mathcal{C}_{t,i}$ for all $t$ and all $i$.
\end{lemma}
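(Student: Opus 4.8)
The plan is to reduce the statement to the single-agent self-normalized confidence ellipsoid of \cite{abbasi2011improved} (the ridge-regression confidence set, see also Chapter~20 of \cite{lattimore2018bandit}), applied once per agent, followed by a union bound over the $M$ agents. Fix an agent $i$. Over the run, the quantities $W_{t,i}$ and $U_{t,i}$ that define $\mathcal{C}_{t,i}$ in (\ref{equ:OFUL_conf}) are built from an incrementally growing multiset of (action, reward) pairs: within an epoch agent $i$ appends only the pairs produced by its own fresh pulls, and at each synchronization it appends the batch of all pulls made by all agents since the previous synchronization. First I would impose a linear order on this stream by ordering the pairs inside every appended batch according to the global time step at which they were generated (ties broken by agent index); the own-pull additions are already in time order. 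With this order, $\overline{V}_{t,i}=\lambda I+W_{t,i}$ and $\hat{\theta}_{t,i}=\overline{V}_{t,i}^{-1}U_{t,i}$ are exactly the ridge-regression quantities attached to the prefix of this stream that agent $i$ holds at time $t$.

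Next I would verify that this stream meets the hypotheses of the self-normalized bound. Each noise variable $\eta_{\tau,j}\in[-1,1]$ is conditionally mean zero by assumption, and being bounded it is $1$-sub-Gaussian. Predictability holds because every pull carried in a synchronized batch was chosen, at a strictly earlier global time, as a function of data that had already entered the stream before it (ordering within a batch is by generation time, and before synchronization an agent sees only its own earlier pulls). The current noise term remains conditionally mean zero given the stream-so-far: the noises are mutually independent, and anything appearing earlier in the stream was generated strictly earlier in global time and hence is a function of noises that precede — and are therefore independent of — the current one. Consequently the confidence-ellipsoid theorem of \cite{abbasi2011improved}, invoked with confidence parameter $\delta$, $\lambda=1$, and $\|\theta^*\|_2\le 1$, gives that with probability at least $1-\delta$, for all $t$ simultaneously,
$$\|\hat{\theta}_{t,i}-\theta^*\|_{\overline{V}_{t,i}}\le \sqrt{2\log\!\left(\frac{\det(\overline{V}_{t,i})^{1/2}\det(\lambda I)^{-1/2}}{\delta}\right)}+\lambda^{1/2},$$
that is, $\theta^*\in\mathcal{C}_{t,i}$ for all $t$, where $\mathcal{C}_{t,i}$ is precisely the set in (\ref{equ:OFUL_conf}).

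A union bound over $i\in[M]$ then yields that, with probability at least $1-M\delta$, the event $\{\theta^*\in\mathcal{C}_{t,i}\}$ holds for all $t$ and all $i$, which is the claim. The only delicate step, and the one I would write out in most detail, is this verification that agent $i$'s delayed, batched data stream is a legitimate filtered sequence for the self-normalized martingale argument: the worry is that a sample received at a synchronization was produced using information the agent did not yet possess, so one must confirm that re-ordering each batch by generation time restores adaptedness, and that conditioning on later-arriving-but-earlier-generated samples of other agents does not bias the current noise term. Everything after that verification is a direct invocation of the cited single-agent result.
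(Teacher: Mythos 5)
Your proposal is correct and follows essentially the same route as the paper, which simply invokes Theorem 2 of \cite{abbasi2011improved} for each agent and takes a union bound over the $M$ agents. The extra care you take in re-ordering each synchronized batch by generation time and checking adaptedness of the resulting stream (valid here because the noises are independent and every pull is a function of strictly earlier data) is exactly the verification the paper leaves implicit.
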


\begin{proof}
Using Theorem 2 in \cite{abbasi2011improved} and union bound over all agents, we can prove the lemma.
\end{proof}

For any positive definite matrix $ V_0 \in \mathbb{R}^{d \times d} $, any vector $ x \in \mathbb{R}^d $, define the norm of $ x $ w.r.t. $ V_0 $ as $ \left\|x\right\|_{V_0} := \sqrt{x^T V_0 x} $.

\begin{lemma}
\label{lem:lem11}      
(Lemma 11 in \cite{abbasi2011improved}) Let $\left\{X_{t}\right\}_{t=1}^{\infty}$ be a sequence in $\mathbb{R}^{d}$, $V$ is a $d \times d$ positive definite matrix and define $\overline{V}_{t}=V+\sum_{s=1}^{t} X_{s} X_{s}^{\top}$. Then we have that $$\log \left(\frac{\operatorname{det}\left(\overline{V}_{n}\right)}{\operatorname{det}(V)}\right) \leq \sum_{t=1}^{n}\left\|X_{t}\right\|^{2}_{\overline{V}_{t-1}^{-1}}.$$
Further, if $\left\|X_{t}\right\|_{2} \leq L$ for all $t$, then $$\sum_{t=1}^{n} \min \left\{1,\left\|X_{t}\right\|_{\overline{V}_{t-1}^{-1}}^{2}\right\} \leq 2\left(\log \operatorname{det}\left(\overline{V}_{n}\right)-\log \operatorname{det} V\right) \leq 2\left(d \log \left(\left(\operatorname{trace}(V)+n L^{2}\right) / d\right)-\log \operatorname{det} V\right).$$
\end{lemma}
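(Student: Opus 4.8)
The plan is to establish Lemma~\ref{lem:lem11}, the log-determinant (``elliptic potential'') inequality, from one telescoping identity together with two elementary scalar facts. First I would observe that $V\succ0$ forces every $\overline V_{t-1}=V+\sum_{s=1}^{t-1}X_sX_s^{\top}$ to be positive definite, hence invertible, so each $\|X_t\|_{\overline V_{t-1}^{-1}}$ is well defined; then the rank-one determinant identity (equivalently $\det(I+vv^{\top})=1+\|v\|_2^2$ after the congruence $v=\overline V_{t-1}^{-1/2}X_t$) gives
\begin{align*}
\det\overline V_t &=\det\!\big(\overline V_{t-1}+X_tX_t^{\top}\big)=\det\overline V_{t-1}\cdot\big(1+X_t^{\top}\overline V_{t-1}^{-1}X_t\big)\\
&=\det\overline V_{t-1}\cdot\big(1+\|X_t\|_{\overline V_{t-1}^{-1}}^2\big).
\end{align*}
Multiplying these from $t=1$ to $n$ yields the exact identity $\log\!\big(\det\overline V_n/\det V\big)=\sum_{t=1}^n\log\!\big(1+\|X_t\|_{\overline V_{t-1}^{-1}}^2\big)$.

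From this identity the first claimed bound is immediate: apply $\log(1+u)\le u$ termwise with $u=\|X_t\|_{\overline V_{t-1}^{-1}}^2\ge0$. For the truncated bound I would use the scalar inequality $\min\{1,u\}\le2\log(1+u)$ valid for all $u\ge0$ (on $[0,1]$ the left side is $u$ while $\tfrac{d}{du}2\log(1+u)=\tfrac{2}{1+u}\ge1$, both sides vanishing at $u=0$; for $u>1$ the left side is $1<2\log2\le2\log(1+u)$); summing it over $t$ and invoking the identity above gives $\sum_{t=1}^n\min\{1,\|X_t\|_{\overline V_{t-1}^{-1}}^2\}\le2(\log\det\overline V_n-\log\det V)$. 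Finally, under $\|X_t\|_2\le L$, AM--GM applied to the eigenvalues of the positive-definite matrix $\overline V_n$ gives $\det\overline V_n\le\big(\operatorname{trace}(\overline V_n)/d\big)^d$, and $\operatorname{trace}(\overline V_n)=\operatorname{trace}(V)+\sum_{t=1}^n\|X_t\|_2^2\le\operatorname{trace}(V)+nL^2$; substituting upgrades the previous bound to $2\big(d\log\big((\operatorname{trace}(V)+nL^2)/d\big)-\log\det V\big)$, which closes the chain.

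I do not expect a genuine obstacle here: the whole proof reduces to the rank-one determinant update and the two scalar inequalities $\log(1+u)\le u$ and $\min\{1,u\}\le2\log(1+u)$. The only place meriting a line of care is verifying that $\overline V_{t-1}$ is invertible for every $t$ (immediate from $V\succ0$ and PSD-ness of $\sum_{s<t}X_sX_s^{\top}$), which is what makes the self-normalized quantities $\|X_t\|_{\overline V_{t-1}^{-1}}$ and the determinant identity legitimate.
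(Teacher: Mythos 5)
Your proof is correct, and it is exactly the standard argument for this result: the paper itself does not prove the lemma but cites Lemma 11 of Abbasi-Yadkori et al.\ (2011), whose proof proceeds by the same steps you give — the rank-one determinant update and telescoping, the scalar bounds $\log(1+u)\le u$ and $\min\{1,u\}\le 2\log(1+u)$, and the determinant–trace (AM--GM) inequality for the final estimate. Nothing is missing; the invertibility remark you include is the only technical point worth stating, and you handle it correctly.
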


Using Lemma \ref{lem:thm2}, we can bound single step pseudo-regret $r_{t,i}$. 
\begin{lemma}
        \label{lem:reg_linear_ucb}
		With probability $1-M\delta$, single step pseudo-regret $r_{t,i}=\langle\theta^*, x^*-x_{t,i}\rangle$ is bounded by
		\begin{equation}
		    \label{equ:linear_OFUL_regret}
		    r_{t,i}\leq 2\left(\sqrt{2\log\left(\frac{\det(\bar{V}_{t,i})^{1/2}\det(\lambda I)^{-1/2}}{\delta}\right)}  +\lambda^{1/2}\right)\lVert x_{t,i}\rVert_{\hat{V}_{t,i}^{-1}}=O\left(\sqrt{d\log \frac{T}{\delta}}\right)\left\lVert x_{t,i}\right\rVert_{\bar{V}_{t,i}^{-1}}.
		\end{equation}
	\end{lemma}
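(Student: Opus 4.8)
The plan is to run the textbook optimism argument of LinUCB/OFUL, localized to the confidence set maintained by a single agent. First I would condition on the good event of Lemma~\ref{lem:thm2}: $\theta^*\in\mathcal{C}_{t,i}$ for all $t$ and all $i$, which holds with probability $1-M\delta$. Everything below is deterministic on this event.

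Step one is to use the optimistic selection rule. Since $(x_{t,i},\tilde{\theta}_{t,i})$ is chosen to maximize $\langle x,\theta\rangle$ over $\mathcal{D}_t\times\mathcal{C}_{t,i}$ and, on the good event, the pair $(x^*,\theta^*)$ is feasible, we have $\langle x_{t,i},\tilde{\theta}_{t,i}\rangle\ge\langle x^*,\theta^*\rangle$. Hence $r_{t,i}=\langle\theta^*,x^*\rangle-\langle\theta^*,x_{t,i}\rangle\le\langle\tilde{\theta}_{t,i}-\theta^*,x_{t,i}\rangle$. Step two is Cauchy--Schwarz in the $\overline{V}_{t,i}$-geometry: $\langle\tilde{\theta}_{t,i}-\theta^*,x_{t,i}\rangle\le\|\tilde{\theta}_{t,i}-\theta^*\|_{\overline{V}_{t,i}}\,\|x_{t,i}\|_{\overline{V}_{t,i}^{-1}}$. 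Step three bounds the first factor: both $\tilde{\theta}_{t,i}$ and $\theta^*$ lie in $\mathcal{C}_{t,i}$, which by construction is the $\overline{V}_{t,i}$-ball of radius $\beta_{t,i}:=\sqrt{2\log(\det(\overline{V}_{t,i})^{1/2}\det(\lambda I)^{-1/2}/\delta)}+\lambda^{1/2}$ centered at $\hat{\theta}_{t,i}$, so the triangle inequality gives $\|\tilde{\theta}_{t,i}-\theta^*\|_{\overline{V}_{t,i}}\le 2\beta_{t,i}$. Combining the three steps yields the first inequality in the statement (the matrix written $\hat{V}_{t,i}$ there is $\overline{V}_{t,i}$).

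For the $O(\sqrt{d\log(T/\delta)})$ simplification, I would bound $\beta_{t,i}$ via a trace/determinant estimate. Writing $\overline{V}_{t,i}=\lambda I+W_{t,i}$, the matrix $W_{t,i}$ is a sum of at most $MT$ terms $xx^\top$ with $\|x\|_2\le 1$ (even after synchronizations the total sample count in the system is at most $MT$), so $\operatorname{trace}(\overline{V}_{t,i})\le\lambda d+MT$, and AM--GM on the eigenvalues gives $\det(\overline{V}_{t,i})\le(\lambda+MT/d)^d$. With $\lambda=1$ this makes $\log(\det(\overline{V}_{t,i})^{1/2}\det(\lambda I)^{-1/2})=\tfrac12\log(\det\overline{V}_{t,i})\le\tfrac d2\log(1+MT/d)$, hence $\beta_{t,i}=O(\sqrt{d\log(MT/\delta)})=O(\sqrt{d\log(T/\delta)})$ under the standing assumption $T>M$.

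I do not anticipate a genuine obstacle; the only point needing a moment of care is the trace bound in the distributed setting, since a single agent's $W_{t,i}$ may aggregate rank-one updates contributed by all $M$ agents after synchronization, but the total count is still at most $MT$, so the determinant bound is unchanged and the rest is the standard optimism-plus-Cauchy--Schwarz chain.
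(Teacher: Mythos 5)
Your proposal is correct and follows essentially the same route as the paper's proof: condition on the event of Lemma~\ref{lem:thm2}, use optimism to get $r_{t,i}\le\langle\tilde{\theta}_{t,i}-\theta^*,x_{t,i}\rangle$, and bound this by $2\beta_{t,i}\lVert x_{t,i}\rVert_{\bar{V}_{t,i}^{-1}}$ since both $\tilde{\theta}_{t,i}$ and $\theta^*$ lie in the confidence ellipsoid (the paper splits $\tilde{\theta}_{t,i}-\theta^*$ through $\hat{\theta}_{t,i}$ before applying Cauchy--Schwarz, while you apply Cauchy--Schwarz once and then the triangle inequality, which is the same computation). Your explicit trace/determinant estimate $\det(\bar{V}_{t,i})\le(\lambda+MT/d)^d$ justifying the final $O\bigl(\sqrt{d\log(T/\delta)}\bigr)$ simplification is a detail the paper leaves implicit, and it is correct under the standing assumption $T>M$.
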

	\begin{proof}
		Assuming $\theta^*\in \mathcal{C}_{t,i}$,
		\begin{align*}
		r_{t,i}&=\langle\theta^*, x^*\rangle -\langle \theta^*,x_{t,i}\rangle\\
		&\leq \langle\tilde{\theta}_{t,i}, x_{t,i}\rangle -\langle \theta^*,x_{t,i}\rangle\\
		&= \langle \tilde{\theta}_{t,i}-\theta^*, x_{t,i}\rangle \\
		&=\langle \tilde{\theta}_{t,i}-\hat{\theta}_{t,i}, x_{t,i}\rangle + \langle \hat{\theta}_{t,i}-\theta^*, x_{t,i}\rangle\\
		&\leq \left\lVert \tilde{\theta}_{t,i}-\hat{\theta}_{t,i}\right\rVert_{\bar{V}_{t,i}} \lVert x_{t,i}\rVert_{\bar{V}_{t,i}^{-1}}
		+\left\lVert\hat{\theta}_{t,i}-\theta^*\right\rVert_{\bar{V}_{t,i}} \lVert x_{t,i}\rVert_{\bar{V}_{t,i}^{-1}}\\
		&\leq 2\left(\sqrt{2\log\left(\frac{\det(\bar{V}_{t,i})^{1/2}\det(\lambda I)^{-1/2}}{\delta}\right)}  +\lambda^{1/2}\right)\left\lVert x_{t,i}\right\rVert_{\bar{V}_{t,i}^{-1}}\\
		&=O\left(\sqrt{d\log \frac{T}{\delta}}\right)\left\lVert x_{t,i}\right\rVert_{\bar{V}_{t,i}^{-1}}.
		\end{align*}
	\end{proof}
	
	Now we are ready to prove Theorem \ref{thm:dislinucb}.
	\begin{theorem}
	\label{thm:dislinucb}
		DisLinUCB protocol achieves a regret of $O\left(d\sqrt{MT}\log^{2}(T)\right)$ with $O\left(M^{1.5}d^3\right)$ communication cost.
	\end{theorem}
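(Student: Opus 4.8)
The plan is to adapt the ``rarely-switching OFUL'' argument of Abbasi-Yadkori \etal{} to the $M$-agent setting, splitting both the regret and the communication analyses along a good/bad dichotomy over \emph{epochs} (the wall-clock intervals between consecutive synchronizations). Let $V_0 = \lambda I \prec V_1 \prec \cdots \prec V_P$ denote the successive values of $V_{last}$, so $V_p$ is the regularized design matrix built from \emph{all} $M$ agents' pulls up to the $p$-th synchronization. Since every action has $\|x\|_2 \le 1$, $\det V_P \le (\lambda + MT/d)^d$, hence $\log(\det V_P/\det V_0) \le d\log(1 + MT/(\lambda d)) =: R_0 = O(d\log MT)$. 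Writing $\alpha_p = \log(\det V_p/\det V_{p-1}) \ge 0$ we have $\sum_p \alpha_p \le R_0$; call epoch $p$ \emph{good} if $\det V_p \le 2\det V_{p-1}$ (equivalently $\alpha_p \le \log 2$) and \emph{bad} otherwise, so there are at most $R_0/\log 2 = O(d\log MT)$ bad epochs. Throughout I would condition on the event of Lemma~\ref{lem:thm2} with $\delta = 1/(MT)$, which holds with probability $1 - 1/T$ and contributes only a negligible additive $O(1)$ to expected regret.

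First I would bound the regret accrued inside good epochs. Within a good epoch the ``ideal'' global matrix $\bar{V}^{\mathrm{all}}_{t}$ (regularizer plus all pulls of all agents up to time $t$) satisfies $\det \bar{V}^{\mathrm{all}}_{t} \le \det V_p \le 2\det V_{p-1} \le 2\det \bar V_{t,i}$ for every agent $i$, while $\bar V_{t,i} \preceq \bar{V}^{\mathrm{all}}_{t}$; combined with the elementary inequality $x^\top A^{-1} x \le (\det B/\det A)\, x^\top B^{-1} x$ valid for $0 \prec A \preceq B$, this yields $\|x_{t,i}\|_{\bar V_{t,i}^{-1}}^2 \le 2\,\|x_{t,i}\|_{(\bar{V}^{\mathrm{all}}_{t})^{-1}}^2$. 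Plugging this into Lemma~\ref{lem:reg_linear_ucb}, summing over the $MT$ pairs $(t,i)$, and applying Cauchy--Schwarz together with the elliptical-potential bound of Lemma~\ref{lem:lem11} applied to the sequence of all pulls ordered by $(t,i)$, the total good-epoch regret is $O\!\big(\sqrt{d\log(MT)}\big)\cdot\sqrt{MT\cdot O(d\log MT)} = O(d\sqrt{MT}\log T)$, i.e.\ it matches the centralized bound up to constants.

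Next I would bound the regret of a single bad epoch, say starting at $t_0$ with wall-clock length $n$. By the synchronization condition every agent $i$ other than the one that triggered the round satisfies $n\log(\det \bar V_{t_0+n,i}/\det V_{last}) \le D$; bounding $r_{t,i}$ by Lemma~\ref{lem:reg_linear_ucb}, Cauchy--Schwarz within each agent's block plus Lemma~\ref{lem:lem11} on that agent's own pulls gives a per-agent contribution $O\!\big(\sqrt{d\log(MT)}\big)\sqrt{n_i\log(\det \bar V_{t_0+n,i}/\det V_{last})} \le O\!\big(\sqrt{d\log(MT)}\big)\sqrt{D}$, so the non-triggering agents contribute $O\!\big(\sqrt{d\log(MT)}\big)M\sqrt{D}$ for this epoch; the single triggering agent is handled separately, its contributions over all $O(d\log MT)$ bad epochs summing by one more Cauchy--Schwarz to $O\!\big(\sqrt{d\log(MT)}\big)\sqrt{T R_0} = O(d\sqrt{T}\log T)$, which is lower order. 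Multiplying the non-triggering term by the $O(d\log MT)$ bad epochs gives $O\!\big(Md^{1.5}\log^{1.5}(MT)\big)\sqrt{D}$, and choosing $D = T\log(MT)/(dM)$ turns this into $O(d\sqrt{MT}\log^2 T)$, which dominates and matches the good-epoch term; adding everything proves the regret bound.

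Finally, for the communication cost, each synchronization transmits $W_{new,i}\in\mathbb{R}^{d\times d}$ and $U_{new,i}\in\mathbb{R}^{d}$ from each agent and the aggregates back, costing $O(Md^2)$, so it suffices to bound the number of epochs $P$. At the end of epoch $p$ the triggering rule, together with $\det \bar V_{t,i} \le \det V_p$, gives $D < n_p \alpha_p$; since $\sum_p n_p = T$ and $\sum_p \alpha_p \le R_0$, Cauchy--Schwarz yields $P\sqrt{D} \le \sum_p \sqrt{n_p\alpha_p} \le \sqrt{(\sum_p n_p)(\sum_p \alpha_p)} \le \sqrt{T R_0}$, hence $P \le \sqrt{T R_0/D} = O(\sqrt{d^2 M}) = O(d\sqrt{M})$ for the chosen $D$, and the total communication is $O(d\sqrt{M})\cdot O(Md^2) = O(M^{1.5}d^3)$. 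I expect the main obstacle to be the good-epoch regret step: making precise that stale, per-agent confidence ellipsoids lose only a constant factor relative to the fully synchronized one is exactly where the determinant-doubling definition of ``good'' and the matrix-ratio inequality above are essential, and it is easy to lose a spurious $\sqrt{M}$ or $\log$ factor there; the bad-epoch and communication bounds are, by comparison, a single clean application of Cauchy--Schwarz plus the elliptical-potential lemma.
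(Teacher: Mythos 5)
Your proposal is correct and follows essentially the same route as the paper's proof: the same epoch decomposition via $V_{last}$, the same good/bad dichotomy from determinant doubling with at most $O(d\log MT)$ bad epochs, the same comparison of the stale local matrices to the fully synchronized (round-robin) matrix via the determinant-ratio inequality in good epochs, the same per-agent $\sqrt{D}$ bound in bad epochs with $D=T\log(MT)/(dM)$, and the same $O(Md^2)$-per-synchronization accounting. The only (welcome) refinements are minor: you bound the number of epochs by a single Cauchy--Schwarz on $D<n_p\alpha_p$ rather than the paper's long/short-epoch splitting, and you treat the triggering agent in bad epochs explicitly, a point the paper glosses over.
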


\begin{proof}
\textbf{Regret: } Set $\delta = 1/(M^2T)$, the expected regret caused by the failure of Eq. (\ref{equ:linear_OFUL_regret}) is at most $MT\cdot 1/(MT)=O(1)$, thus we mainly consider the case where Eq. (\ref{equ:linear_OFUL_regret}) holds.

In our protocol, there will be a number of epochs divided by communication rounds. We denote $V_{last}$ in epoch $p$ as $V_{p}$. Suppose that there are $P$ epochs, then $V_{P}$ will be the matrix with all samples included.

Observe that $\det V_0=\det (\lambda I)=\lambda^d$. $\det(V_{P})\leq \left(\frac{tr(V_{P})}{d}\right)^d\leq \left(\lambda+MT/d\right)^d$. Therefore
		$$\log \frac{\det(V_{P})}{\det (V_0)}\leq d\log \left(1+\frac{MT}{\lambda d}\right).$$
		Let $R:= \lceil d\log \left(1+\frac{MT}{\lambda d}\right) \rceil$. It follows that for all but $R$ epochs, we have
\begin{equation}
    \label{equ:good_epoch}
    1\leq \frac{\det V_{j}}{\det V_{j-1}}\leq 2.
\end{equation}
We call those satisfying Eq. \ref{equ:good_epoch} good epochs. In these  epochs, we can use the argument for theorem 4 in \cite{abbasi2011improved}. First, we imagine the $MT$ pulls are all made by one agent in a round-robin fashion (i.e. he takes $x_{1,1}$, $x_{1,2}$,..., $x_{1,M}$, $x_{2,1}$,..., $x_{T,M}$).  We use $\tilde{V}_{t,i}= \lambda I+\sum_{\{(p,q):(p<t) \lor (p=t\land q<i)\}} x_{p,q} x_{p,q}^{T}$ to denote the $\overline{V}_{t,i}$ this imaginary agent calculates when he gets to $x_{t,i}$. If $x_{t,i}$ is in one of those good epochs(say the $j$-th epoch), then we can see that
		$$1\leq \frac{\det \tilde{V}_{t,i}}{\det \bar{V}_{t,i}}\leq \frac{\det V_{j}}{\det V_{j-1}}\leq 2.$$
		Therefore
		\begin{align*}
		r_{t,i}&\leq O\left(\sqrt{d\log \frac{T}{\delta}}\right)\sqrt{x_{t,i}^T\bar{V}_{t,i}^{-1}x_{t,i}}\\
		&\leq O\left(\sqrt{d\log \frac{T}{\delta}}\right)\sqrt{x_{t,i}^T\tilde{V}_{t,i}^{-1}x_{t,i}\cdot \frac{\det \tilde{V}_{t,i} }{\det \bar{V}_{t,i}}}\\
		&\leq O\left(\sqrt{d\log \frac{T}{\delta}}\right)\sqrt{2x_{t,i}^T\tilde{V}_{t,i}^{-1}x_{t,i}}.
		\end{align*}
We can then use the argument for the single agent regret bound and prove regret in these good epochs.

We denote regret in all good epochs as $REG_{good}$. Suppose $\mathcal{B}_p$ means the set of $(t,i)$ pairs that belong to epoch $p$, and $P_{good}$ means the set of good epochs, using lemma \ref{lem:lem11}, we have 

\begin{align*}
    REG_{good}&=\sum_{t}\sum_{i}r_{t,i} \\
    &\leq \sqrt{MT \sum_{p\in P_{good}}\sum_{(t,i)\in \mathcal{B}_p} r_{t,i}^{2}}\\
    &\leq O\left(\sqrt{dMT\log (\frac{T}{\delta})\sum_{p\in P_{good}}\sum_{(t,i) \in \mathcal{B}_p} \min \left(\left\|x_{t,i}\right\|_{\tilde{V}_{t,i}^{-1}}^{2}, 1\right)} \right)\\
    &\leq O\left( \sqrt{dMT\log(\frac{T}{\delta}) \sum_{p\in P_{good}} \log \left(\frac{\operatorname{det}\left(V_{p}\right)}{\operatorname{det}\left(V_{p-1}\right)}\right)}\right)\\
    &\leq O\left( \sqrt{dMT\log(\frac{T}{\delta})  \log \left(\frac{\operatorname{det}\left(V_{P}\right)}{\operatorname{det}\left(V_{0}\right)}\right)}\right) \\
    & \leq O\left(d\sqrt{MT}\log(MT)\right).
\end{align*}

Now we focus on epochs that are not good. For each bad epoch, suppose at the start of the epoch we have $V_{last}$. Suppose that the epoch starts from time step $t_0$, and the length of the epoch is $n$. Then agent $i$ proceeds as $\overline{V}_{t_0,i},...,\overline{V}_{t_0+n,i}$. Our argument above tells us that regret in this epoch satisfies $$REG\leq2\left(\sqrt{d\log T/\delta}\right)\sum_{i=1}^{M}\sum_{t=t_0}^{n}\min\left(\left\lVert x_{t,i}\right\rVert_{\bar{V}_{t,i}^{-1}},1\right)
\leq O\left(\sqrt{d\log T/\delta}\right)\cdot \sum_{i=1}^M\sqrt{n\log \frac{\det V_{t_0+n,i}}{\det V_{last}}}.$$
Now, for all but 1 agent, $n\log \frac{\det V_{t_0+n,i}}{\det V_{last}}<D$. Therefore we can show that 
 $$REG(n)\leq O\left(\sqrt{d\log T/\delta}\right)\cdot M\sqrt{D}.$$
Since $\det(V_{P})\leq \left(\lambda+MT/d\right)^d$, we know that the number of such epochs are rare. (Less than $R=O(d\log MT)$). Therefore the second part of the regret is
$$REG_{bad}\leq O\left(Md^{1.5}\log ^{1.5}MT\right)\cdot D^{1/2}.$$
If we choose 
$D=\left(\frac{T\log MT}{dM}\right),$
then
$REG(T)=O\left(d\sqrt{MT}\log^2(MT)\right).$ Since $T>M$, we have
$$REG(T)=O\left(d\sqrt{MT}\log^2(T)\right).$$

\textbf{Communication: }Let 
$\alpha=\left(\frac{D T}{R}\right)^{0.5}.$ 
Apparently there could be at most $\lceil T/\alpha \rceil$ such epochs that contains more than $\alpha$ time steps. If the $j$-th epoch contains less than $\alpha$ time steps, $\log \left(\frac{\det V_{j+1}}{\det V_j}\right)>\frac{D}{\alpha}$. Since
$$\sum_{j=0}^{P-1}\log \left(\frac{\det V_{j+1}}{\det V_j}\right)=\log\frac{\det V_P}{\det V_0}\le R,$$
There could be at most $\lceil\frac{R}{D/\alpha}\rceil=\lceil\frac{R\alpha}{D} \rceil$ epochs with less than $\alpha$ time steps. Therefore, the total number of epochs is at most
$$\lceil\frac{T}{\alpha}\rceil+\lceil\frac{R\alpha}{D}\rceil=O\left(\sqrt{\frac{TR}{D}}\right).$$
With our choice of $D$, the right-hand-side is $O\left(M^{0.5}d\right)$. Communication is only required at the end of each epoch, when each agent sends $O(d^2)$ numbers to the server, and then downloads $O(d^2)$ numbers. Therefore, in each epoch, communication cost is $O(Md^2)$. Hence, total communication cost is $O\left(M^{1.5}d^{3}\right).$
\end{proof}
\paragraph{Finite precision:} we now consider the number of bits transmitted in the DisLinUCB protocol. To that end, we make the following minor modification to DisLinUCB. First, when reward $y_{t,i}$ is observed, we replace it with a random integer in $\{\pm 1\}$ with expectation $y_{t,i}$. Second, after line $8$, after $x_{t,i}$ is played, we round each entry of $x_{t,i}$ with precision $\epsilon$, and use the rounded vector in the calculation in line $9$. In this case, each entry of $W_{new,i}$ and $U_{new,i}$ is a multiple of $\epsilon^2$. Therefore, transmitting them requires $O(d^2\log \epsilon^{-1})$ bits. The total communication complexity is then $O\left(M^{1.5}d^3\log \epsilon^{-1}\right)$ bits.

We now discuss how to choose $\epsilon$ such that regret is not effected. Define
\begin{align*}
    \overline{REG}\left(\mathcal{H}\right)&:=\sum_{i=1}^M\sum_{t=1}^T\max_{x\in\overline{\mathcal{D}}_t}\langle x-x_{t,i},\theta^*\rangle,\\
    REG\left(\mathcal{H}\right)&:=\sum_{i=1}^M\sum_{t=1}^T\max_{x\in\mathcal{D}_t}\langle x-x_{t,i},\theta^*\rangle.
\end{align*}
Here $\mathcal{H}$ is a shorthand for a history $(x_{1,1},y_{1,1},\cdots,x_{T,M},y_{T,M})$. $\overline{\mathcal{D}}$ refers to set of rounded actions. For every $x\in\mathcal{D}$, there exists $\bar{x}\in\overline{\mathcal{D}}$ such that $\Vert x-\bar{x}\Vert\le \sqrt{d}\epsilon$. Therefore
$$\left|\overline{REG}\left(\mathcal{H}\right)-REG\left(\mathcal{H}\right)\right|\le MT\sqrt{d}\epsilon.$$
On the other hand, let $\mathcal{H}$ be the (random) history of the DisLinUCB with rounding on action sets $\mathcal{D}_t$, while $\overline{\mathcal{H}}$ is the (random) history of the DisLinUCB with rounding running on action sets $\overline{\mathcal{D}}_t$. Then at each time step, the mapping from past history to the next action is the same. Therefore $KL(\mathcal{H},\overline{\mathcal{H}})=O(MTd\epsilon^2)$. It follows that
$$\left|\mathbb{E}\left[\overline{REG}\left(\mathcal{H}\right)\right]-\mathbb{E}\left[\overline{REG}\left(\overline{\mathcal{H}}\right)\right]\right|\le O\left(\sqrt{M^3T^3d}\epsilon\right).$$
When the action set is $\overline{\mathcal{D}}_t$, no rounding is needed, so the regret analysis for the DisLinUCB protocol without rounding directly follows. Therefore, $$\mathbb{E}\left[\overline{REG}\left(\overline{\mathcal{H}}\right)\right]=O\left(d\sqrt{MT}\log^2T\right).$$
By choosing $\epsilon=(MT)^{-1}$, we can guarantee $\mathbb{E}\left[REG\left(\mathcal{H}\right)\right]=O\left(d\sqrt{MT}\log^2T\right)$ for any action set. In this case, the total number of communicated bits is $O\left(M^{1.5}d^3\log T\right)$.

\section{DEMAB and DELB in P2P Communication Networks}
\label{sec:comparep2p}
\SetAlgorithmName{Procedure}

In this section, we will briefly discuss how to implement our protocols (i.e. DEMAB and DELB) in the P2P communication network considered in~\cite{korda2016distributed}. We show that our protocols can be adopted to P2P networks after little modification. The communication cost will remain the same, while regret bounds would only increase marginally. 

In P2P communication networks, an agent can receive information from at most one other agent at a time, which leads to an information delay for each agent. In order to cope with such delay, we need to extend the length of each communication stage from 1 time step to $ M $ time steps, so that agents can complete the communication in turn. Since there are at most $ O(\log (MK)) $ communication stages in DEMAB and $ O(\log T) $ communication stages in DELB, the extension of communication stages incurs at most $O( M^2 \log (MK))$ regret in DEMAB and $O(M^2 \log T)$ regret in DELB for $M$ agents. When the time horizon $ T $ is large (i.e. $ T > M^3\log M $), the additional term is dominated by $ O\left(\sqrt{MKT \log T}\right) $ and $ O\left(d\sqrt{MT\log T}\right)$. Another issue for P2P networks is that there is no longer a physical server in the networks. To solve this problem, we can designate agent 1 as the server: agent 1 will execute both the codes for the server and the codes for an agent.

Specifically, by saying ``extending the length of the communication stage'', we mean that we can use Procedure~\ref{alg:server2agent_P2P} and~\ref{alg:agent2server_P2P} to realize communication subroutines used in our protocols in P2P networks: sending message to the server and receiving messages from the server.

\begin{algorithm}[th]
    \label{alg:server2agent_P2P}
    \caption{Server2Agent: Agent 1 sends message $m_i$ to agent $i$ in a P2P network.}
    \DontPrintSemicolon
    For the next $M-1$ time steps: \\
    \tcc{For agent 1:}
    \quad  Send $m_i$ to agent $i+1$ at the $i$-th step\\
    \quad  Pull an arbitrary arm at each time step\\
    \BlankLine
    \tcc{For agent $i (i > 1)$:}
    \quad  Receive $m_i$ from agent $1$ at the $(i-1)$-st step\\ 
    \quad  Pull an arbitrary arm at each time step\\  
\end{algorithm}

\begin{algorithm}[th]
    \label{alg:agent2server_P2P}
    \caption{Agent2Server: Agents $i (i>1)$ sends message $m_i$ to agent 1 in a P2P network.}
    \DontPrintSemicolon
    For the next $M-1$ time steps: \\
    \tcc{For agent 1:}
    \quad  Receive $m_{i+1}$ from agent $i+1$ at the $i$-th step\\
    \quad  Pulls an arbitrary arm at each time step\\
    \BlankLine
    \tcc{For agent $i (i > 1)$:}
    \quad  Send $m_i$ to agent $1$ at the $(i-1)$-st step\\ 
    \quad  Pull an arbitrary arm at each time step\\  
\end{algorithm}

\subsection{DEMAB in P2P Networks}

For distributed DEMAB in a P2P network, we can replace the communication stage in DEMAB (i.e. line 8, 12, 15 of Protocol \ref{alg:DEMAB}) by Procedure Server2Agent and Agent2Server. In this way, it costs $M$ time steps instead of a single time step to collect, aggregate, and boardcast information. We have the following theorem showing the efficacy of distributing DEMAB in a P2P network.


\setcounter{theorem}{6}
\begin{theorem}
\label{thm:DEMAB_P2P}

The DEMAB protocol in P2P networks incurs regret $ O\left(\sqrt{MKT \log T} + M^2 \log (MK)\right) $, with expected communication cost $ O\left(M \log (MK)\right)$.
When $ T > M^3 \log M $, the regret bound of this protocol is near-optimal $ O\left(\sqrt{MKT \log T}\right) $.
\end{theorem}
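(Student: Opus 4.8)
\emph{Proof sketch (proposal).} The plan is to derive Theorem~\ref{thm:DEMAB_P2P} as a black‑box consequence of Theorem~\ref{thm:DEMAB}, by observing that the P2P implementation runs \emph{exactly the same logical protocol} as DEMAB --- the same active sets, the same eliminations, the same shared randomness --- and differs only in that (i) agent~$1$ is designated to play the role of the server, and (ii) every communication round of DEMAB (the exchanges in lines $8$, $12$, $15$ of Protocol~\ref{alg:DEMAB}, together with the transfers inside \texttt{Reallocate} and \texttt{Centralize}) is replaced by a contiguous block of ``dummy'' time steps, realized by Procedures \texttt{Server2Agent} and \texttt{Agent2Server}, during which every agent pulls an arbitrary arm. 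The first thing I would make precise is that this substitution does not change the joint distribution of the \emph{genuine} (non‑dummy) transcript: the internal state of the protocol after a communication round depends only on the messages exchanged, not on the arms pulled during the dummy block, so the genuine steps see exactly the arms, rewards and eliminations of a run of the original DEMAB, merely over a possibly smaller effective horizon $T'\le T$.

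Given this invariance, the communication cost is immediate: routing a message through agent~$1$ costs one P2P hop per transmitted number, and realizing a server broadcast to all agents in the serialized way costs $\Theta(M)$ hops --- precisely the cost a broadcast already has in the server model --- so the number of transmitted integers and reals in the P2P run equals that of the server‑model run up to a constant factor. By Theorem~\ref{thm:DEMAB} the expected communication cost is therefore $O(M\log(MK))$.

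For the regret I would split $REG(T)$ into the regret on genuine steps and the regret on dummy steps. On the genuine steps the transcript is distributed as a run of DEMAB over a horizon $T'\le T$, so Theorem~\ref{thm:DEMAB} bounds this contribution by $O(\sqrt{MKT\log T})$ (running for a shorter horizon only decreases regret). For the dummy steps: by Fact~1.4 stage~2 has at most $L=O(\log(MK))$ phases and hence $O(\log(MK))$ fixed‑size communication rounds, plus the reallocation and centralization transfers; each fixed‑size round is padded by $O(M)$ dummy steps, during each of which all $M$ agents each incur regret at most $1$, contributing $O(M^2)$ per round and $O(M^2\log(MK))$ in total. The reallocation transfers move $O(N_{l_1})$ numbers over the whole run, where $l_1$ is the first reallocation phase; invoking the estimate $N_{l_1}=O(M\log(MK))$ in expectation that was already established inside the proof of Theorem~\ref{thm:DEMAB} via Lemma~\ref{lem:randominit}, the dummy steps spawned by \texttt{Reallocate} and \texttt{Centralize} number $O(M\log(MK))$ in expectation and hence add $O(M^2\log(MK))$ expected regret. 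Summing the two parts gives expected regret $O(\sqrt{MKT\log T}+M^2\log(MK))$. The final claim is then a one‑line calculation: when $T>M^3\log M$ one has $M^2\log(MK)=O(\sqrt{MKT\log T})$ (using $K\ge 1$ and $\log(MK)=O(\sqrt K\log M)$), so the additive term is absorbed and the bound becomes the near‑optimal $O(\sqrt{MKT\log T})$.

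The main obstacle I anticipate is not any single inequality but the bookkeeping around the randomized reallocation: one must make sure the \emph{amount} of data \texttt{Reallocate} ships --- and therefore the number of dummy steps it forces, and therefore the regret those dummy steps cost --- is controlled by the same high‑probability/expectation estimate $N_{l_1}=O(M\log(MK))$ that controls the communication cost, rather than by the crude worst case $N_{l_1}\le K$; and, relatedly, that the total number of dummy steps stays $o(T)$ (in the regime $T>M^3\log M$ it is at most $O(M^2\log(MK))\ll M^3\log M$), so that the ``effective horizon $T'$'' argument for the genuine part is legitimate. Everything else --- correctness of Procedures \texttt{Server2Agent}/\texttt{Agent2Server}, the sufficiency of an $O(M)$ pad for a fixed‑size round, and letting agent~$1$ execute the server code in addition to its own --- is routine.
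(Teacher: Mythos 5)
Your proposal is correct and follows essentially the same route as the paper: the P2P run is the same logical DEMAB protocol with each of the $O(\log(MK))$ stage-2 communication rounds padded by $O(M)$ dummy steps, contributing $O(M^2)$ regret per round on top of the unchanged $O(\sqrt{MKT\log T})$ bound and leaving the $O(M\log(MK))$ communication cost intact, with the additive term absorbed once $T>M^3\log M$. Your extra bookkeeping for \texttt{Reallocate}/\texttt{Centralize} via the expectation bound on $N_{l_1}$ is a harmless strengthening; the paper sidesteps it by letting each serialized hop carry the whole message for that agent, so each phase still needs only $O(M)$ padded steps.
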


\begin{proof}
\textbf{Regret:} We compare DEMAB protocol in P2P net with the original one (i.e. Protocol \ref{alg:DEMAB}). The burn-in stage (i.e. Stage 1) of both protocols are the same. For distributed elimination stage (i.e. Stage 2), the length of each phase in the new protocol is no shorter than that in Protocol \ref{alg:DEMAB}. Therefore, the number of phases after phase $l_0 + 1$ (included) in new protocol is no more than that in Protocol \ref{alg:DEMAB}, which is $ O(\log (MK)) $. In each phase starting from phase $l_0 + 1$, new protocol needs $ O(M) $ additional steps to complete the communication in this phase, incurring $ O(M) $ additional regret per agent. Therefore, this protocol incurs $ O(M^2) $ additional regret per phase starting from phase $l_0 + 1$. The total regret of this protocol is thereby $ O\left(\sqrt{MKT \log T} + M^2 \log (MK)\right) $.

\textbf{Communication:} We still consider only distributed elimination stage. There are three communication stages per phase in Protocol \ref{alg:DEMAB}: Line 8, 12, and 15. 

In line 8 of DEMAB protocol, the total communication is $O(M)$ since $n_{max}$ is boardcast from the server, and each agent sends $ n_l^{(i)} $ to the server. We can observe that the communication cost at corresponding place is also $O(M)$ by replacing the boardcast with Server2Agent. In line 12, the communication cost of both protocols is still the same due to the same reason. In line 15, the new protocol calls Agent2Server which runs for $M$ steps, while the agents report the rewards in the original protocol in a single step. The communication  cost is $O(M)$ for both protocols.

In summary, the communication cost of the new protocol is the same as that of Protocol \ref{alg:DEMAB}, which is $ O\left(M \log (MK)\right) $.

\end{proof}

\subsection{DELB in P2P Networks}

Very similar to the P2P version of DEMAB, we can also distribute DELB to P2P networks by replacing the communication stage of DELB by Server2Agent and Agent2Server. We have the following theorem for the P2P version of DELB.

\begin{theorem}
\label{thm:DELB_P2P}

The DELB protocol in a P2P network has regret $ O\left(d \sqrt{MT 
\log T} + M^2 \log T\right) $ with expected communication cost $ O\left((Md + d \log \log d) \log T\right) $.
When $ T >M^3 \log M $, the regret of DELB is a near-optimal regret $ O\left(d\sqrt{MT \log T}\right) $.
\end{theorem}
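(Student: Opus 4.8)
The plan is to mirror the proof of Theorem~\ref{thm:DEMAB_P2P}, viewing the P2P version of DELB as the centralized DELB protocol (Protocol~\ref{alg:elms_linear}) with every communication step ``stretched'' over $M-1$ time steps. Concretely, I would designate agent~$1$ as the server (running both the server code and an agent's code) and replace each of the communication actions inside a phase of DELB---sending the pull assignment to the agents, collecting the reported per-arm averages, and broadcasting $\hat{\theta}$---by Procedures Server2Agent and Agent2Server, each of which runs for $M-1$ time steps during which every agent pulls an arbitrary arm. Because the contents of the messages are identical to those in Protocol~\ref{alg:elms_linear} (an index in $[\xi]$ together with a pull count for each core-set arm, one real number per core-set arm, and the vector $\hat{\theta}$), the communication cost of a phase remains $O(M+d\log\log d)$ for the assignment and reporting and $O(Md)$ for $\hat{\theta}$. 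Inserting the communication segments only lengthens phases, so each arm in $\operatorname{Supp}(\pi_l)$ is still pulled exactly $m_l(x)$ times for the regression and the number of phases is still $O(\log T)$ (the bound $C_14^Ld^2\log MT\le MT$ on the last phase $L$ is unaffected); hence the total communication cost is unchanged at $O((Md+d\log\log d)\log T)$.

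For the regret, I would split the $MT$ pulls into the ``genuine'' DELB pulls and the pulls made during the inserted communication segments. The genuine pulls are governed by exactly the same high-probability events as in the proof of Theorem~\ref{thm:linearelim}: conditioning on the events of Lemma~\ref{lem:linucbcon} and the subsequent elimination lemmas, every genuine pull in phase $l$ has suboptimality gap $O(2^{-l})$, and these sum to $O(d\sqrt{MT\log T})$. For the communication segments, each of the $O(\log T)$ phases now contains $O(M)$ extra time steps per agent, each contributing at most $O(1)$ to the regret; this is $O(M)$ extra regret per agent per phase, hence $O(M^2)$ per phase after summing over the $M$ agents, hence $O(M^2\log T)$ in total. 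Adding the two contributions gives expected regret $O(d\sqrt{MT\log T}+M^2\log T)$.

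Finally, for the ``near-optimal'' claim, I would verify---exactly as in Theorem~\ref{thm:DEMAB_P2P}---that under the assumption $T>M^3\log M$ the additive term satisfies $M^2\log T=O(d\sqrt{MT\log T})$, so that the regret collapses to $O(d\sqrt{MT\log T})$. I do not expect any genuine obstacle: the only points needing care are (i) checking that stretching the communication steps does not increase the number of DELB phases, so that both the communication bound and the genuine-pull regret bound transfer verbatim from Theorem~\ref{thm:linearelim}, and (ii) the elementary comparison in the last step that makes the $M^2\log T$ term negligible in the stated regime.
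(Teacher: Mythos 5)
Your proposal is correct and follows essentially the same route as the paper: the paper likewise proves this by reusing the Theorem~\ref{thm:DEMAB_P2P} argument, noting that the $O(\log T)$ stretched communication stages add $O(M^2\log T)$ regret while leaving the per-phase message contents (hence the $O((Md+d\log\log d)\log T)$ communication cost) unchanged, and that $T>M^3\log M$ makes the additive term negligible. No gaps.
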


\begin{proof}
The proof is very similar to the proof of theorem \ref{thm:DEMAB_P2P}. Note that in the P2P version of DELB, there are $ O(\log T) $ communication stages in total, which incurs $ O(M^2 \log T) $ additional regret. The communication cost of the new protocol is the same as Protocol \ref{alg:elms_linear} for the same reason mentioned in the proof of theorem \ref{thm:DEMAB_P2P}.
\end{proof}

\end{document}